\documentclass[11pt]{article}

\usepackage[letterpaper,margin=1in]{geometry}
\usepackage{times}
\usepackage{microtype}
\usepackage{natbib}
\usepackage[hypertexnames=false]{hyperref}
\usepackage{url}

\usepackage{amsmath,amsfonts,bm}

\def\eqref#1{equation~\ref{#1}}
\def\Eqref#1{Equation~\ref{#1}}

\def\1{\bm{1}}

\DeclareMathAlphabet{\mathsfit}{\encodingdefault}{\sfdefault}{m}{sl}
\SetMathAlphabet{\mathsfit}{bold}{\encodingdefault}{\sfdefault}{bx}{n}

\newcommand{\E}{\mathbb{E}}

\newcommand{\R}{\mathbb{R}}

\DeclareMathOperator*{\argmin}{arg\,min}

\usepackage{amsthm,amssymb}
\usepackage{graphicx}
\usepackage{float}
\usepackage{xcolor}
\usepackage{tikz}
\usetikzlibrary{arrows.meta,calc,positioning,patterns}

\newtheorem{theorem}{Theorem}[section]
\newtheorem{proposition}[theorem]{Proposition}
\newtheorem{lemma}[theorem]{Lemma}
\newtheorem{corollary}[theorem]{Corollary}
\theoremstyle{definition}
\newtheorem{definition}[theorem]{Definition}
\newtheorem{assumption}[theorem]{Assumption}
\newtheorem{example}[theorem]{Example}
\theoremstyle{remark}
\newtheorem{remark}[theorem]{Remark}

\DeclareMathOperator{\supp}{supp}

\DeclareMathOperator{\relint}{relint}
\DeclareMathOperator{\spanop}{span}

\newcommand{\Sset}{S}
\newcommand{\zstar}{z^*}
\newcommand{\Wstar}{W^*}
\newcommand{\wstar}{w^*}
\newcommand{\supportfun}{\xi_{\Sset}}
\newcommand{\cbar}{\bar c}
\newcommand{\RiskSPO}{R_{\mathrm{SPO}}}
\newcommand{\RiskSPOP}{R_{\mathrm{SPO+}}}
\newcommand{\lossSPO}{\ell_{\mathrm{SPO}}}
\newcommand{\lossSPOP}{\ell_{\mathrm{SPO+}}}
\newcommand{\crossgap}[1]{G_{#1}}
\newcommand{\Leb}{\lambda}
\newcommand{\TieLocus}{\mathcal{T}}
\newcommand{\SymDiff}{\mathbin{\triangle}}

\definecolor{PaperBlue}{HTML}{0072B2}
\definecolor{PaperOrange}{HTML}{D55E00}
\definecolor{PaperGreen}{HTML}{009E73}
\definecolor{PaperGray}{HTML}{666666}
\definecolor{PaperInk}{HTML}{222222}

\hypersetup{
  colorlinks=true,
  linkcolor=blue,
  citecolor=blue,
  urlcolor=blue,
  pdftitle={Geometric Identification in Predict-Then-Optimize Learning},
  pdfauthor={Jiaxiao Xu; Changhong Mou; Keji Liu; Dinghua Xu; Yeyu Zhang}
}

\title{Geometric Identification in Predict-Then-Optimize Learning}
\author{Jiaxiao Xu$^{1}$ \quad Changhong Mou$^{2}$ \quad Keji Liu$^{1}$\\[0.4ex]
  Dinghua Xu$^{1}$ \quad Yeyu Zhang$^{1}$\\[0.6ex]
  $^{1}$Shanghai University of Finance and Economics\\
  $^{2}$Utah State University}
\date{}

\begin{document}
\maketitle

\begin{abstract}
Decision-focused surrogates can recover downstream decisions without
identifying the quotient report. We characterize the equality set of the
convex Smart Predict-then-Optimize surrogate (SPO+) population risk. Under
central symmetry, the centered mean class is the unique
Bayes minimizer exactly when every nonzero effective displacement makes the old
optimizer leave the shifted optimal face with positive probability. This
condition separates face crossing from selected-oracle disagreement and gives
quantitative local coercivity. Without symmetry, strict crossing alone need not
identify the mean; selection balance with reflected crossing restores
quotient-report identification, and conditional versions extend the result to
measurable predictors. These are population statements, without finite-sample
report-recovery or generic transfer-regret guarantees. Closed-form mechanisms
reproduce the analytic identities and rates. Portfolio, complete-matrix
KuaiRec, and Energy/Storage studies measure predictive fidelity, shifted
regret, and fitted-report geometry. A known data-generating process (DGP)
companion retains their application geometries while isolating conditional-mean
recovery and crossing, without testing the original observational assumptions.
\end{abstract}

\section{Introduction}
\label{sec:introduction}

Predict-then-optimize learning uses a predictive report as the cost vector of a
downstream optimization problem, often trained by decision-focused methods
\citep{DontiAmosKolter2017,MandiEtAl2022}. The Smart Predict-then-Optimize
(SPO) loss and its convex SPO+ surrogate
\citep{ElmachtoubGrigas2022} are commonly assessed through decision quality,
with calibration and generalization analyses addressing when small surrogate
risk yields good decisions \citep{LiuGrigas2021,ElBalghitiEtAl2023}. Decision
quality alone does not determine the predictive report. If multiple reports
induce the same downstream action or remain in the same optimality region, they
can have indistinguishable training-task decisions while encoding different
cost predictions.
Decision consistency is therefore weaker than quotient-report identification:
small task regret does not identify the predictive report, even modulo invisible
directions.

The distinction becomes operational when a fitted report is reused. A cost vector
that is decision-equivalent for one feasible set need not remain equivalent
after an objective, constraint, or feasible-set change. The resulting transfer
and reuse implication is only a testable hypothesis: similar training-task
regret need not imply similar quotient-report fidelity or shifted-task
performance. This paper establishes the population geometry; it does not prove
a transfer benefit. The real-data studies evaluate a pre-specified,
application-specific transfer hypothesis.

A one-dimensional example already shows the nonuniqueness. In
Example~\ref{ex:compact-support},
$\Sset=[-1/2,1/2]$ and a symmetric absolutely continuous cost law supported on
$[1,2]$ yield $\argmin_{\hat c}\RiskSPOP(\hat c)=[1,\infty)$, although every
point in this ray induces the Bayes-optimal decision. Thus neither a density
nor correct downstream decisions identify the quotient report. Our question is:
under which geometric conditions does SPO+ identify the effective component of
a predictive cost, namely the quotient report $[c]$, rather than only a
downstream-optimal action?

For a deterministic measurable oracle $\wstar(c)\in\Wstar(c)$ and a report
displacement $\Delta$, consider the old decision evaluated at the shifted cost,
\[
 \crossgap{\Delta}(c)=(c+2\Delta)^\top\wstar(c)-\zstar(c+2\Delta)\geq0.
\]
The gap is positive exactly when the old selected decision leaves the entire
shifted optimal face. Prior SPO+ analyses give the centered excess-risk identity
and decision consistency or mean minimization under stronger conditions
\citep{LiuGrigas2021}. We do not claim another SPO+ consistency theorem:
Theorem~\ref{thm:identification-sc} identifies optimal-face strict crossing as
the exact condition for unique quotient-report identification, characterizing
the zero-excess-risk geometry after invisible directions are quotiented out.
Subsequent results separate face departure from oracle disagreement and give
conditions for local coercivity.

Central symmetry centers the Bayes target at $\E[C]=\cbar$ and yields the
nonnegative centered excess-risk identity; it is not required to define
optimal-face strict crossing, establish geometric crossing arguments, or
compare crossing with oracle disagreement under absolute continuity. Without
symmetry, optimal-face strict crossing (SC) alone does not guarantee mean
identification:
Theorem~\ref{thm:symmetry-free-obstruction} constructs a full-support,
mean-zero asymmetric law for which SC holds but the mean is not SPO+-optimal.
Theorem~\ref{thm:selection-balanced-identification} gives exact quotient-report
identification when the oracle balances under reflection and reflected
crossing holds; approximate symmetry instead localizes the Bayes set under a
coercivity condition.

The fixed-covariate theory also has a contextual lift.
Theorem~\ref{thm:contextual-identification} treats $C\mid X=x$ as a Borel
probability kernel and optimizes over a measurable predictor class. Conditional
central symmetry and strict crossing give uniqueness of the conditional-mean
predictor after quotient projection, up to almost-sure equality; the quantitative
corollary integrates a uniform conditional coercivity modulus. These are
population Bayes statements and do not imply automatic finite-sample
quotient-report recovery, neural-network optimization convergence, or improved
shifted-task regret without additional assumptions.

The contributions are:
\begin{itemize}
  \item \textbf{Exact zero-excess-risk geometry.} Theorem~\ref{thm:identification-sc}
  characterizes the exact condition under which the complete population
  minimizer set is one quotient-report class: every nonzero effective
  displacement must produce optimal-face departure with positive probability.
  Theorem~\ref{thm:full-support} gives a geometric sufficient condition.
  \item \textbf{Oracle/crossing separation and quantitative coercivity.}
  Selected-oracle disagreement can occur within a shared optimal face; absolute
  continuity removes this ambiguity for each fixed displacement
  (Theorem~\ref{thm:ac-equivalence}). Quantitative crossing yields local
  parameter coercivity, including quadratic growth for direction-covering
  polyhedral boundary mass and sharp one-dimensional exponents
  (Section~\ref{sec:quantitative-crossing}).
  \item \textbf{A calibrated symmetry boundary.}
  Theorem~\ref{thm:symmetry-free-obstruction} rules out symmetry-free SC
  identification of the mean. Theorem~\ref{thm:selection-balanced-identification}
  recovers exact quotient-report identification under oracle selection balance and reflected
  crossing; total-variation asymmetry yields Bayes-set localization.
  \item \textbf{Contextual population identification.}
  Theorem~\ref{thm:contextual-identification} and
  Corollary~\ref{cor:contextual-quantitative-identification} lift the fixed-$x$
  result to measurable predictors under explicit conditional assumptions.
\end{itemize}

The next section distinguishes this equality-set question from regret analyses,
decision-sufficiency methods, and local analyses near decision boundaries.

Controlled synthetic experiments reproduce crossing, oracle separation, and
coercivity. Portfolio, complete-matrix KuaiRec, and Energy/Storage measure
predictive fidelity, pre-specified-shift regret, and fitted-report crossing.
Their observational outcomes do not identify a conditional Bayes report or
establish population strict crossing. We therefore add a known-DGP layer that
keeps each application's feature rows and optimization geometry fixed while
setting the conditional cost law by construction. This layer tests whether
conditional-mean recovery, crossing mass, and finite-family transfer regret
follow the predicted mechanism; it does not establish a universal transfer
guarantee. The visualizations are reported in Appendix~\ref{app:semisynthetic-experiments}.

\section{Related work}
\label{sec:related-work}

\paragraph{Decision-focused learning with SPO+.}
\citet{ElmachtoubGrigas2022} introduced SPO and its convex surrogate SPO+.
Existing SPO+ theory gives the centered excess-risk identity and establishes
decision consistency or unique mean minimization under stronger symmetry,
density, and full-dimensionality conditions; later calibration, margin, and
generalization results relate excess risk to task regret
\citep{LiuGrigas2021,ElBalghitiEtAl2023,HoNguyenKilincKarzan2022};
\citet{ImBenslimaneGrigas2025} extends this target to robust constraints.
\citet{WanLiu2026} proposes a solver-free surrogate for predict-then-optimize
training and proves Fisher consistency and excess-risk bounds for that loss.
These results concern decision consistency or sufficient conditions for a target;
they do not characterize the complete zero-excess-risk equality set of SPO+.
Our contribution is not another SPO+ consistency theorem: under centered
symmetry, we characterize the population minimizer geometry after quotienting
invisible directions, with optimal-face strict crossing as the necessary and
sufficient condition for a single quotient-report class.

\paragraph{What optimal decisions identify.}
Recent work characterizes data or deterministic proxies sufficient for an
optimal decision, sometimes allowing nonunique optimal point forecasts, while
inverse optimization characterizes objective parameters compatible with
observed decisions
\citep{BennounaAminOzdaglar2025,YeAminOzdaglar2026,SchutteEtAl2025,HomemDeMelloEtAl2026,AhmadiEtAl2026}.
\citet{SunLiuLi2023} builds a max-margin objective from optimality conditions
and trains from observed optimal solutions without objective coefficients; it
does not study the population SPO+ equality set.
Related work on elicitable functionals and calibrated structured surrogates
distinguishes the statistical target, the report, and the regret induced by a
surrogate \citep{Gneiting2011,FisslerHlavinovaRudloff2021,FrongilloWaggoner2021,FinocchiaroFrongilloWaggoner2024,OsokinBachLacosteJulien2017,NowakVilaBachRudi2019}.
Those are information-, proxy-, or observation-identification targets. Ours is
the Bayes equality set of a fixed surrogate: when does zero SPO+ excess risk
identify one effective component, i.e. one quotient report, rather than merely
preserve a downstream decision?

\paragraph{Geometry near decision boundaries.}
Normal-fan margins and contextual fast-rate analyses examine behavior near
decision boundaries \citep{ElBalghitiEtAl2023,HuKallusMao2022}. In contrast,
we use optimal-face crossing to characterize the global equality set of the
population SPO+ risk, and use direction-covering boundary mass to obtain local
coercivity. This separates disagreement of a selected oracle from departure
from the full shifted optimal face.

\section{Problem Setup and Population Notions}
\label{sec:setup}

Sections~\ref{sec:strict-crossing} through~\ref{sec:quantitative-crossing} work at a
fixed covariate value and suppress conditioning notation; Appendix~\ref{sec:extensions}
states the measurable contextual lift. Let
$\Sset\subset\R^d$ be nonempty, compact, and convex. For a cost vector
$q\in\R^d$, define
\begin{equation}
  \zstar(q):=\min_{w\in\Sset}q^\top w,
  \qquad
  \Wstar(q):=\argmin_{w\in\Sset}q^\top w.
  \label{eq:optimal-value-and-face}
\end{equation}
The optimizer correspondence $\Wstar$ is set-valued. Throughout, $\wstar$ is
a deterministic measurable selection satisfying $\wstar(q)\in\Wstar(q)$ for
every $q$. The support function of $\Sset$ is
$\supportfun(q):=\max_{w\in\Sset}q^\top w=-\zstar(-q)$.

For a prediction $\hat c$ and observed cost $c$, the SPO+ loss is
\begin{equation}
  \lossSPOP(\hat c,c)
  :=\supportfun(c-2\hat c)+2\hat c^\top\wstar(c)-\zstar(c).
  \label{eq:spo-plus-loss}
\end{equation}
Write $\RiskSPOP(\hat c):=\E[\lossSPOP(\hat c,c)]$ whenever the expectation
is finite, and let $P_c$ denote the law of $c$. To make the downstream target
explicit, we deploy the same fixed
selection and define
\begin{equation}
  \lossSPO(\hat c,c):=c^\top\wstar(\hat c)-\zstar(c),
  \qquad
  \RiskSPO(\hat c):=\E[\lossSPO(\hat c,c)].
  \label{eq:deployed-spo-risk}
\end{equation}
The target parameter is $\cbar\in\R^d$. Let
\[
  L:=\operatorname{span}(\Sset-\Sset),\qquad N:=L^\perp,\qquad
  [q]:=q+L^\perp\in\R^d/L^\perp.
\]
Let $P_L$ denote the orthogonal projection onto $L$.
Write $k:=\dim L$ and let $\Leb_L$ denote $k$-dimensional Lebesgue measure
on $L$.
Since every $n\in N=L^\perp$ is constant on $\Sset$, the SPO+ risk is invariant under
$\hat c\mapsto\hat c+n$. Accordingly, every identification statement below
is a statement about the quotient report $[\hat c]=\hat c+L^\perp$, equivalently
the effective projected report $P_L\hat c$. When $\Sset$ is full-dimensional,
$N=\{0\}$ and the quotient coincides with the ambient parameter space.

\begin{assumption}[Centered distribution and integrability]
\label{assump:centered-symmetric}
The cost law is centrally symmetric about $\cbar$, namely
$c\overset{d}=2\cbar-c$, and $\E\|c\|<\infty$. The deterministic measurable
selection $\wstar$ is fixed before population risks are formed.
\end{assumption}

Compactness makes $\wstar(c)$ bounded, so Assumption~\ref{assump:centered-symmetric}
is sufficient for the integrability used below. It also implies
$\cbar=\E[c]$. For coordinatewise nonnegative costs, symmetry further
forces $0\leq c_j\leq2\cbar_j$ almost surely for each coordinate $j$. In particular,
\begin{equation}
  \RiskSPO(\hat c)=\cbar^\top\wstar(\hat c)-\E[\zstar(c)],
  \qquad
  \argmin_{\hat c}\RiskSPO(\hat c)
  =\{\hat c:\wstar(\hat c)\in\Wstar(\cbar)\}.
  \label{eq:spo-bayes-set-fixed-selection}
\end{equation}
Thus this task-risk notion is explicit about the deployed tie-breaking rule.

\begin{proposition}[Centered Bayes minimizer]
\label{prop:bayes-minimizer}
Under Assumption~\ref{assump:centered-symmetric}, $\RiskSPOP$ is convex and
$\cbar\in\argmin_{\hat c}\RiskSPOP(\hat c)$.
\end{proposition}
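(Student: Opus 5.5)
Convexity and finiteness come first. For fixed $c$, the map $\hat c\mapsto\supportfun(c-2\hat c)$ is a support function composed with an affine map, so it is convex. The remaining term $2\hat c^\top\wstar(c)-\zstar(c)$ is affine in $\hat c$. Hence $\lossSPOP(\cdot,c)$ is convex for every $c$, and convexity survives taking expectations, provided the risk is finite.

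For finiteness, let $M:=\max_{w\in\Sset}\|w\|<\infty$, which is finite by compactness. Then $|\supportfun(q)|\le M\|q\|$ and $|\zstar(c)|\le M\|c\|$, so
\[
|\lossSPOP(\hat c,c)|\le M\|c-2\hat c\|+2M\|\hat c\|+M\|c\|\le 2M\|c\|+4M\|\hat c\|,
\]
which is integrable since $\E\|c\|<\infty$. Measurability holds because $\wstar$ is a measurable selection and $\supportfun,\zstar$ are continuous. So $\RiskSPOP$ is a finite convex function on $\R^d$.

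For optimality of $\cbar$, I would avoid derivatives, since the selection makes subgradients awkward, and compare $\RiskSPOP(\hat c)$ with $\RiskSPOP(\cbar)$ directly.

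\textbf{Lower bound.} Fix $\hat c$ and apply the defining inequality of the support function with the point $\wstar(2\cbar-c)\in\Sset$:
\[
\supportfun(c-2\hat c)\ge (c-2\hat c)^\top \wstar(2\cbar-c).
\]
Take expectations. By symmetry, $c\overset{d}=2\cbar-c$, so $\E[c^\top\wstar(2\cbar-c)]=\E[(2\cbar-c)^\top\wstar(c)]$ and $\E[\wstar(2\cbar-c)]=\E[\wstar(c)]$. This gives
\[
\RiskSPOP(\hat c)\ge \E\big[(2\cbar-c)^\top\wstar(c)\big]-2\hat c^\top\E[\wstar(c)]+2\hat c^\top\E[\wstar(c)]-\E[\zstar(c)]
=2\cbar^\top\E[\wstar(c)]-2\E[\zstar(c)],
\]
using $c^\top\wstar(c)=\zstar(c)$.

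\textbf{Equality at $\cbar$.} Set $\hat c=\cbar$. Then $c-2\cbar=-(2\cbar-c)$, so
\[
\supportfun(c-2\cbar)=-\zstar(2\cbar-c)=-(2\cbar-c)^\top\wstar(2\cbar-c),
\]
and the support-function inequality above is attained. Hence $\RiskSPOP(\cbar)$ equals the lower bound, i.e. $\RiskSPOP(\cbar)=2\cbar^\top\E[\wstar(c)]-2\E[\zstar(c)]$, and therefore $\cbar$ is a minimizer.

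The lower bound holds for every $\hat c$, so no convexity is needed for this step; convexity is only part of the claim. The main obstacle is the bookkeeping: the reflected-cost selection $\wstar(2\cbar-c)$ must be measurable and integrable, and the symmetry substitution must be justified by applying the identity in law to the integrable functions $c\mapsto c^\top\wstar(2\cbar-c)$ and $c\mapsto\wstar(2\cbar-c)$. Both are integrable because $\|\wstar\|\le M$ and $\E\|c\|<\infty$.
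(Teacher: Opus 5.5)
Your proof is correct and is essentially the paper's argument: the pointwise inequality $\supportfun(c-2\hat c)\ge(c-2\hat c)^\top\wstar(2\cbar-c)$, with equality at $\hat c=\cbar$, is exactly the paper's subgradient inequality for $\lossSPOP(\cdot,c)$ at $\cbar$ with $g_c=2(\wstar(c)-\wstar(2\cbar-c))$ written out, and your symmetry substitution is the paper's step $\E[g_c]=0$. Writing it out directly also gives, as a small bonus, the explicit Bayes risk $\RiskSPOP(\cbar)=2\cbar^\top\E[\wstar(c)]-2\E[\zstar(c)]$.
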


We distinguish the following population notions.  They are often conflated in
decision-focused learning, and the examples in this paper separate them.

\begin{enumerate}
  \item \textbf{Bayes existence}: $\cbar\in\argmin\RiskSPOP$.
  \item \textbf{Unique quotient-report identification}:\\
  \[\argmin\RiskSPOP=\cbar+N \iff
  \argmin_{[\hat c]}\RiskSPOP([\hat c])=\{[\cbar]\}.\]
  \item \textbf{Fixed-rule Fisher consistency}: every Bayes report induces a
  Bayes-optimal decision for the deployed selection:
  $\argmin\RiskSPOP\subseteq\argmin\RiskSPO$, with the latter set given
  exactly by \eqref{eq:spo-bayes-set-fixed-selection}.
  \item \textbf{Selection-independent decision consistency}:\\
  \[\Wstar(\hat c)\subseteq\Wstar(\cbar) \quad \text{for every}
  \quad\hat c\in\argmin\RiskSPOP\text{.}\]
  \item \textbf{Quantitative quotient-report identification}: there is a modulus
  $\psi$ such that, for effective displacements $\Delta\in L$,
  $\RiskSPOP(\cbar+\Delta)-\RiskSPOP(\cbar)\geq\psi(\|\Delta\|)$ locally.
  This is quotient-parameter coercivity, not task-regret calibration.
  \item \textbf{Estimator recovery}: an empirical minimizer near $\cbar$ has
  effective-component error controlled by an empirical uniform-deviation and
  the local inverse modulus.  This requires an additional localization
  argument and is not implied by the population coercivity alone.
\end{enumerate}

\begin{remark}[Terminology]
We use ``unique quotient-report identification'' to mean uniqueness of the
quotient Bayes report $[\hat c]$, never uniqueness of an ambient representative. In
the terminology of scoring-function theory this is closely related to strict
consistency and should not be confused with identification-function
terminology for statistical functionals
\citep{Gneiting2011,FisslerHlavinovaRudloff2021}.  The present paper is not a
scoring-rule survey; the distinction matters because a strictly consistent
loss can have a unique quotient Bayes report without implying any particular downstream
decision rule, and conversely decision consistency can hold without unique
identification.
\end{remark}

\section{Absolute Continuity Does Not Ensure Identification}
\label{sec:continuity-failure}

\begin{example}[Compact support yields a flat SPO+ risk]
\label{ex:compact-support}
Absolute continuity does not by itself make a decision-focused surrogate
identify the quotient report. Let $\Sset=[-1/2,1/2]$ and
$C\sim\operatorname{Unif}[1,2]$, so $\cbar=3/2$. Direct evaluation gives
\begin{equation}
  \argmin_{\hat c}\RiskSPOP(\hat c)=[1,\infty),
  \label{eq:compact-support-plateau}
\end{equation}
although every minimizer induces the same singleton decision as the mean.
Indeed, for $c>0$,
\begin{equation}
  \lossSPOP(\hat c,c)=\tfrac12|c-2\hat c|-\hat c+\tfrac c2,
  \label{eq:compact-support-loss}
\end{equation}
so the loss vanishes almost surely for $\hat c\geq1$ and is positive with
positive probability for $\hat c<1$. The exact piecewise risk and the same
construction with a smooth compactly supported density are given in
Appendix~\ref{app:counterexamples}.
\end{example}

The obstruction is mass missing from the optimization-transition regions, not
the lack of a density. Under the centered-symmetry setup, unique quotient-report
identification instead requires
positive mass in every nonzero effective displacement's optimal-face crossing
event, formalized by optimal-face strict crossing in Section~\ref{sec:strict-crossing} and
quantified in Section~\ref{sec:quantitative-crossing}.

\begin{figure*}[t]
  \centering
  \resizebox{\textwidth}{!}{\begin{tikzpicture}[x=0.86cm,y=0.86cm,>=Latex,font=\small]
  \begin{scope}
    \node[font=\bfseries\small,anchor=south west,text=PaperInk] at (-0.35,3.45)
      {\normalfont\footnotesize\bfseries (a) same decision region};
    \draw[fill=black!1.5,rounded corners=3pt] (-0.35,-2.30) rectangle (6.90,3.10);
    \draw[->,PaperGray,line width=1.0pt] (0,0) -- (6.15,0) node[right] {$q$};
    \draw[PaperGray,thick] (3.2,-0.30) -- (3.2,1.85);
    \node[below left,font=\footnotesize,inner sep=1.5pt] at (3.2,0) {$0$};
    \node[above,font=\footnotesize,align=center,inner sep=2pt] at (0.95,1.00) {$q<0$:\\$w=+B$};
    \node[above,font=\footnotesize,align=center,inner sep=2pt] at (5.45,1.00) {$q>0$:\\$w=-B$};
    \fill[PaperBlue!20] (4.25,0.04) rectangle (5.75,0.36);
    \draw[PaperBlue,very thick,rounded corners=1pt] (4.25,0.20) -- (5.75,0.20);
    \node[PaperBlue,above,font=\footnotesize,inner sep=2pt] at (5.0,0.42) {$\operatorname{supp}(P_c)$};
    \draw[->,PaperOrange,very thick] (4.6,-0.78) -- (5.35,-0.78);
    \node[PaperOrange,below,font=\footnotesize,inner sep=2pt] at (4.98,-0.94) {$2\Delta$};
    \node[align=center,font=\footnotesize,text=PaperGray,inner sep=3pt] at (3.2,-1.72)
      {No crossing:\\$G_\Delta(c)=0$ a.s.\ $\Rightarrow$ flat risk direction};
  \end{scope}

  \begin{scope}[xshift=9.15cm]
    \node[font=\bfseries\small,anchor=south west,text=PaperInk] at (-0.35,3.45)
      {\normalfont\footnotesize\bfseries (b) mass crosses a decision boundary};
    \draw[fill=black!1.5,rounded corners=3pt] (-0.35,-2.30) rectangle (6.90,3.10);
    \draw[->,PaperGray,line width=1.0pt] (0,0) -- (6.15,0) node[right] {$q$};
    \draw[PaperGray,thick] (3.2,-0.45) -- (3.2,1.85);
    \node[below left,font=\footnotesize,inner sep=1.5pt] at (3.2,0) {$0$};
    \node[above,font=\footnotesize,align=center,inner sep=2pt] at (0.95,1.00) {$q<0$:\\$w=+B$};
    \node[above,font=\footnotesize,align=center,inner sep=2pt] at (5.45,1.00) {$q>0$:\\$w=-B$};
    \fill[PaperOrange!22] (2.05,0) rectangle (3.2,0.50);
    \draw[PaperOrange,thick,rounded corners=1pt] (2.05,0.50) -- (3.2,0.50);
    \node[PaperOrange,align=center,font=\footnotesize,inner sep=2pt] at (1.80,2.42) {crossing strip\\$-2\Delta<c<0$};
    \draw[PaperOrange,thin] (2.35,1.94) -- (2.60,0.58);
    \draw[->,PaperOrange,very thick] (2.4,-0.78) -- (3.65,-0.78);
    \node[PaperOrange,below,font=\footnotesize,inner sep=2pt] at (2.72,-0.94) {$2\Delta$};
    \node[align=center,font=\footnotesize,text=PaperGray,inner sep=3pt] at (3.2,-1.72)
      {Positive-mass crossing:\\$\Pr\{G_\Delta(c)>0\}>0$};
  \end{scope}
\end{tikzpicture}}
  \caption{Missing transition-region mass leaves a flat surrogate-risk
  direction; mass in a crossed boundary strip yields a positive gap.}
  \label{fig:central-phenomenon}
\end{figure*}

\section{Strict Crossing and Exact Equality-Set Geometry}
\label{sec:strict-crossing}

For a displacement $\Delta\in\R^d$, define the \emph{optimization crossing
gap}
\begin{equation}
  \crossgap{\Delta}(c)
  :=(c+2\Delta)^\top\wstar(c)-\zstar(c+2\Delta)\geq0.
  \label{def:crossing-gap}
\end{equation}
It is the suboptimality of the old selected decision at the shifted cost.
The nonnegativity in \eqref{def:crossing-gap} is pointwise and the equality
event has a set-valued meaning:
\begin{equation}
  \crossgap{\Delta}(c)>0
  \quad\Longleftrightarrow\quad
  \wstar(c)\notin\Wstar(c+2\Delta).
  \label{eq:gap-positive-iff-leaves-face}
\end{equation}

\begin{definition}[Optimal-face strict crossing]
\label{def:strict-crossing}
The triple $(P_c,\Sset,\wstar)$ satisfies \emph{optimal-face strict crossing}
(SC) when,
for every effective displacement $\Delta\in L\setminus\{0\}$,
\[
  \Pr\{\wstar(c)\notin\Wstar(c+2\Delta)\}>0.
  \qquad\textup{(SC)}
\]
\end{definition}

\begin{theorem}[Identification if and only if strict crossing]
\label{thm:identification-sc}
Under Assumption~\ref{assump:centered-symmetric},
\begin{equation}
  \cbar+N\text{ is the unique minimizer set of }\RiskSPOP
  \quad\Longleftrightarrow\quad
  \text{SC holds.}
  \label{eq:identification-iff-sc}
\end{equation}
\end{theorem}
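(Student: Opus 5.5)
The plan is to reduce everything to a \emph{centered excess-risk identity}:
\begin{equation*}
  \RiskSPOP(\cbar+\Delta)-\RiskSPOP(\cbar)=\E\bigl[\crossgap{\Delta}(c)\bigr]
  \qquad\text{for every }\Delta\in\R^d .
\end{equation*}
Once this holds, the theorem follows from nonnegativity of $\crossgap{\Delta}$ and the characterization \eqref{eq:gap-positive-iff-leaves-face}.

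\textbf{Step 1: derive the identity.} Write $c':=2\cbar-c$, which by Assumption~\ref{assump:centered-symmetric} has the same law as $c$. Using $\supportfun(q)=-\zstar(-q)$, the loss at $\hat c=\cbar+\Delta$ is
\[
  \lossSPOP(\cbar+\Delta,c)=-\zstar(c'+2\Delta)+2(\cbar+\Delta)^\top\wstar(c)-\zstar(c).
\]
All terms are integrable, because $\Sset$ is compact (so $\wstar$ is bounded and $|\zstar(q)|\le\|q\|\max_{w\in\Sset}\|w\|$) and $\E\|c\|<\infty$. We may therefore take expectations term by term. The first term only enters through its marginal expectation, so we may replace $\E[\zstar(c'+2\Delta)]$ by $\E[\zstar(c+2\Delta)]$. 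Splitting $2\cbar=c+c'$ then gives
\[
  \RiskSPOP(\cbar+\Delta)=\E\bigl[(c+2\Delta)^\top\wstar(c)-\zstar(c+2\Delta)\bigr]
  +\E\bigl[c'^\top\wstar(c)-\zstar(c)\bigr].
\]
The second expectation does not depend on $\Delta$. (Since $\E\zstar(c)=\E\zstar(c')$, it equals $\E[c'^\top\wstar(c)-\zstar(c')]\ge0$, though we do not need this.) The first expectation is $\E[\crossgap{\Delta}(c)]$. Because $\crossgap{0}\equiv0$, subtracting the case $\Delta=0$ yields the identity. This also recovers Proposition~\ref{prop:bayes-minimizer}.

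\textbf{Step 2: reduce to the quotient.} For $n\in N$, the map $w\mapsto n^\top w$ is constant on $\Sset$, say equal to $\kappa_n$. Hence $\zstar(q+2n)=\zstar(q)+2\kappa_n$ and $(c+2\Delta+2n)^\top\wstar(c)$ increases by the same amount $2\kappa_n$. It follows that $\crossgap{\Delta}=\crossgap{P_L\Delta}$ pointwise, and $\Wstar(c+2\Delta)=\Wstar(c+2P_L\Delta)$.

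\textbf{Step 3: prove both directions.} Since $\crossgap{\Delta}\ge0$, we have $\E\crossgap{\Delta}=0$ if and only if $\crossgap{\Delta}=0$ almost surely. By \eqref{eq:gap-positive-iff-leaves-face}, this holds if and only if $\Pr\{\wstar(c)\notin\Wstar(c+2\Delta)\}=0$.
\begin{itemize}
  \item[($\Leftarrow$)] Assume SC. Let $\cbar+\Delta$ be any minimizer. By Proposition~\ref{prop:bayes-minimizer} its excess risk is $0$, so $\crossgap{P_L\Delta}=0$ almost surely. SC then forces $P_L\Delta=0$, i.e.\ $\Delta\in N$. Conversely, every point of $\cbar+N$ has zero excess risk by Step 2. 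Hence the minimizer set is exactly $\cbar+N$.
  \item[($\Rightarrow$)] If SC fails, there is $\Delta\in L\setminus\{0\}$ with $\crossgap{\Delta}=0$ almost surely. Then $\cbar+\Delta$ is a minimizer. It does not lie in $\cbar+N$, since $L\cap N=\{0\}$.
\end{itemize}

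\textbf{Main obstacle.} The delicate point is Step 1. The symmetry swap $c\leftrightarrow c'$ may be applied only to terms that enter the risk through their marginal law. It must not be applied to the coupled term $c'^\top\wstar(c)$. That coupled term has to be isolated as a $\Delta$-free constant, which requires integrability of each piece separately. I would verify these integrability bounds first, and then carry out the term-by-term rearrangement.
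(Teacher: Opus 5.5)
Your proposal is correct and follows essentially the same route as the paper: the centered excess-risk identity (Lemma~\ref{lem:centered-excess-identity}), then nonnegativity of $\crossgap{\Delta}$ combined with \eqref{eq:gap-positive-iff-leaves-face}, then quotient invariance along $N$. The differences are cosmetic. You isolate a $\Delta$-free constant instead of subtracting first, which also gives the minimizing property of $\cbar$ without the separate subgradient argument of Proposition~\ref{prop:bayes-minimizer}. You also check quotient invariance through $\crossgap{\Delta}=\crossgap{P_L\Delta}$ rather than at the level of the loss.
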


\begin{proof}[Proof idea]
The centered excess-risk identity is
\begin{equation}
  \RiskSPOP(\cbar+\Delta)-\RiskSPOP(\cbar)
  =\E\left[\crossgap{\Delta}(c)\right].
  \label{eq:spo-plus-excess-identity}
\end{equation}
By \eqref{eq:gap-positive-iff-leaves-face}, the expectation in
\eqref{eq:spo-plus-excess-identity} is strictly positive exactly when the SC
event has positive probability. Proposition~\ref{prop:bayes-minimizer} then
gives both directions of \eqref{eq:identification-iff-sc}. Appendix~\ref{app:proofs}
records the centered algebra and the full quantifier argument.
\end{proof}

\begin{figure*}[t]
  \centering
  \includegraphics[width=0.98\textwidth]{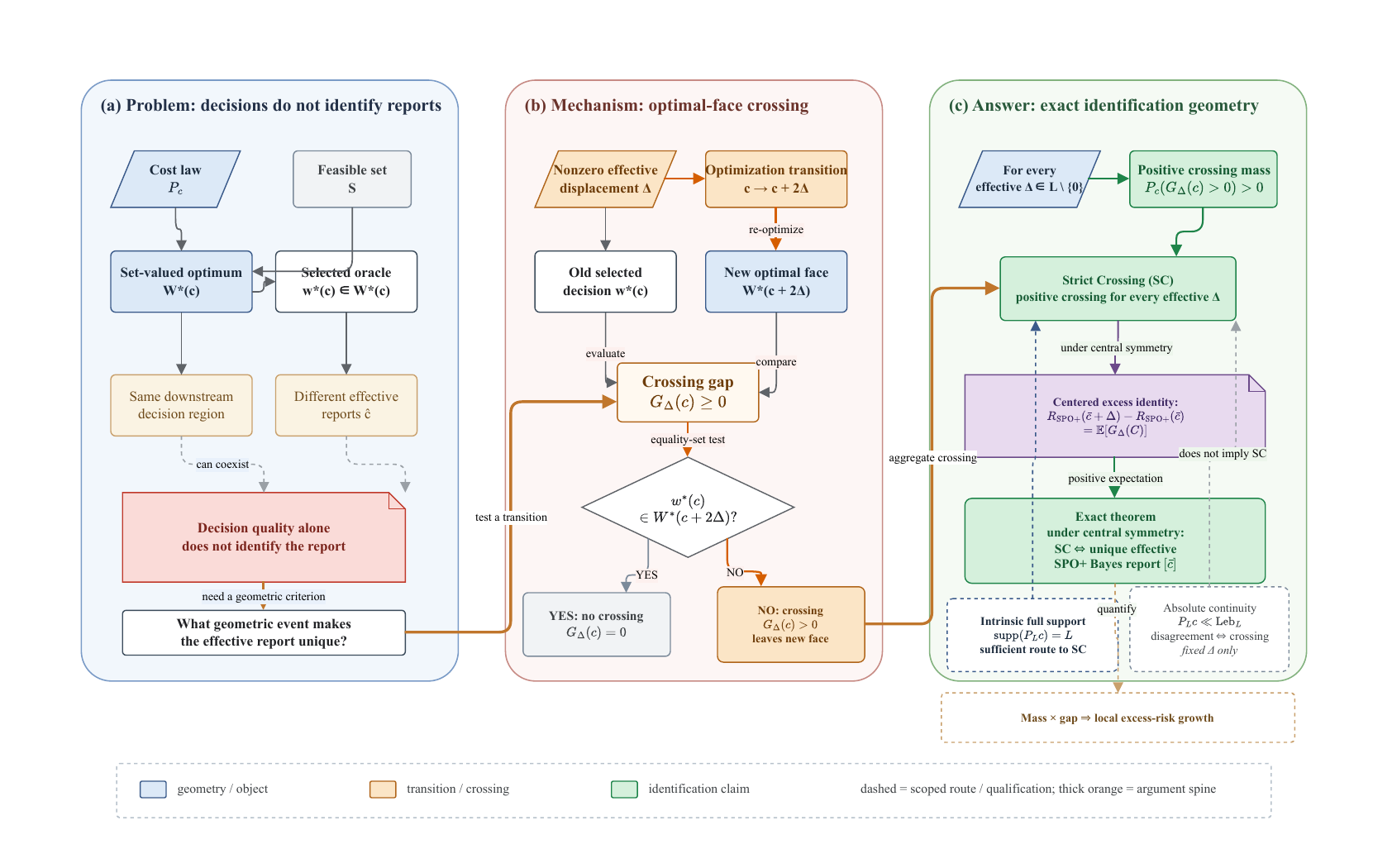}
  \caption{Optimal-face strict crossing links optimal-face departure to exact quotient
  identification; the side conditions and quantitative refinement are shown in
  the diagram.}
  \label{fig:strict-crossing-flow}
\end{figure*}

\begin{remark}[What is known and what is new here]
The centered representation \eqref{eq:spo-plus-excess-identity} and uniqueness
under stronger full-support conditions are known \citep{LiuGrigas2021}.
Theorem~\ref{thm:identification-sc} characterizes their exact equality case
after quotienting invisible directions: zero gap means that the old decision
remains in the shifted optimal face. The distinct geometric and quantitative
consequences are developed below.
\end{remark}

Theorem~\ref{thm:identification-sc} treats a decision transition as the
primitive object and identifies only the effective quotient report. It does
not compare the two selected oracle outputs; that distinction is essential on
a positive-probability tie set.

\section{Geometry of Strict Crossing}
\label{sec:geometry}

The support function makes the set-valued geometry behind
Theorem~\ref{thm:identification-sc} explicit.  All geometry in this section
is intrinsic to $L$: translating $\Sset$ by any $s_0\in\Sset$ gives the
full-dimensional body $\Sset-s_0\subset L$ without changing its optimizers.
Let
\begin{equation}
  \TieLocus_L:=\{q\in L:\Wstar(q)\text{ is not a singleton}\}.
  \label{eq:tie-locus}
\end{equation}
The restricted support function is finite convex on $L$, so $\TieLocus_L$ is
$\Leb_L$-null. For a polytope, $q\mapsto\Wstar(q)$ is constant on the
relative interiors of the minimization normal cones in $L$.

\begin{theorem}[Intrinsic full support implies strict crossing]
\label{thm:full-support}
Suppose $\supp(P_Lc)=L$. Then SC in Definition~\ref{def:strict-crossing}
holds.
\end{theorem}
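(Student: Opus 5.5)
Fix $\Delta\in L\setminus\{0\}$. The plan is to find a nonempty open set $U\subset L$ of projected costs $q=P_Lc$ on which the old selected decision $\wstar(c)$ is forced out of the shifted face $\Wstar(c+2\Delta)$. Then $\supp(P_Lc)=L$ gives $\Pr\{P_Lc\in U\}>0$, which is SC. Two reductions come first.

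\emph{Reduction to $L$.} Since every $n\in N$ is constant on $\Sset$, the faces satisfy $\Wstar(c)=\Wstar(P_Lc)$ and $\Wstar(c+2\Delta)=\Wstar(P_Lc+2\Delta)$, because $\Delta\in L$. The selection $\wstar(c)$ is always a point of $\Wstar(P_Lc)$, even though it may depend on the full vector $c$. So it suffices to find an open $U\subset L$ such that
\[
\Wstar(q)\cap\Wstar(q+2\Delta)=\emptyset \quad\text{for all } q\in U.
\]
This condition does not involve the selection at all, which removes any measurability or tie-breaking concern.

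\emph{Reduction to a translated body.} After translating by some $s_0\in\Sset$, I treat $\Sset$ as full-dimensional in $L$.

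\emph{Construction of $U$.} Take $q_0:=-\Delta$. Then $\Wstar(q_0)=\argmax_{w\in\Sset}\Delta^\top w$ and $\Wstar(q_0+2\Delta)=\argmin_{w\in\Sset}\Delta^\top w$. Because $\Sset-s_0$ is full-dimensional in $L$ and $0\neq\Delta\in L$, the linear functional $\Delta^\top w$ is nonconstant on $\Sset$. Hence $\max_{\Sset}\Delta^\top w>\min_{\Sset}\Delta^\top w$, and these two faces are disjoint compact sets. More quantitatively, the gap $g:=\supportfun(\Delta)+\supportfun(-\Delta)>0$ is the width of $\Sset$ in direction $\Delta$.

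\emph{Stability under perturbation.} I need disjointness to persist for $q=q_0+\epsilon$ with $\|\epsilon\|$ small and $\epsilon\in L$. Let $D:=\max_{w,w'\in\Sset}\|w-w'\|$ be the diameter. Suppose $w\in\Wstar(q)\cap\Wstar(q+2\Delta)$. From $w\in\Wstar(q)$,
\[
(-\Delta+\epsilon)^\top w\le(-\Delta+\epsilon)^\top w' \quad\text{for all } w'\in\Sset,
\]
so $\Delta^\top w\ge\max_\Sset\Delta^\top-\|\epsilon\|D$. Similarly, $w\in\Wstar(q+2\Delta)$ gives $\Delta^\top w\le\min_\Sset\Delta^\top+\|\epsilon\|D$. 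Combining the two yields $g\le 2\|\epsilon\|D$. Therefore the ball of radius $r<g/(2D)$ around $q_0$ in $L$ works as $U$.

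\emph{Conclusion.} Full support gives $\Pr\{P_Lc\in U\}>0$. On this event, $\wstar(c)\in\Wstar(P_Lc)$ while $\Wstar(P_Lc)$ is disjoint from $\Wstar(P_Lc+2\Delta)=\Wstar(c+2\Delta)$. Hence $\wstar(c)\notin\Wstar(c+2\Delta)$, establishing SC.

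The main point requiring care is the stability step. Upper semicontinuity of $\Wstar$ would also suffice, but the explicit width/diameter estimate avoids compactness arguments. It is also the only place where full-dimensionality within $L$ (that is, $\Delta\in L$) is used.
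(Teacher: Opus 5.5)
Your proof is correct and has the same structure as the paper's. You fix $q_0=-\Delta$ and use the positive width of $\Sset$ along $\Delta$ to make $\Wstar(q_0)$ and $\Wstar(q_0+2\Delta)$ disjoint. You then propagate this selection-free disjointness to a relatively open neighborhood $U\subset L$ of $q_0$, and let intrinsic full support charge $U$.

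The one genuine difference is the stability step. The paper argues by contradiction, using compactness of $\Sset$, Berge upper hemicontinuity of $\Wstar$, and continuity of $\zstar$. That yields some neighborhood but gives no information about its size. Your two one-sided inequalities instead give the explicit radius $\|\epsilon\|<g/(2D)$, with $g=\max_{\Sset}\Delta^\top w-\min_{\Sset}\Delta^\top w$, using only linearity of $q\mapsto q^\top w$ and boundedness of $\Sset$.

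Each route buys something different. Your version is more elementary and quantitative: the width along unit directions of $L$ is bounded below, so the radius is proportional to $\|\Delta\|$. The paper's soft argument uses only continuity and upper hemicontinuity, so it does not lean on the linear structure of the objective.

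One small correction to your closing remark: the stability estimate holds for every $\epsilon\in\R^d$ and does not itself use $\Delta\in L$. That hypothesis enters in three other places: through $g>0$, through $\Wstar(c+2\Delta)=\Wstar(P_Lc+2\Delta)$, and through $q_0=-\Delta\in L$, which is what allows full support of $P_Lc$ to charge $U$.
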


\begin{proof}[Proof idea]
Fix $\Delta\in L\setminus\{0\}$ and set $q_0=-\Delta$. Every
$w\in\Wstar(q_0)$ maximizes $\Delta^\top w$, whereas
$\zstar(q_0+2\Delta)=\zstar(\Delta)$. Hence
\begin{equation}
  \Delta^\top w-\zstar(\Delta)
  =\max_{u\in\Sset}\Delta^\top u-\min_{u\in\Sset}\Delta^\top u>0.
  \label{eq:directional-width}
\end{equation}
The width is positive because a vector in $L$ that is orthogonal to every
element of $\Sset-\Sset$ must vanish. Put
\[
  g_\Delta(q,w):=(q+2\Delta)^\top w-\zstar(q+2\Delta).
\]
Since $\zstar$ is continuous, so is $g_\Delta$. At $q=q_0$, uniformly over
$w\in\Wstar(q_0)$,
\[
  g_\Delta(q_0,w)
  =\max_{u\in\Sset}\Delta^\top u-\min_{u\in\Sset}\Delta^\top u
  =:\delta_\Delta>0.
\]
Compactness and continuity give relative neighborhoods $U_1$ of $q_0$ in $L$
and $V$ of
$\Wstar(q_0)$ on which $g_\Delta>\delta_\Delta/2$. By the maximum theorem,
$q\mapsto\Wstar(q)$ is upper hemicontinuous, so some relative neighborhood
$U_2$ of $q_0$ satisfies $\Wstar(q)\subset V$. Thus, for $U=U_1\cap U_2$,
\[
  \Wstar(q)\subset V,\qquad
  g_\Delta(q,w)>\delta_\Delta/2\quad\forall w\in\Wstar(q),
\]
which implies $\wstar(q)\notin\Wstar(q+2\Delta)$. Hence the SC event contains
the preimage of the nonempty relatively open set $U$, and intrinsic full
support gives $\Pr\{P_Lc\in U\}>0$. The appendix gives the reduction and
full argument.
\end{proof}

The proof constructs a crossing neighborhood before invoking the distribution.
It explains why intrinsic full support is sufficient and why a density inside
one decision region, as in Example~\ref{ex:compact-support}, is not.

\begin{remark}[Quotient-report identification is intrinsic]
\label{rem:quotient-space-identification}
Let $L:=\spanop(\Sset-\Sset)$ and $L^\perp$ be its orthogonal complement.
For any $\Delta\in L^\perp$, the linear form $\Delta^\top w$ is constant on
$\Sset$, and a pointwise calculation gives
$\lossSPOP(\hat c+\Delta,c)=\lossSPOP(\hat c,c)$ for every $\hat c,c$
(Appendix~\ref{app:geometry}).  Hence the ambient SPO+ risk is constant along
$L^\perp$ for \emph{every} distribution, including asymmetric laws.
The identification target is therefore the quotient report
$[\cbar]=\cbar+L^\perp$, or equivalently the effective component
$P_L\cbar$.
Theorem~\ref{thm:identification-sc} accordingly quantifies SC over
$L\setminus\{0\}$ and identifies the quotient report in
$\R^d/L^\perp$; no ambient full-dimensionality is required.
\end{remark}

\begin{figure*}[t]
  \centering
  \resizebox{\textwidth}{!}{\begin{tikzpicture}[x=0.82cm,y=0.82cm,>=Latex,font=\footnotesize]
  \begin{scope}
    \node[font=\bfseries\small,anchor=south west,text=PaperInk] at (-0.55,3.65)
      {\normalfont\footnotesize\bfseries (a) normal-fan crossing};
    \draw[fill=black!1.5,draw=PaperGray!35,line width=0.5pt,rounded corners=3pt]
      (-0.55,-1.70) rectangle (4.95,3.30);
    \coordinate (o) at (2.3,0.15);
    \fill[PaperBlue!10] (o) -- (0.1,2.7) -- (2.3,2.25) -- cycle;
    \fill[PaperGreen!12] (o) -- (2.3,2.25) -- (4.6,2.7) -- cycle;
    \fill[PaperOrange!12] (o) -- (4.6,2.7) -- (2.3,-0.45) -- cycle;
    \draw[PaperBlue,thick] (o) -- (0.1,2.7);
    \draw[PaperGray,thick] (o) -- (2.3,2.25);
    \draw[PaperGreen,thick] (o) -- (4.6,2.7);
    \draw[PaperOrange,thick] (o) -- (2.3,-0.45);
    \node[PaperBlue,font=\footnotesize,inner sep=2pt] at (1.05,1.70) {$C_{v_1}$};
    \node[PaperGreen,font=\footnotesize,inner sep=2pt] at (3.70,1.70) {$C_{v_2}$};
    \node[PaperOrange,font=\footnotesize,inner sep=2pt] at (3.55,0.55) {$C_{v_3}$};
    \draw[fill=black] (1.75,1.30) circle (1.5pt);
    \node[below left,font=\footnotesize,inner sep=2pt] at (1.75,1.30) {$c$};
    \draw[->,PaperOrange,very thick] (1.75,1.30) -- (3.10,1.55)
      node[midway,above,font=\footnotesize,inner sep=2pt] {$2\Delta$};
    \draw[fill=black] (3.10,1.55) circle (1.5pt);
    \node[above right,font=\footnotesize,inner sep=2pt] at (3.40,1.10) {$c{+}2\Delta$};
    \node[align=center,font=\footnotesize,text=PaperGray,inner sep=3pt] at (2.3,-1.25)
      {$c\in C_{v_1}$, $c{+}2\Delta\in C_{v_2}$:\\$v_1\notin\Wstar(c{+}2\Delta)$};
  \end{scope}

  \begin{scope}[xshift=8.0cm]
    \node[font=\bfseries\small,anchor=south west,text=PaperInk] at (-0.2,3.65)
      {\normalfont\footnotesize\bfseries (b) assumption hierarchy};
    \draw[fill=black!1.5,draw=PaperGray!35,line width=0.5pt,rounded corners=4pt]
      (-1.75,-2.35) rectangle (10.15,3.35);
    \node[draw=PaperBlue,rounded corners=4pt,align=center,
          fill=PaperBlue!8,minimum width=5.2cm,minimum height=0.85cm,
          line width=0.9pt,inner sep=4pt] (fs) at (1.9,2.4)
      {intrinsic full support\\$\operatorname{supp}(P_Lc)=L$};
    \node[draw=PaperGreen,rounded corners=4pt,align=center,
          fill=PaperGreen!10,minimum width=3.8cm,minimum height=0.85cm,
          line width=0.9pt,inner sep=4pt] (sc) at (1.9,1.15)
      {strict crossing};
    \node[draw=PaperOrange,rounded corners=4pt,align=center,
          fill=PaperOrange!10,minimum width=5.0cm,minimum height=1.05cm,
          line width=0.9pt,inner sep=4pt] (id) at (1.9,-0.15)
      {under central symmetry\\unique quotient-report identification};
    \draw[->,thick,PaperBlue] (fs) -- (sc);
    \draw[->,thick,PaperGreen] (sc) -- (id);
    \node[draw=PaperOrange!60,rounded corners=3pt,align=center,
          fill=PaperOrange!4,minimum width=5.6cm,minimum height=0.9cm,
          line width=0.7pt,font=\scriptsize,inner sep=4pt] (cx) at (1.9,-1.45)
      {C1: density with compact support $\not\Rightarrow$ SC\\
       C5: atom-free $\not\Rightarrow$ equivalence};
    \node[draw=PaperGray,rounded corners=4pt,align=center,
          fill=gray!6,minimum width=4.0cm,minimum height=1.7cm,
          line width=0.7pt,font=\footnotesize,inner sep=5pt,dashed] (ac) at (7.55,1.10)
      {absolute continuity\\$P_Lc\ll\Leb_L$\\[1pt]
       fixed $\Delta$: disagreement\\$\Leftrightarrow$ crossing a.s.};
  \end{scope}
\end{tikzpicture}}
  \caption{Normal-fan crossing and the assumption hierarchy. Intrinsic full support
  supplies crossing mass; under central symmetry, strict crossing gives unique
  quotient-report identification. Absolute continuity only resolves oracle ambiguity.}
  \label{fig:normal-fan-hierarchy}
\end{figure*}

\section{Oracle Disagreement and Optimal-Face Crossing}
\label{sec:oracle-disagreement}

For a fixed displacement define
\begin{equation}
  A_\Delta:=\{\wstar(c)\neq\wstar(c+2\Delta)\},\qquad
  B_\Delta:=\{\wstar(c)\notin\Wstar(c+2\Delta)\}.
  \label{eq:disagreement-and-crossing-events}
\end{equation}
Always $B_\Delta\subseteq A_\Delta$, but the converse can fail on a
non-singleton shifted optimal face.
\begin{example}[Atomic oracle disagreement without crossing]
\label{ex:atomic-disagreement}
The atomic construction in Appendix~\ref{app:counterexamples} has
$\Pr(A_\Delta)=1/2$ and $\Pr(B_\Delta)=0$.
\end{example}
\begin{example}[Atom-free disagreement without crossing]
\label{ex:cantor-separation}
The singular Cantor construction in Appendix~\ref{app:counterexamples} is
atom-free but still has $\Pr(A_\Delta)=1/2$ and $\Pr(B_\Delta)=0$.
\end{example}
In both cases the oracle moves tangentially within a common optimal face while
the SPO+ crossing gap remains zero. Thus atom-freeness is insufficient.

Absolute continuity removes this ambiguity displacement-wise:
\begin{theorem}[Absolute continuity removes output ambiguity]
\label{thm:ac-equivalence}
Suppose $P_Lc\ll\Leb_L$. For every fixed $\Delta\in\R^d$,
\begin{equation}
  \Pr(A_\Delta\mathbin{\triangle}B_\Delta)=0.
  \label{eq:ac-disagreement-crossing-equivalence}
\end{equation}
\end{theorem}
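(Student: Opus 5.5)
Since $B_\Delta\subseteq A_\Delta$ always, the symmetric difference is $A_\Delta\setminus B_\Delta$. On this event $\wstar(c)\neq\wstar(c+2\Delta)$ while $\wstar(c)\in\Wstar(c+2\Delta)$, and also $\wstar(c+2\Delta)\in\Wstar(c+2\Delta)$. So $\Wstar(c+2\Delta)$ contains two distinct points and is not a singleton. The plan is therefore to show the inclusion
\[
A_\Delta\setminus B_\Delta\subseteq\{\Wstar(c+2\Delta)\text{ is not a singleton}\}
\]
and then prove that the right-hand event is null under $P_Lc\ll\Leb_L$.

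For the second step, I reduce everything to $L$. For any $q\in\R^d$, write $q=P_Lq+n$ with $n\in N$. The term $n^\top w$ is constant on $\Sset$, so $\Wstar(q)=\Wstar(P_Lq)$. This is the same invariance as in Remark~\ref{rem:quotient-space-identification}. Hence $\Wstar(c+2\Delta)$ is non-singleton if and only if
\[
P_Lc+2P_L\Delta\in\TieLocus_L,
\]
which is equivalent to $P_Lc\in\TieLocus_L-2P_L\Delta$. Note that $\Delta$ is arbitrary in $\R^d$, so the projection $P_L\Delta$ genuinely matters here. Because Lebesgue measure $\Leb_L$ is translation invariant, it suffices to show $\Leb_L(\TieLocus_L)=0$. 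Absolute continuity then gives
\[
\Pr(P_Lc\in\TieLocus_L-2P_L\Delta)=0.
\]

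The main obstacle is justifying that $\TieLocus_L$ is $\Leb_L$-null and measurable; the paper asserts this in Section~\ref{sec:geometry}, and I would supply the argument as follows. Translate $\Sset$ by some $s_0\in\Sset$, so that $\Sset-s_0$ is a full-dimensional convex body in $L$. Consider the support function $h(q)=\max_{w\in\Sset}q^\top w$ restricted to $L$. It is finite and convex, and its subdifferential at $q\in L$, projected to $L$, is $\argmax_{w\in\Sset}q^\top w$ (translated by $s_0$). So $h$ is differentiable at $q$ exactly when this face is a singleton.

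Since $\Wstar(q)=\argmax_{w\in\Sset}(-q)^\top w$ and $q\mapsto-q$ preserves $\Leb_L$, the set $\TieLocus_L$ is the image under negation of the non-differentiability set of $h$. The non-differentiability set of a finite convex function on $L\cong\R^k$ is Borel and Lebesgue-null. This follows from Rademacher's theorem, or from the classical convex-analysis result (Rockafellar, Thm.~25.5). Measurability of $A_\Delta$ and $B_\Delta$ themselves follows from the measurability of $\wstar$ and the continuity of $\zstar$, since $B_\Delta=\{\crossgap{\Delta}(c)>0\}$ by \eqref{eq:gap-positive-iff-leaves-face}.

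Combining the inclusion with the null-set bound gives $\Pr(A_\Delta\SymDiff B_\Delta)=0$ for each fixed $\Delta$.
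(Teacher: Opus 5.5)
Your proof is correct and follows essentially the same route as the paper: the inclusion $A_\Delta\setminus B_\Delta\subseteq\{P_Lc+2P_L\Delta\in\TieLocus_L\}$, nullity of $\TieLocus_L$ via the finite convex support function restricted to $L$, and translation invariance of $\Leb_L$ under absolute continuity. Your proof also makes explicit the sign flip (the support function's nondifferentiability set is $-\TieLocus_L$), the projected-subdifferential reduction, and measurability, all of which the paper leaves implicit.
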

\begin{proof}[Proof idea]
The projected tie locus in $L$ is $\Leb_L$-null because the restricted support
function is finite convex. Hence the source and shifted projected costs avoid
it almost surely, and selected-output disagreement is equivalent to departure
from the shifted singleton optimal face. The statement is for each fixed
$\Delta$; the full argument is in Appendix~\ref{app:geometry}.
\end{proof}

\section{Quantitative Crossing and Excess-Risk Growth}
\label{sec:quantitative-crossing}

Strict crossing certifies positivity but does not supply a modulus. The
selected gap always obeys a linear envelope: if
$D_{\Sset}:=\sup_{u,v\in\Sset}\|u-v\|$, then
\begin{equation}
  0\leq\crossgap{\Delta}(c)\leq2D_{\Sset}\|\Delta\|.
  \label{eq:universal-gap-envelope}
\end{equation}
We therefore control both the mass of a crossing event and the magnitude of
the gap on that event.

\begin{definition}[Quantitative strict crossing (QSC)]
\label{def:quantitative-margin}
For $r,\gamma,\kappa>0$, $\alpha\geq1$, and $\beta\geq0$,
QSC$(\alpha,\beta;\gamma,\kappa,r)$ holds if, for every effective
displacement $\Delta\in L$ with $0<\|\Delta\|\leq r$,
\begin{equation}
  \Pr\{\crossgap{\Delta}(c)\geq\gamma\|\Delta\|^\alpha\}
  \geq\kappa\|\Delta\|^\beta.
  \label{eq:quantitative-strict-crossing}
\end{equation}
\end{definition}

\begin{theorem}[Quantitative crossing transfer]
\label{thm:quantitative-crossing}
Under Assumption~\ref{assump:centered-symmetric}, QSC$(\alpha,\beta;
\gamma,\kappa,r)$ implies
\begin{equation}
  \RiskSPOP(\cbar+\Delta)-\RiskSPOP(\cbar)
  \geq\gamma\kappa\|\Delta\|^{\alpha+\beta}
  \quad\text{for }\Delta\in L,\ 0<\|\Delta\|\leq r.
  \label{eq:qsc-risk-transfer}
\end{equation}
\end{theorem}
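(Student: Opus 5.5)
The plan is to reduce the statement to the centered excess-risk identity \eqref{eq:spo-plus-excess-identity} and then apply a Markov-type lower bound to the nonnegative gap. The identity itself is only sketched in the proof idea of Theorem~\ref{thm:identification-sc}, so I would rederive it first, since everything else rests on it.

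\textbf{Step 1: the identity.} Substituting $\hat c=\cbar+\Delta$ into \eqref{eq:spo-plus-loss} gives
\[
\lossSPOP(\cbar+\Delta,c)=\supportfun(c-2\cbar-2\Delta)+2(\cbar+\Delta)^\top\wstar(c)-\zstar(c).
\]
Write $c'=2\cbar-c$. Then $\supportfun(c-2\cbar-2\Delta)=\supportfun(-(c'+2\Delta))=-\zstar(c'+2\Delta)$. By Assumption~\ref{assump:centered-symmetric}, $c'$ has the same law as $c$. Since $\wstar$ is a fixed deterministic selection, the pair $(c',\wstar(c'))$ has the same joint law as $(c,\wstar(c))$, so any integrable expression may be re-expressed in the reflected variable.

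Taking expectations, the linear term satisfies $\E[2\cbar^\top\wstar(c)]=\E[2\cbar^\top\wstar(c')]$. Combining with $\E[2\Delta^\top\wstar(c)]$ and $-\E[\zstar(c)]$, and writing $\zstar(c)=c^\top\wstar(c)$, the aim is
\[
\RiskSPOP(\cbar+\Delta)=\E\bigl[(c+2\Delta)^\top\wstar(c)-\zstar(c+2\Delta)\bigr]+K,
\]
where $K$ does not depend on $\Delta$. Setting $\Delta=0$ gives $\E[\crossgap{0}(c)]=0$, so $K=\RiskSPOP(\cbar)$ and the identity follows.

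The delicate point is the bookkeeping: terms must be moved between $c$ and $c'$ only under the expectation. The pieces that do not involve $\Delta$, namely $2\cbar^\top\wstar$ and $c^\top\wstar$, need to cancel or combine using $c+c'=2\cbar$. Integrability is not an issue: $\E\|c\|<\infty$ and $\Sset$ is bounded, so every term is finite. The identity is stated in the proof idea of Theorem~\ref{thm:identification-sc}, so I may also simply invoke it.

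\textbf{Step 2: the lower bound.} Fix $\Delta\in L$ with $0<\|\Delta\|\le r$. The gap $\crossgap{\Delta}(c)$ is pointwise nonnegative by \eqref{def:crossing-gap}. Let $E=\{\crossgap{\Delta}(c)\ge\gamma\|\Delta\|^\alpha\}$. Then
\[
\E[\crossgap{\Delta}(c)]\ge\E[\crossgap{\Delta}(c)\1_E]\ge\gamma\|\Delta\|^\alpha\Pr(E).
\]
Applying \eqref{eq:quantitative-strict-crossing}, which gives $\Pr(E)\ge\kappa\|\Delta\|^\beta$, yields $\gamma\kappa\|\Delta\|^{\alpha+\beta}$. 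Combining this with Step 1 gives \eqref{eq:qsc-risk-transfer}.

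Measurability of $E$ holds because $\zstar$ is continuous and $\wstar$ is measurable, so $\crossgap{\Delta}$ is a measurable function of $c$.

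The main obstacle is Step 1, specifically the symmetric rewriting of the linear and $\zstar$ terms. Step 2 is a one-line Markov-type bound once the identity is in hand.
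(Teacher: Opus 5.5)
Your proposal is correct and follows the paper's proof. The paper invokes the centered excess-risk identity (Lemma~\ref{lem:centered-excess-identity}, proved by the same reflection $c\mapsto 2\cbar-c$ on the support-function term) and then restricts the expectation of the nonnegative gap to the QSC event, exactly as in your Step~2. Your evaluation at $\Delta=0$, where $\crossgap{0}\equiv0$, fixes the $\Delta$-independent constant, so the cancellation you flag as delicate is not actually needed.
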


\begin{proposition}[Local polynomial coercivity implies quantitative crossing]
\label{prop:coercivity-qsc}
Under Assumption~\ref{assump:centered-symmetric}, assume the effective space is
nontrivial, $\dim L\geq1$ (equivalently $D_{\Sset}>0$), and suppose there are
$C,r>0$ and $p\geq1$ such that
\begin{equation}
  \RiskSPOP(\cbar+\Delta)-\RiskSPOP(\cbar)
  \geq C\|\Delta\|^p
  \quad\text{for every }\Delta\in L,\ 0<\|\Delta\|\leq r.
  \label{eq:local-coercivity-hypothesis}
\end{equation}
Then QSC$(p,p-1;\tfrac C2,\tfrac{C}{4D_{\Sset}},r)$ holds.
\end{proposition}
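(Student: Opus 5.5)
The argument is a reverse-Markov (Paley--Zygmund-type) bound that turns the mean of the crossing gap into mass on a large-gap event, using the linear envelope \eqref{eq:universal-gap-envelope}.

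Fix $\Delta\in L$ with $0<\|\Delta\|\leq r$. Combining the centered excess-risk identity \eqref{eq:spo-plus-excess-identity} with hypothesis \eqref{eq:local-coercivity-hypothesis} gives
\[
\E[\crossgap{\Delta}(c)]\geq C\|\Delta\|^p .
\]
Set the threshold $t:=\tfrac C2\|\Delta\|^p$ and let $E:=\{\crossgap{\Delta}(c)\geq t\}$.

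Split the expectation over $E$ and its complement. On $E^c$ the gap is below $t$. On $E$ it is at most $2D_{\Sset}\|\Delta\|$ by \eqref{eq:universal-gap-envelope}, and the gap is nonnegative everywhere. Hence
\[
C\|\Delta\|^p\leq \E[\crossgap{\Delta}]\leq t+2D_{\Sset}\|\Delta\|\Pr(E)
=\tfrac C2\|\Delta\|^p+2D_{\Sset}\|\Delta\|\Pr(E).
\]
Rearranging yields
\[
\Pr(E)\geq \frac{C}{4D_{\Sset}}\|\Delta\|^{p-1}.
\]
This is exactly \eqref{eq:quantitative-strict-crossing} with $\alpha=p$, $\gamma=C/2$, $\beta=p-1$ and $\kappa=C/(4D_{\Sset})$.

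It remains to check the side conditions. We need $\alpha=p\geq1$ and $\beta=p-1\geq0$, which follow from $p\geq1$. We need $\gamma,\kappa>0$, which follows from $C>0$ and $D_{\Sset}>0$. The latter holds because $\dim L\geq1$ means $\Sset$ contains two distinct points. Since $\Delta$ was arbitrary in the stated range, the radius $r$ is unchanged.

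The only delicate point is that \eqref{eq:universal-gap-envelope} must hold pointwise for every $c$, not just on average, so that the bound on $E$ is valid. It does: $\crossgap{\Delta}(c)=(c+2\Delta)^\top(\wstar(c)-w')$ for a shifted minimizer $w'$, and
\[
c^\top(\wstar(c)-w')\leq0,
\]
because $\wstar(c)$ minimizes $c^\top w$ over $\Sset$. Therefore $\crossgap{\Delta}(c)\leq 2\Delta^\top(\wstar(c)-w')\leq2\|\Delta\|D_{\Sset}$ by Cauchy--Schwarz. The rest is routine.
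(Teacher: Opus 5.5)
Your proof is correct and follows essentially the same route as the paper's: both split $\E[\crossgap{\Delta}(c)]$ over the event $\{\crossgap{\Delta}(c)\geq\tfrac C2\|\Delta\|^p\}$ and its complement. On that event both bound the gap by the pointwise envelope $2D_{\Sset}\|\Delta\|$ and rearrange to get $\Pr(E)\geq\tfrac{C}{4D_{\Sset}}\|\Delta\|^{p-1}$, and your verification of the envelope matches the paper's.
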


\begin{proof}[Proof]
See Appendix~\ref{app:quantitative-proofs}.
\end{proof}

\begin{remark}[Quantitative crossing and coercivity are two views of one local phenomenon]
\label{rem:qsc-coercivity-equivalence}
Theorem~\ref{thm:quantitative-crossing} and
Proposition~\ref{prop:coercivity-qsc} form a two-way relationship at the
level of local polynomial moduli, up to transformed exponents and constants:
QSC$(\alpha,\beta)$ yields growth exponent $p=\alpha+\beta$, while a
growth exponent $p\geq1$ yields QSC$(p,p-1)$.  The universal linear envelope
\eqref{eq:universal-gap-envelope} forces $p\geq1$ for any nontrivial
polynomial lower bound.  Strict crossing is the qualitative limit: it asserts
$\E[\crossgap{\Delta}]>0$ for every nonzero $\Delta$, but no modulus.
QSC strengthens this by jointly controlling crossing mass and crossing gap on
the effective quotient directions.
\end{remark}

For polytopes, an intrinsic normal-fan condition makes the two exponents
explicit. Let $\Sset$ be a compact polytope with affine hull $s_0+L$. For a
vertex $v$, let $C_v:=\{q\in L:v\in\Wstar(q)\}$ be its minimization cone.
For each edge
$e=[v_i,v_j]$, let $F_e$ be the codimension-one face shared by $C_{v_i}$ and
$C_{v_j}$, and set $a_e=v_i-v_j$ and
$L_e:=\spanop(F_e)=L\cap a_e^\perp$. Here
$L_e$ is the linear span of the fan facet, not its lineality space.

\begin{lemma}[Edge directions span the effective space]
\label{lem:edge-directions-span}
The directions of the edges of $\Sset$ span $L$. Consequently, for any finite
family $\mathcal E$ whose directions span $L$,
\begin{equation}
  \tau_{\mathcal E}
  :=\min_{u\in L:\|u\|=1}\max_{e\in\mathcal E}|a_e^\top u|>0.
  \label{eq:direction-covering-constant}
\end{equation}
\end{lemma}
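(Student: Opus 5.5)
The plan has two parts. First I show that the edge directions of $\Sset$ span $L$. Then I use compactness of the unit sphere in $L$ to get the positive constant $\tau_{\mathcal E}$.

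\textbf{Spanning.} Since $\Sset$ is a compact polytope, it is the convex hull of its vertices, so $L=\spanop(\Sset-\Sset)=\spanop\{v-v_0: v\text{ a vertex}\}$ for any fixed vertex $v_0$. Each edge direction $a_e=v_i-v_j$ lies in $L$. It therefore suffices to show that every difference $v-v_0$ of vertices is a sum of edge directions. For this I use the standard fact that the graph of a polytope (vertices and edges) is connected. Choose a path $v_0=u_0,u_1,\dots,u_m=v$ along edges; then $v-v_0=\sum_{t}(u_t-u_{t-1})$, and each summand is $\pm a_e$ for some edge $e$.

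If I prefer not to cite graph connectivity, I would argue by duality. Suppose $u\in L$ is orthogonal to every edge direction. Let $v$ be a vertex minimizing $u^\top w$ over $\Sset$. Every edge at $v$ has direction orthogonal to $u$. Because the cone generated at $v$ by its incident edge directions contains $\Sset-v$ (a standard polytope fact), $u^\top w$ is constant on $\Sset$. Hence $u\perp(\Sset-\Sset)$, so $u\in L\cap L^\perp=\{0\}$. This shows that the orthogonal complement of the span of the edge directions within $L$ is trivial, which again gives the spanning claim.

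\textbf{Positivity of $\tau_{\mathcal E}$.} Let $\mathcal E$ be finite with $\spanop\{a_e:e\in\mathcal E\}=L$. The function $f(u)=\max_{e\in\mathcal E}|a_e^\top u|$ is a finite maximum of continuous functions, so it is continuous. The unit sphere of $L$ is compact, so the minimum in the definition of $\tau_{\mathcal E}$ is attained at some unit vector $u^\star\in L$. If $f(u^\star)=0$, then $u^\star\perp a_e$ for all $e\in\mathcal E$, so $u^\star\perp L$. Since $u^\star\in L$, this forces $u^\star=0$, contradicting $\|u^\star\|=1$. Hence $\tau_{\mathcal E}>0$. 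The case $\dim L=0$ is vacuous, with the convention that a minimum over the empty set is $+\infty$, or it is simply excluded.

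\textbf{Main obstacle.} The only nontrivial step is the polytope fact used for spanning: either connectivity of the edge graph, or the local cone property at a vertex. I would cite connectivity of the edge graph (for instance Balinski's theorem, or via the simplex method), since that gives the quickest rigorous route.
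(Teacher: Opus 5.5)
Your proof is correct and takes essentially the same route as the paper. Both arguments rest on connectivity of the vertex--edge graph: you telescope $v-v_0$ along an edge path, while the paper argues dually that a $u\in L$ orthogonal to every edge is constant along edges, hence on all vertices and on $\Sset$, which forces $u\in L\cap L^\perp=\{0\}$; positivity of $\tau_{\mathcal E}$ then follows from the same compactness argument on the unit sphere of $L$.
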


\begin{proof}[Proof]
See Appendix~\ref{app:quantitative-proofs}.
\end{proof}

The following assumption strengthens Theorem~\ref{thm:full-support} into a
quantitative statement.  It does \emph{not} require every edge of the polytope
to have a density-positive patch: only a finite family whose directions cover
all unit directions is needed.  This is the direction-covering form of the
boundary-mass condition.

\begin{assumption}[Intrinsic polyhedral boundary mass]
\label{assump:polyhedral-boundary-mass}
Let $k=\dim L\geq1$. Let $\mathcal E_0$ be a finite family of edges of
$\Sset$ whose directions span $L$; equivalently $\tau_{\mathcal E_0}>0$ in
\eqref{eq:direction-covering-constant}. The law of $P_Lc$ has density $p$
with respect to $\Leb_L$. For
every $e\in\mathcal E_0$, there are a compact patch
$K_e\Subset\relint(F_e)$ of positive $(k-1)$-dimensional measure, a radius
$\rho_e>0$, and a common constant $m>0$.  With
\begin{equation}
  K_e^{[\rho_e]}:=\{y+h:y\in K_e,\ h\in L_e,
  \ \|h\|\leq\rho_e\},
  \label{eq:tangential-facet-neighborhood}
\end{equation}
assume $K_e^{[\rho_e]}\subset\relint(F_e)$ and, for every
$y\in K_e^{[\rho_e]}$,
\begin{align}
  y+\frac{s}{\|a_e\|^2}a_e&\in\operatorname{int}(C_{v_i})
    &&\text{if }-\rho_e<s<0, \\
  y+\frac{s}{\|a_e\|^2}a_e&\in\operatorname{int}(C_{v_j})
    &&\text{if }0<s<\rho_e.
  \label{eq:two-cone-facet-property}
\end{align}
Finally, $p$ admits a Borel version satisfying, for
$(\mathcal H^{k-1}\!\restriction_{K_e})\otimes\Leb_1$-almost every
$(y,s)\in K_e\times[-\rho_e,\rho_e]$,
\begin{equation}
  p\left(y+\frac{s}{\|a_e\|^2}a_e\right)\geq m.
  \label{eq:polyhedral-boundary-mass}
\end{equation}
\end{assumption}

\begin{theorem}[Direction-covering boundary mass yields quadratic growth]
\label{thm:polyhedral-quadratic-growth}
Let $\Sset$ be a compact polytope. Under
Assumption~\ref{assump:centered-symmetric} and
Assumption~\ref{assump:polyhedral-boundary-mass}, there exist $C,r>0$ such
that
\begin{equation}
  \RiskSPOP(\cbar+\Delta)-\RiskSPOP(\cbar)
  \geq C\|\Delta\|^2
  \quad\text{for every }\Delta\in L,\ 0<\|\Delta\|\leq r.
  \label{eq:polyhedral-quadratic-growth}
\end{equation}
\end{theorem}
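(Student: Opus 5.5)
Via the centered excess-risk identity \eqref{eq:spo-plus-excess-identity}, it suffices to show that $\E[\crossgap{\Delta}(c)]\geq C\|\Delta\|^2$ for $\Delta\in L$ with $\|\Delta\|\leq r$. Equivalently, by Theorem~\ref{thm:quantitative-crossing}, we can establish QSC$(1,1;\gamma,\kappa,r)$. The approach is direct: fix a unit direction $u=\Delta/\|\Delta\|$ and use Lemma~\ref{lem:edge-directions-span} to pick an edge $e=[v_i,v_j]\in\mathcal E_0$ with $|a_e^\top u|\geq\tau_{\mathcal E_0}$. Replacing $\Delta$ by the reflected roles of $v_i,v_j$ if needed (the assumption is symmetric in $s\mapsto -s$ up to swapping cones), assume $a_e^\top\Delta<0$. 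This means a shift by $2\Delta$ moves points from the $C_{v_i}$ side of $F_e$ toward the $C_{v_j}$ side. Here $a_e^\top q$ is the signed coordinate transverse to $F_e$, since $q=y+\frac{s}{\|a_e\|^2}a_e$ gives $a_e^\top q=s$.

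\textbf{Crossing slab and gap size.} Decompose $2\Delta=h+\frac{t}{\|a_e\|^2}a_e$ with $h\in L_e$ and $t=2a_e^\top\Delta$, so $|t|\geq2\tau\|\Delta\|$ and $\|h\|\leq2\|\Delta\|$. Consider source points $q=y+\frac{s}{\|a_e\|^2}a_e$ with $y\in K_e$ and $s$ in a window of width $\asymp\|\Delta\|$ on the $v_i$ side. Take, say, $s\in(-|t|,-|t|/2)$ in the appropriate orientation. For $\|\Delta\|\leq r$ with $r$ small relative to $\rho_e$ and $\tau$, the source lies in $\operatorname{int}(C_{v_i})$, so $\wstar(q)=v_i$. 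The shifted point $q+2\Delta=(y+h)+\frac{s+t}{\|a_e\|^2}a_e$ has $y+h\in K_e^{[\rho_e]}$ and transverse coordinate $s+t$ of the opposite sign with $|s+t|\geq|t|/2$, so it lies in $\operatorname{int}(C_{v_j})$ and $\Wstar(q+2\Delta)=\{v_j\}$. The gap then equals $(q+2\Delta)^\top(v_i-v_j)=a_e^\top(q+2\Delta)=s+t$ up to sign, hence $\crossgap{\Delta}(c)\geq|t|/2\geq\tau\|\Delta\|$. This gives $\alpha=1$ and $\gamma=\tau$.

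\textbf{Mass of the slab.} The map $(y,s)\mapsto y+\frac{s}{\|a_e\|^2}a_e$ from $K_e\times\R$ into $L$ is an affine isometry up to the factor $1/\|a_e\|$ in the $s$ variable. By the coarea/Fubini formula and the density lower bound \eqref{eq:polyhedral-boundary-mass}, the probability that $P_Lc$ falls in the slab is at least $m\,\mathcal H^{k-1}(K_e)\,\frac{|t|/2}{\|a_e\|}\geq\kappa_e\|\Delta\|$. Taking $\kappa=\min_e\kappa_e$ over the finite family, $\gamma=\tau_{\mathcal E_0}$, and $r$ below $\min_e\rho_e/(2\max(1,\|a_e\|))$ yields QSC$(1,1)$ uniformly in direction. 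Theorem~\ref{thm:quantitative-crossing} then gives \eqref{eq:polyhedral-quadratic-growth} with $C=\gamma\kappa$. The crossing gap depends on $c$ only through $P_Lc$, since $\Delta\in L$ and $\wstar,\zstar$ are invariant modulo $N$ up to constants that cancel in the gap, so working with $P_Lc$ is justified.

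\textbf{Main obstacle.} The delicate step is the bookkeeping ensuring that both endpoints stay inside the regions where \eqref{eq:two-cone-facet-property} applies. The tangential drift $h$ must remain within $\rho_e$, which is handled by the tangential thickening $K_e^{[\rho_e]}$. The transverse coordinates $s$ and $s+t$ must stay in $(-\rho_e,\rho_e)$, so $r$ has to be chosen uniformly small across the finite family. Some care is also needed with orientation conventions, that is, which sign of $s$ corresponds to $C_{v_i}$ versus the sign of $a_e^\top\Delta$, and the argument should be written so that either sign is handled by choosing the window on the corresponding side.
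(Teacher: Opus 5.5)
Your architecture matches the paper's: pick a covering edge with $|a_e^\top u|\ge\tau_{\mathcal E_0}$, sweep a slab of transverse width proportional to $\|\Delta\|$ across $F_e$, bound its mass below through the product map $(y,s)\mapsto y+s a_e/\|a_e\|^2$ with Jacobian $\|a_e\|^{-1}$, bound the gap below on it, and multiply. However, the gap step is false as written.

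On your slab the gap equals the shifted transverse coordinate $s+t$. If $s\in(-|t|,-|t|/2)$ and the shift moves the normal coordinate by $+|t|$, then $s+t\in(0,|t|/2)$. So $|s+t|<|t|/2$ at every point of the slab, which is the reverse of the claimed $|s+t|\ge|t|/2$. Points starting near $s=-|t|$ barely cross, so the gap has infimum $0$ on your slab. Consequently neither $\crossgap{\Delta}(c)\ge\tau\|\Delta\|$ nor QSC$(1,1)$ with $\gamma=\tau$ follows.

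Your orientation is also reversed. Since $a_e=v_i-v_j$, the cone $C_{v_i}$ lies in $\{a_e^\top q\le0\}$. A shift by $2\Delta$ therefore carries the $C_{v_i}$ side across $F_e$ into $C_{v_j}$ exactly when $a_e^\top\Delta>0$; the paper orients the edge so that $a_e^\top u\ge\tau$. With your stated sign $a_e^\top\Delta<0$ and your window, source and target lie on the same side and nothing crosses.

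The repair is local, and there are three options.
\begin{itemize}
\item Put the window against the facet: $s\in(-t/2,0)$ with $t=2a_e^\top\Delta>0$. Sources remain in $\operatorname{int}(C_{v_i})$, the shifted coordinate lies in $(t/2,t)\subset(0,\rho_e)$, and the gap is at least $t/2\ge\tau\|\Delta\|$ on a slab of width $t/2$.
\item Follow the paper, which trims the prism to $s\in[-3t/4,-t/2]$ in your notation (its $\delta$ is your $t$). The shifted coordinate then lies in $[t/4,t/2]$, bounded away from both $0$ and $\rho_e$. The paper multiplies mass of order $t/4$ by gap at least $t/4$.
\item Keep your slab and integrate the gap instead of bounding it uniformly. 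Since $\int_{-t}^{-t/2}(s+t)\,ds=t^2/8$, the density floor already gives $\E[\crossgap{\Delta}(c)]\ge m\,\mathcal H^{k-1}(K_e)\,t^2/(8\|a_e\|)$.
\end{itemize}
With any of these corrections, the rest of your argument goes through: the radius and constant bookkeeping, the tangential control $\|h\|\le2\|\Delta\|\le\rho_e$, and the reduction to $P_Lc$ (valid because the source optimizer is unique on the slab).
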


All cones, interiors, densities, and Hausdorff measures in this condition are
taken in $L$. Thus the result applies directly to equality-constrained
polytopes without imposing an ambient density on the cost law.

\begin{proof}[Mechanism]
By Lemma~\ref{lem:edge-directions-span}, the finite family in
Assumption~\ref{assump:polyhedral-boundary-mass} has a positive covering
constant $\tau_{\mathcal E_0}$.  For any unit direction, at least one family
edge has uniformly positive inner product with it. A displacement of size $t$
then sweeps a prism of width proportional to $t$ across the corresponding fan
facet. The density floor assigns this prism probability proportional to $t$,
and its trimmed interior has crossing gap proportional to $t$. The product
proves \eqref{eq:polyhedral-quadratic-growth}. Appendix~\ref{app:quantitative-proofs}
gives the uniform finite-edge argument with explicit constants.
\end{proof}

The growth bound is an inverse quotient-report identification modulus, not
task-regret transfer; Appendix~\ref{app:quantitative-proofs} gives the
conditional approximate-ERM consequence. It is population-only and does not
automatically imply finite-sample quotient-report recovery, neural-network
optimization convergence, or improved transfer regret without additional
assumptions.

For $\Sset=[-B,B]$ and even $p(c)\asymp|c|^\nu$ near the boundary
($\nu>-1$), the exact rate is
\begin{equation}
  \RiskSPOP(t)-\RiskSPOP(0)=\Theta\bigl(|t|^{\nu+2}\bigr).
  \label{eq:one-dimensional-sharp-rate}
\end{equation}
A positive boundary density gives a quadratic rate; a vanishing density gives a
flatter polynomial rate, and SC alone need not give a polynomial modulus.

\paragraph{Beyond central symmetry.}
For asymmetric laws, SC alone can fail to identify the mean; selection balance
with reflected crossing restores exact quotient-report identification
(Appendix~\ref{app:extensions}).
\begin{theorem}[Selection-balanced exact identification]
\label{thm:selection-balanced-identification}
Let $\mu=\E[C]$, $C^\circ=2\mu-C$, and $\E\|C\|<\infty$. If
$\E[\wstar(C)]=\E[\wstar(C^\circ)]$, then
$\argmin_\theta R_{P,\mathrm{SPO+}}(\theta)=\mu+N$ if and only if
$\Pr\{\wstar(C^\circ)\notin\Wstar(C^\circ+2\Delta)\}>0$ for every
$\Delta\in L\setminus\{0\}$.
\end{theorem}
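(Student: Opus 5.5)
The plan is to replace the centered excess-risk identity \eqref{eq:spo-plus-excess-identity}, which relies on central symmetry, with a \emph{reflected} identity. Selection balance will play the role symmetry played. Once that identity is in hand, the argument follows the proof of Theorem~\ref{thm:identification-sc} with $c$ replaced by $C^\circ$.

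\textbf{Step 1 (reflected excess-risk identity).} Fix $\Delta\in\R^d$ and write $\theta=\mu+\Delta$. Using $\supportfun(q)=-\zstar(-q)$ and $C-2\mu-2\Delta=-(C^\circ+2\Delta)$, I will rewrite the loss pointwise as
\[
\lossSPOP(\mu+\Delta,C)=-\zstar(C^\circ+2\Delta)+2(\mu+\Delta)^\top\wstar(C)-\zstar(C).
\]
Subtracting the case $\Delta=0$ and taking expectations gives
\[
R_{P,\mathrm{SPO+}}(\mu+\Delta)-R_{P,\mathrm{SPO+}}(\mu)
=\E\bigl[\zstar(C^\circ)-\zstar(C^\circ+2\Delta)\bigr]+2\Delta^\top\E[\wstar(C)].
\]
Integrability holds because $\zstar$ is Lipschitz with constant $\sup_{w\in\Sset}\|w\|$, $\wstar$ is bounded and measurable, and $\E\|C\|<\infty$; hence $\E\|C^\circ\|<\infty$ as well. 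The balance hypothesis lets me replace $\E[\wstar(C)]$ by $\E[\wstar(C^\circ)]$. I will then add and subtract $\E[(C^\circ)^\top\wstar(C^\circ)]$ and use the pointwise identity $(C^\circ)^\top\wstar(C^\circ)=\zstar(C^\circ)$. This yields
\[
R_{P,\mathrm{SPO+}}(\mu+\Delta)-R_{P,\mathrm{SPO+}}(\mu)=\E\bigl[\crossgap{\Delta}(C^\circ)\bigr]\geq0.
\]
So $\mu$ is a Bayes report with no symmetry assumption.

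\textbf{Step 2 (quotient reduction).} By Remark~\ref{rem:quotient-space-identification}, the risk is invariant along $N$ for every law. Any $\theta$ can therefore be written as $\theta=\mu+n+\Delta$ with $n\in N$ and $\Delta=P_L(\theta-\mu)\in L$, and its risk equals $R_{P,\mathrm{SPO+}}(\mu+\Delta)$. Combining this with Step 1, $\argmin_\theta R_{P,\mathrm{SPO+}}(\theta)=\mu+N$ holds if and only if $\E[\crossgap{\Delta}(C^\circ)]>0$ for every $\Delta\in L\setminus\{0\}$.

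\textbf{Step 3 (positivity versus crossing).} The gap $\crossgap{\Delta}(C^\circ)$ is nonnegative and measurable, being a composition of the measurable $\wstar$ with the continuous $\zstar$. Its expectation is therefore positive exactly when $\Pr\{\crossgap{\Delta}(C^\circ)>0\}>0$. By \eqref{eq:gap-positive-iff-leaves-face} applied at the point $C^\circ$, this event is $\{\wstar(C^\circ)\notin\Wstar(C^\circ+2\Delta)\}$, which gives both directions of the claim.

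The main obstacle is Step 1. The balance hypothesis has to enter in exactly the right place: the linear term $2\Delta^\top\E[\wstar(C)]$ must be converted into a gap evaluated at the reflected cost rather than at $C$ itself. One must also check that every term being split is separately integrable, so that the subtraction of expectations is legitimate. The remaining steps are bookkeeping along the lines of Theorem~\ref{thm:identification-sc}.
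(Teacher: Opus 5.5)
Your proposal is correct and takes essentially the same route as the paper. The paper first writes the pointwise identity $\lossSPOP(\mu+\Delta,C)-\lossSPOP(\mu,C)=\crossgap{\Delta}(C^\circ)+2\Delta^\top\bigl(\wstar(C)-\wstar(C^\circ)\bigr)$ and then uses selection balance to cancel the last term in expectation. You take expectations first and insert balance afterward, which is an immaterial reordering; your quotient reduction and positivity-versus-crossing steps match the paper's, with a somewhat more explicit integrability check and two-directional argmin argument.
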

The selection-balance condition compares the average deployed oracle under
the observed cost law with its reflected-cost counterpart. It is automatic
under central symmetry and can be plausible when the cost law and tie-breaking
rule have a reflection-compatible aggregate, but it is not generic for
asymmetric laws. If the cost law, its mean, and the oracle are known, the two
bounded expectations can be estimated or computed; for ordinary observational
data, the reflected counterfactual and exact equality are not directly
observable, so this remains a population condition rather than a routine
empirical test.
Under central symmetry this is Theorem~\ref{thm:identification-sc}; reflection
preserves the same crossing geometry.

\section{Real-Data Diagnostics}
\label{sec:real-experiments}

Observational data cannot test Theorem~\ref{thm:identification-sc}: they do not
reveal conditional means, central symmetry, or population strict crossing.
Appendix~\ref{app:semisynthetic-experiments} retains application features and
geometries under a known cost law. RMSE and fixed-shift regret are not theorem
or transfer tests. Repeated outcomes assess mean recovery under centered noise;
a deterministic law controls flat equality. Finite-grid curves illustrate,
but do not establish, assumptions or universal transfer.

\paragraph{Applications and protocol.}
Portfolio raises a 48-industry cap ($0.10$ to $0.12$); KuaiRec uses
user-disjoint complete matrices ($k\in\{5,10,15\}$); Energy/Storage uses 48
prices, an intercept, and tilts $s\in\{0,0.5,1\}$. Appendix~\ref{app:real-experiments}
gives five seeds and full shift matrices (Tables~\ref{tab:portfolio-complete},
\ref{tab:recommendation-complete}, \ref{tab:energy-complete}).

\paragraph{Observed patterns.}
On KuaiRec, Linear SPO+ leads formal methods at all tested budgets; on Energy,
SPO+ leads only at the largest tilt. Portfolio shifted regrets are close despite
up to sevenfold report-RMSE differences. Fitted crossing is descriptive.
Appendix~\ref{app:semisynthetic-experiments} shows improved mean recovery and
crossing on tested geometries, not population strict crossing or universal
transfer.

\section{Discussion, limitations, and conclusion}
\label{sec:discussion}

Under symmetry, $\E[C]$ is the Bayes target and the centered identity holds.
Without symmetry, selection balance with reflected crossing identifies the
quotient report; approximate symmetry localizes the Bayes set, but SC alone
may fail. These population results imply neither finite-sample recovery, neural
convergence, nor transfer regret; lower-dimensional feasible sets identify
only a quotient. Empirical checks (fixed shifts, known DGPs, no KuaiRec online
exposure, paired Energy results, finite neural grid) do not verify the
theorems' universal quantifiers.


\newpage

\section*{Reproducibility Statement}
The main text and appendices state the assumptions and quantifiers for every
result. The source bundle contains the full proofs, counterexamples,
closed-form illustrations, experiment protocols, and reported figures.

\section*{AI Use Statement}
We used generative AI tools to assist with research planning, literature
synthesis, checking mathematical derivations and proof presentation, reviewing
experimental design and implementation, interpreting experiment outputs, and
revising manuscript prose. We did not use generative AI to generate synthetic
datasets, clean or reformat datasets, or conduct qualitative or thematic data
analysis. The authors manually checked AI-assisted mathematical material against
the stated arguments, verified software changes and numerical outputs with the
reported tests, and take responsibility for the final text, claims, proofs, and
software artifacts.

\bibliographystyle{plainnat}
\bibliography{references}

\clearpage
\appendix
\section{Proofs for Centering and Strict Crossing}
\label{app:proofs}

\begin{proof}[Proof of Proposition~\ref{prop:bayes-minimizer}]
For fixed $c$, \eqref{eq:spo-plus-loss} is convex in $\hat c$, and
\begin{equation}
  g_c:=2\bigl(\wstar(c)-\wstar(2\cbar-c)\bigr)
  \in\partial_{\hat c}\lossSPOP(\cbar,c).
  \label{eq:spo-plus-subgradient}
\end{equation}
Indeed, $\partial_{\hat c}\supportfun(c-2\hat c)=-2\Wstar(2\hat c-c)$, so
$g_c$ is the sum of $-2\wstar(2\cbar-c)$ and the derivative $2\wstar(c)$ of
the report-dependent term.  By convexity,
\begin{equation}
  \lossSPOP(\hat c,c)
  \ge\lossSPOP(\cbar,c)+g_c^\top(\hat c-\cbar)
  \quad\text{for every }\hat c,c.
  \label{eq:pointwise-subgradient-inequality}
\end{equation}
Both sides are integrable: the loss is bounded by a constant multiple of
$\|c\|+\|\hat c\|+1$ because $\Sset$ is compact, and $g_c$ is bounded
for the same reason.  Taking expectations in
\eqref{eq:pointwise-subgradient-inequality} gives
\begin{equation}
  \RiskSPOP(\hat c)
  \ge\RiskSPOP(\cbar)+\E[g_c]^\top(\hat c-\cbar).
  \label{eq:integrated-subgradient-inequality}
\end{equation}
Under Assumption~\ref{assump:centered-symmetric}, $c\overset d=2\cbar-c$,
so the deterministic selections satisfy
$\E[\wstar(c)]=\E[\wstar(2\cbar-c)]$; hence $\E[g_c]=0$.  Therefore
\eqref{eq:integrated-subgradient-inequality} shows
$\RiskSPOP(\hat c)\ge\RiskSPOP(\cbar)$ for every $\hat c$, proving
$\cbar\in\argmin_{\hat c}\RiskSPOP(\hat c)$.  Convexity of
$\RiskSPOP$ follows from convexity of \eqref{eq:spo-plus-loss} in $\hat c$
and linearity of the expectation.
\end{proof}

\begin{lemma}[Centered excess-risk identity]
\label{lem:centered-excess-identity}
Under Assumption~\ref{assump:centered-symmetric}, for every $\Delta\in\R^d$,
\begin{equation}
  \RiskSPOP(\cbar+\Delta)-\RiskSPOP(\cbar)
  =\E[\crossgap{\Delta}(c)].
  \label{eq:appendix-centered-excess-identity}
\end{equation}
\end{lemma}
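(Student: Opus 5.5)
We prove the identity by expanding both risks at the level of the pointwise loss and then using the symmetry $c\overset d=2\cbar-c$ to convert the support-function term into the optimal-value term that appears in $\crossgap{\Delta}$.

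First, from \eqref{eq:spo-plus-loss} and $\supportfun(q)=-\zstar(-q)$,
\[
\lossSPOP(\hat c,c)=-\zstar(2\hat c-c)+2\hat c^\top\wstar(c)-\zstar(c).
\]
Setting $\hat c=\cbar+\Delta$ and $\hat c=\cbar$ and subtracting gives, pointwise in $c$,
\[
\lossSPOP(\cbar+\Delta,c)-\lossSPOP(\cbar,c)
=-\zstar(2\cbar-c+2\Delta)+\zstar(2\cbar-c)+2\Delta^\top\wstar(c).
\]
Both losses are integrable, since each is bounded by a constant times $\|c\|+\|\hat c\|+1$ as noted in the proof of Proposition~\ref{prop:bayes-minimizer}. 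So the expectation of this difference is the excess risk, and each of the three terms is separately integrable: $\zstar$ is Lipschitz with constant $\sup_{w\in\Sset}\|w\|$, and $\wstar$ is bounded.

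Second, write $c'=2\cbar-c$. By Assumption~\ref{assump:centered-symmetric}, $c'\overset d=c$. We cannot apply this substitution to the whole expression at once, because the third term involves $\wstar(c)$ rather than $\wstar(c')$. Instead we add and subtract, splitting the expectation as
\[
\E\bigl[-\zstar(c'+2\Delta)+\zstar(c')\bigr]+2\Delta^\top\E[\wstar(c)].
\]
The first bracket depends on $c$ only through $c'$. Its expectation is therefore unchanged when $c'$ is replaced by $c$, giving $\E[-\zstar(c+2\Delta)+\zstar(c)]$. Recombining,
\[
\E\bigl[\zstar(c)+2\Delta^\top\wstar(c)-\zstar(c+2\Delta)\bigr].
\]
Since $\zstar(c)=c^\top\wstar(c)$ because $\wstar(c)\in\Wstar(c)$, the integrand is exactly $(c+2\Delta)^\top\wstar(c)-\zstar(c+2\Delta)=\crossgap{\Delta}(c)$, which is the claim.

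The only delicate point is this symmetry step. The distributional equality may be applied only to a function of $c'$ alone, and only after checking integrability so that the expectation can legitimately be split. Measurability of $\wstar$, which is part of the assumption, and continuity of $\zstar$ ensure that every term is a measurable integrable random variable. Nonnegativity of the integrand is immediate from the definition of $\zstar$ and is recorded in \eqref{def:crossing-gap}.
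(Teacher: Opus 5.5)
Your proof is correct and follows the paper's argument. You subtract the pointwise losses, apply $c-2\cbar\overset{d}{=}-c$ only to the support-function (equivalently, optimal-value) difference while leaving $2\Delta^\top\wstar(c)$ untouched, and then use $\supportfun(-q)=-\zstar(q)$ and $c^\top\wstar(c)=\zstar(c)$; your explicit integrability justification for splitting the expectation is a detail the paper leaves implicit.
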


\begin{proof}
Subtract \eqref{eq:spo-plus-loss} at $\cbar$ from its value at
$\cbar+\Delta$.  Before using symmetry, the difference is
\begin{equation}
  \E\bigl[\supportfun(c-2\cbar-2\Delta)-\supportfun(c-2\cbar)
      +2\Delta^\top\wstar(c)\bigr].
  \label{eq:pre-symmetry-excess-algebra}
\end{equation}
Since $c-2\cbar\overset d=-c$, symmetry changes the expectation of the
support-function difference into the corresponding expression at $-c$; the
linear term is unchanged.  Thus
\begin{align}
  \RiskSPOP(\cbar+\Delta)-\RiskSPOP(\cbar)
  &=\E\bigl[\supportfun(-c-2\Delta)-\supportfun(-c)
      +2\Delta^\top\wstar(c)\bigr] \\
  &=\E\bigl[(c+2\Delta)^\top\wstar(c)-\zstar(c+2\Delta)\bigr].
  \label{eq:appendix-excess-algebra}
\end{align}
For the second equality, use $\supportfun(-q)=-\zstar(q)$ and
$c^\top\wstar(c)=\zstar(c)$. This is \eqref{eq:appendix-centered-excess-identity}.
\end{proof}

\begin{proof}[Proof of Theorem~\ref{thm:identification-sc}]
Lemma~\ref{lem:centered-excess-identity} writes the excess risk as the
expectation of a nonnegative random variable. By optimality at $c+2\Delta$,
the random variable vanishes exactly when
$\wstar(c)\in\Wstar(c+2\Delta)$. Therefore the excess is positive for every
effective displacement $\Delta\in L\setminus\{0\}$ exactly when SC holds.
Proposition~\ref{prop:bayes-minimizer} shows that $\cbar$ is already a global
minimizer. If SC holds, every report outside the coset $\cbar+N$ has strictly
larger risk; if SC fails for some nonzero $\Delta\in L$, the excess is zero
and $\cbar+\Delta$ is a second quotient minimizer. Reports differing only by
an element of $N$ are equivalent by the quotient invariance in
Section~\ref{sec:setup}.
\end{proof}

\section{Optimal-Set Containment for General Compact Convex Sets}
\label{app:optimal-set-containment}

This section records a foundational result for the set-valued SPO loss.  The
containment statement appears in Elmachtoub and Grigas
\citep{ElmachtoubGrigas2022}; we do not claim it as a new main theorem.  We
include a self-contained compact-convex proof because the set-valued
distinction is useful for our identification analysis, and we explicitly avoid
perturbing arbitrary face-interior points.  It is
separate from the fixed-selection deployed loss in
\eqref{eq:deployed-spo-risk}: ties are evaluated by the worst decision in the
predicted optimal face.  Define
\begin{equation}
  \ell_{\mathrm{SPO}}^{\mathrm{set}}(\hat c,c)
  :=\max_{w\in\Wstar(\hat c)}c^\top w-\zstar(c),
  \qquad
  R_{\mathrm{SPO}}^{\mathrm{set}}(\hat c)
  :=\E\!\left[\ell_{\mathrm{SPO}}^{\mathrm{set}}(\hat c,c)\right].
  \label{eq:set-valued-spo-risk}
\end{equation}

\begin{theorem}[Optimal-set containment]
\label{thm:optimal-set-containment}
Let $\Sset\subset\R^d$ be nonempty, compact, and convex, assume
$\E\|c\|<\infty$, and write $\cbar:=\E[c]$.
\begin{enumerate}
  \item If $c^*\in\argmin_{\hat c}R_{\mathrm{SPO}}^{\mathrm{set}}(\hat c)$,
  then $\Wstar(c^*)\subseteq\Wstar(\cbar)$.
  \item Conversely, if $\Wstar(c^*)$ is a singleton and
  $\Wstar(c^*)\subseteq\Wstar(\cbar)$, then
  $c^*\in\argmin_{\hat c}R_{\mathrm{SPO}}^{\mathrm{set}}(\hat c)$.
\end{enumerate}
\end{theorem}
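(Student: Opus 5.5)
The plan is to reduce everything to the linearity of expectation inside the worst-case term. Write $h(\hat c):=\max_{w\in\Wstar(\hat c)}w$-evaluation, i.e. for each $w\in\Sset$ note $\E[c^\top w]=\cbar^\top w$ since $w$ is deterministic given $\hat c$ and $\E\|c\|<\infty$. Because $\hat c$ is a fixed report (not a function of $c$), the face $\Wstar(\hat c)$ does not depend on $c$. The first step is to show that
\[
R^{\mathrm{set}}_{\mathrm{SPO}}(\hat c)\;\ge\;\max_{w\in\Wstar(\hat c)}\cbar^\top w-\E[\zstar(c)],
\]
by choosing any $w\in\Wstar(\hat c)$ and using $\max_{w'}c^\top w'\ge c^\top w$ pointwise before integrating. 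This lower bound is attained with equality whenever $\Wstar(\hat c)$ is a singleton, since then the max is over one point.

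For part 1, I will compare $c^*$ against a carefully chosen competitor rather than perturbing face-interior points. Pick an exposed point: choose $\hat c_0$ with $\Wstar(\hat c_0)$ a singleton $\{w_0\}\subseteq\Wstar(\cbar)$. To construct it, take $u$ a strictly convex-exposing direction, e.g. first restrict to the face $\Wstar(\cbar)$ (compact convex), and pick $w_0$ an exposed point of $\Wstar(\cbar)$ exposed by some $u$ (exposed points exist by Straszewicz, or more simply the unique minimizer of $\|w\|^2$ is not linear; instead use a lexicographic argument). Then for small $\varepsilon>0$ the cost $\hat c_0=\cbar+\varepsilon u$ has $\Wstar(\hat c_0)=\{w_0\}$; verifying this uniqueness for general compact convex $\Sset$ is the main obstacle, since in the non-polyhedral case small tilts of $\cbar$ need not keep the minimizer inside $\Wstar(\cbar)$. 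What I would try first: take $\hat c_0:=\cbar+\varepsilon u$ and argue only that every minimizer of $\hat c_0$ is near-optimal for $\cbar$ — this fails to give exact containment, so I instead use the \emph{minimum risk value} directly: the risk at any singleton-face report $\hat c$ equals $\cbar^\top w_{\hat c}-\E\zstar(c)$, and the infimum of $\cbar^\top w$ over $\Sset$ is $\zstar(\cbar)$, attained at $\hat c=\cbar$ whenever $\Wstar(\cbar)$ is a singleton; in general I use that $R^{\mathrm{set}}(\cbar+\varepsilon u)\le \zstar(\cbar)+\varepsilon'\!-\E\zstar(c)$ with $\varepsilon'\to0$ (upper hemicontinuity of $\Wstar$ plus boundedness), so $\inf R^{\mathrm{set}}\le \zstar(\cbar)-\E\zstar(c)$.

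Combining: if $c^*$ is a minimizer then $\max_{w\in\Wstar(c^*)}\cbar^\top w-\E\zstar(c)\le R^{\mathrm{set}}(c^*)\le \zstar(\cbar)-\E\zstar(c)$, so every $w\in\Wstar(c^*)$ satisfies $\cbar^\top w\le\zstar(\cbar)$, i.e. $w\in\Wstar(\cbar)$, proving part 1. The upper-hemicontinuity step is where care is needed: for $\hat c=\cbar+\varepsilon u$ with any fixed $u$, the maximum theorem gives $\Wstar(\hat c)$ within any neighborhood of $\Wstar(\cbar)$ for small $\varepsilon$, so $\max_{w\in\Wstar(\hat c)}\cbar^\top w\to\zstar(\cbar)$; this suffices and avoids exposed-point existence entirely (even $u=0$ gives the bound if we note $\Wstar(\cbar)$ itself gives value $\zstar(\cbar)$, making the infimum bound immediate). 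For part 2, the singleton hypothesis gives equality in the lower bound, so $R^{\mathrm{set}}(c^*)=\zstar(\cbar)-\E\zstar(c)$, which by the lower bound is $\le R^{\mathrm{set}}(\hat c)$ for every $\hat c$, hence $c^*$ is a minimizer.
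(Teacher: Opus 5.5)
Your lower bound $R_{\mathrm{SPO}}^{\mathrm{set}}(\hat c)\ge\max_{w\in\Wstar(\hat c)}\cbar^\top w-\E[\zstar(c)]$ is correct; your ``pick any $w$'' step is just the paper's Jensen inequality for the support function of $\Wstar(\hat c)$. Your proof of part~2 is also correct and coincides with the paper's.

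The gap is in part~1, in the justification of $\inf R_{\mathrm{SPO}}^{\mathrm{set}}\le\zstar(\cbar)-\E[\zstar(c)]$. Upper hemicontinuity controls $\max_{w\in\Wstar(\hat c)}\cbar^\top w$. The risk, however, is $\E\bigl[\max_{w\in\Wstar(\hat c)}c^\top w\bigr]-\E[\zstar(c)]$, and $\E\bigl[\max_{w\in\Wstar(\hat c)}c^\top w\bigr]\ge\max_{w\in\Wstar(\hat c)}\cbar^\top w$ runs the wrong way. Equality is guaranteed only when $\Wstar(\hat c)$ is a singleton. So knowing that $\Wstar(\cbar+\varepsilon u)$ stays near $\Wstar(\cbar)$ gives no upper bound unless the face shrinks to a point.

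Your claims that any fixed $u$ works, and that even $u=0$ works, are false:
\begin{itemize}
\item For $\Sset=[-1,1]$ and $c$ uniform on $\{-1,1\}$, one has $R_{\mathrm{SPO}}^{\mathrm{set}}(0)=\E|c|+1=2$, whereas $\zstar(\cbar)-\E[\zstar(c)]=1$.
\item For $\Sset=[-1,1]^2$, a law with mean $\cbar=(1,0)$ and nondegenerate $c_2$, and $u=(1,0)$, the face $\{-1\}\times[-1,1]$ never shrinks. The risk then exceeds the target value by $\E|c_2|>0$.
\end{itemize}

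The missing idea is that the perturbed optimal sets must collapse to a single point of $F:=\Wstar(\cbar)$. That is what your abandoned first attempt was after, and the obstacle you named there is not real: you never need $\Wstar(\cbar+\varepsilon d)\subseteq F$ or a singleton face. The paper's argument runs as follows.
\begin{itemize}
\item Take an exposed point $e$ of $F$, so that $\argmin_{w\in F}d^\top w=\{e\}$. For instance, a point of $F$ farthest from any fixed point is exposed.
\item For $w_\varepsilon\in\Wstar(\cbar+\varepsilon d)$, optimality against $e$ gives $\varepsilon(d^\top e-d^\top w_\varepsilon)\ge\cbar^\top w_\varepsilon-\zstar(\cbar)\ge0$.
\item Hence every limit point lies in $F$ and satisfies $d^\top w\le d^\top e$, so it equals $e$. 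Therefore $\Wstar(\cbar+\varepsilon d)\to\{e\}$ in Hausdorff distance.
\item Then $\max_{w\in\Wstar(\cbar+\varepsilon d)}c^\top w\to c^\top e$ pointwise. Dominated convergence with envelope $\sup_{w\in\Sset}\|w\|\,\|c\|$ yields $R_{\mathrm{SPO}}^{\mathrm{set}}(\cbar+\varepsilon d)\to\zstar(\cbar)-\E[\zstar(c)]$.
\end{itemize}

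Alternatively, you can stay closer to your own remark about singleton-face reports. Choose $\hat c_n\to\cbar$ with $\Wstar(\hat c_n)=\{w_n\}$; this is possible because $\Wstar(q)=\partial\supportfun(-q)$ and a finite convex function is differentiable almost everywhere. The risk is then exactly $\cbar^\top w_n-\E[\zstar(c)]$, and upper hemicontinuity now legitimately gives $\cbar^\top w_n\to\zstar(\cbar)$.

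With either fix, your concluding chain for part~1 goes through unchanged.
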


\begin{proof}
Let $F:=\Wstar(\cbar)$.  We first prove the containment.  If $F$ is a
singleton, write $F=\{e\}$ and take $\tilde c_n=\cbar$.  Otherwise choose an
exposed point $e$ of the compact convex set $F$ relative to its affine hull.
Thus there is $d\in\R^d$ such that
\begin{equation}
  \argmin_{w\in F}d^\top w=\{e\}.
  \label{eq:exposed-point-in-mean-face}
\end{equation}
For $\epsilon_n\downarrow0$, put $\tilde c_n=\cbar+\epsilon_n d$.

We claim that $\Wstar(\tilde c_n)$ converges to $\{e\}$ in Hausdorff
distance.  Take any sequence $w_n\in\Wstar(\tilde c_n)$.  Optimality against
$e$ gives
\begin{equation}
  \epsilon_n(d^\top e-d^\top w_n)
  \geq \cbar^\top w_n-\zstar(\cbar)\geq0,
  \label{eq:exposed-perturbation-sandwich}
\end{equation}
so $d^\top w_n\leq d^\top e$.  Every convergent subsequence has a limit
$w\in\Wstar(\cbar)=F$: this follows by passing to the limit in
$\tilde c_n^\top w_n\leq\tilde c_n^\top u$ for every $u\in\Sset$.  Equation
\eqref{eq:exposed-point-in-mean-face} and the preceding inequality force
$w=e$.  Compactness then implies the asserted Hausdorff convergence.

Let $c^*$ minimize $R_{\mathrm{SPO}}^{\mathrm{set}}$.  Since the support
functions of $\Wstar(\tilde c_n)$ converge pointwise to that of $\{e\}$ and
are dominated by $M\|c\|$, where $M:=\sup_{w\in\Sset}\|w\|$, dominated
convergence yields
\begin{align}
  0
  &\leq\lim_{n\to\infty}
  \left(R_{\mathrm{SPO}}^{\mathrm{set}}(\tilde c_n)
       -R_{\mathrm{SPO}}^{\mathrm{set}}(c^*)\right) \\
  &=\cbar^\top e
    -\E\!\left[\max_{w\in\Wstar(c^*)}c^\top w\right] \\
  &\leq\cbar^\top e
    -\max_{w\in\Wstar(c^*)}\cbar^\top w
  \leq0.
  \label{eq:set-valued-containment-chain}
\end{align}
The penultimate inequality is Jensen's inequality for the support function of
$\Wstar(c^*)$; the last uses
$\cbar^\top e=\zstar(\cbar)\leq\cbar^\top w$ for every $w\in\Sset$.
Equality throughout \eqref{eq:set-valued-containment-chain} implies
$\cbar^\top w=\zstar(\cbar)$ for every $w\in\Wstar(c^*)$, proving
$\Wstar(c^*)\subseteq\Wstar(\cbar)$.

For the converse, write $\Wstar(c^*)=\{w^*\}\subseteq\Wstar(\cbar)$.  Then
\begin{equation}
  R_{\mathrm{SPO}}^{\mathrm{set}}(c^*)
  =\zstar(\cbar)-\E[\zstar(c)].
\end{equation}
For arbitrary $\hat c$ and any $w\in\Wstar(\hat c)$,
\begin{align}
  R_{\mathrm{SPO}}^{\mathrm{set}}(\hat c)
  &\geq \E[c^\top w]-\E[\zstar(c)] \\
  &=\cbar^\top w-\E[\zstar(c)]
  \geq\zstar(\cbar)-\E[\zstar(c)],
\end{align}
which proves optimality of $c^*$.
\end{proof}

\begin{example}[A face-interior point need not be exposable]
\label{ex:nonexposed-face-point}
Let
$\Sset=\operatorname{conv}\{(0,0),(1,0),(0,1)\}$ and $\cbar=0$, so
$\Wstar(\cbar)=\Sset$.  The point $\bar w=(0.4,0.3)$ lies in
$\operatorname{relint}(\Sset)$, and no cost vector $q$ satisfies
$\Wstar(q)=\{\bar w\}$.  Indeed, a singleton exposed face of a polytope is a
vertex.  This example rules out proofs that perturb the cost to make an
arbitrary point of an optimal face uniquely optimal.  Theorem
\ref{thm:optimal-set-containment} instead exposes one suitable point of the
mean-optimal face and lets the entire perturbed optimal set converge to it.
\end{example}

\begin{remark}[Why compactness and the set-valued loss matter]
Compactness supplies both optimal solutions and the integrable envelope used
in dominated convergence.  The maximum in
\eqref{eq:set-valued-spo-risk} is also essential to this theorem's interface.
For the fixed-selection risk in \eqref{eq:deployed-spo-risk}, the Bayes set is
already characterized directly by
\eqref{eq:spo-bayes-set-fixed-selection}; the two risks should not be
identified on tie predictions.
\end{remark}

\section{Geometry of Tie Loci and Full Support}
\label{app:geometry}

\begin{proposition}[Affine-hull obstruction and quotient invariance]
\label{prop:affine-hull-obstruction}
Let $L:=\spanop(\Sset-\Sset)$.  If $L^\perp$ contains a nonzero vector
$\Delta$, then for every $\hat c,c\in\R^d$,
\begin{equation}
  \lossSPOP(\hat c+\Delta,c)=\lossSPOP(\hat c,c).
  \label{eq:pointwise-affine-invariance}
\end{equation}
Consequently, $\RiskSPOP(\hat c+\Delta)=\RiskSPOP(\hat c)$ for
\emph{every} distribution satisfying the integrability needed to define the
risk, and
\begin{equation}
  \Wstar(c+2\Delta)=\Wstar(c)
  \quad\text{and}\quad
  \crossgap{\Delta}(c)=0
  \qquad\text{for every }c\in\R^d.
  \label{eq:affine-hull-invisible-direction}
\end{equation}
Thus SC and ambient-space unique identification are impossible for every
distribution.  In particular, full dimensionality is necessary for a
distribution-free sufficient condition that covers all nonzero ambient
directions.
\end{proposition}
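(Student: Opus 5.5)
The argument is a pointwise computation that uses only the fact that a vector in $L^\perp$ acts as a constant linear form on $\Sset$. Fix $\Delta\in L^\perp$ and any base point $s_0\in\Sset$. For every $w\in\Sset$ we have $w-s_0\in L$, so $\Delta^\top w=\Delta^\top s_0=:\kappa_\Delta$ is constant on $\Sset$. This gives, for every $q\in\R^d$,
\[
  \zstar(q+t\Delta)=\zstar(q)+t\kappa_\Delta,\qquad
  \supportfun(q+t\Delta)=\supportfun(q)+t\kappa_\Delta,\qquad
  \Wstar(q+t\Delta)=\Wstar(q),
\]
for every scalar $t$. The last equality holds because the objective $w\mapsto(q+t\Delta)^\top w$ differs from $q^\top w$ by a constant on $\Sset$, so the two objectives have the same argmin.

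For the pointwise invariance, I substitute into the definition of the loss:
\[
  \lossSPOP(\hat c+\Delta,c)=\supportfun(c-2\hat c)-2\kappa_\Delta+2\hat c^\top\wstar(c)+2\Delta^\top\wstar(c)-\zstar(c).
\]
Since $\wstar(c)\in\Sset$, we have $\Delta^\top\wstar(c)=\kappa_\Delta$. The two $\kappa_\Delta$ terms therefore cancel, which gives the pointwise identity $\lossSPOP(\hat c+\Delta,c)=\lossSPOP(\hat c,c)$. Taking expectations, whenever they are finite, gives risk invariance. This uses no symmetry, so it holds for every distribution.

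For the second display, apply the argmin invariance with $q=c$ and $t=2$ to get $\Wstar(c+2\Delta)=\Wstar(c)$. Then
\[
  \crossgap{\Delta}(c)=c^\top\wstar(c)+2\kappa_\Delta-\zstar(c)-2\kappa_\Delta=0.
\]
Consequently the event $\{\wstar(c)\notin\Wstar(c+2\Delta)\}$ is empty for every nonzero $\Delta\in L^\perp$.

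The closing consequences then follow directly. Ambient uniqueness fails, because the whole line $\hat c+\R\Delta$ shares a single risk value, so any minimizer comes with a non-singleton minimizer set. An ambient version of SC that quantifies over all nonzero $\Delta\in\R^d$ fails, because its event is empty for $\Delta\in L^\perp\setminus\{0\}$. Hence any distribution-free condition that is meant to cover all ambient directions forces $L^\perp=\{0\}$, which is exactly full dimensionality.

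I do not expect a real obstacle here. The only point needing care is to state the constancy of $\Delta^\top w$ on $\Sset$ cleanly via $w-s_0\in L$, and to note that it applies both to the optimization values and to the fixed selection $\wstar(c)$.
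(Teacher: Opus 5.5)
Your proposal is correct and follows the paper's proof essentially step for step. Both arguments rest on the constancy of $\Delta^\top w$ on $\Sset$, which shifts the support function by a constant and gives pointwise loss invariance, and both obtain invariance of $\Wstar$ from the same fact: the paper phrases it via the equal pairwise differences $(c+2\Delta)^\top(w-u)=c^\top(w-u)$ and reads off $\crossgap{\Delta}(c)=0$ from the face-departure equivalence, whereas you compute the gap directly. You are also right to read ``SC'' in the closing sentence as the ambient version quantified over all nonzero $\Delta\in\R^d$, since Definition~\ref{def:strict-crossing} already restricts to $L\setminus\{0\}$ and that restricted condition is not impossible.
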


\begin{proof}
For $\Delta\in L^\perp$, there is a constant $k=\Delta^\top w$ independent of
$w\in\Sset$.  Hence
\begin{equation}
  \supportfun(q-2\Delta)=\supportfun(q)-2k
  \quad\text{for every }q\in\R^d.
  \label{eq:support-shift-constant}
\end{equation}
Applying this with $q=c-2\hat c$ and adding the linear report shift,
\begin{align}
  \lossSPOP(\hat c+\Delta,c)
  &=\supportfun(c-2\hat c-2\Delta)
    +2(\hat c+\Delta)^\top\wstar(c)-\zstar(c) \\
  &=\supportfun(c-2\hat c)-2k+2\hat c^\top\wstar(c)+2k-\zstar(c) \\
  &=\lossSPOP(\hat c,c).
\end{align}
This proves \eqref{eq:pointwise-affine-invariance}.

To prove optimizer invariance, fix $c\in\R^d$.  For any $w,u\in\Sset$,
$\Delta\in L^\perp$ gives $\Delta^\top(w-u)=0$, hence
\begin{equation}
  (c+2\Delta)^\top(w-u)=c^\top(w-u).
  \label{eq:pairwise-ordering-shift}
\end{equation}
Thus $c$ and $c+2\Delta$ induce exactly the same pairwise objective
differences on $\Sset$, so $\Wstar(c+2\Delta)=\Wstar(c)$.  Since
$\wstar(c)\in\Wstar(c)=\Wstar(c+2\Delta)$, the crossing gap vanishes by
\eqref{eq:gap-positive-iff-leaves-face}.
\end{proof}

\begin{remark}[Identification modulo the invisible subspace]
\label{rem:quotient-proof}
Let $L=\spanop(\Sset-\Sset)$.  Proposition~\ref{prop:affine-hull-obstruction}
shows that the SPO+ population risk is constant on every coset of $L^\perp$,
so the natural parameter space is the quotient $\R^d/L^\perp$, represented by
the projected cost $P_L c$.  Under Assumption~\ref{assump:centered-symmetric},
$\cbar$ is the unique minimizer of $\RiskSPOP$ on the affine subspace
$\cbar+L$ if and only if strict crossing holds for every nonzero
$\Delta\in L$.  This is exactly Theorem~\ref{thm:identification-sc} applied
after restricting the displacement set to the directions that affect
$\Wstar$; no additional full-dimensionality is needed for the quotient
statement.
\end{remark}

The next two proofs verify the intrinsic versions stated in the main text.
They use only that $\Wstar(c)=\Wstar(P_Lc)$ and that every effective shift is
represented by its projection onto $L$.

For a compact polytope with affine hull $s_0+L$, strict crossing admits an
intrinsic normal-fan representation. Let $V$ be the vertex set and
\begin{equation}
  C_v:=\{q\in L:v\in\Wstar(q)\},\qquad v\in V,
  \label{eq:appendix-vertex-normal-cones}
\end{equation}
be the closed cones of the minimization normal fan.

\begin{proposition}[Normal-fan representation of crossing]
\label{prop:normal-fan-crossing-representation}
Suppose $\Sset$ is a compact polytope and $P_Lc\ll\Leb_L$. For every fixed
$\Delta\in\R^d$,
\begin{equation}
  \Pr(B_\Delta)
  =\Pr\!\left(
    \bigcup_{v\in V}
    \left\{P_Lc\in\operatorname{int}_L(C_v):
    P_Lc+2P_L\Delta\notin C_v\right\}
  \right).
  \label{eq:normal-fan-crossing-event}
\end{equation}
The union is disjoint up to a null set.  Thus, away from tie strata, strict
crossing is exactly positive probability mass leaving the closed normal cone
of the formerly optimal vertex; landing on a shared boundary does not count
when that vertex remains optimal.
\end{proposition}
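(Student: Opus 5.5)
The plan is to reduce everything to the projected cost $q:=P_Lc$ and the projected shift $\delta:=P_L\Delta$, and then use absolute continuity to discard tie strata. Since every $n\in N$ is constant on $\Sset$, we have $\Wstar(c)=\Wstar(q)$ and $\Wstar(c+2\Delta)=\Wstar(q+2\delta)$, as in Proposition~\ref{prop:affine-hull-obstruction}. The selection $\wstar$ is applied to $c$ itself, but it always lands in $\Wstar(q)$, so the event $B_\Delta$ is determined by whether $\wstar(c)$ lies in $\Wstar(q+2\delta)$. It therefore suffices to work in $L$.

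\textbf{Step 1 (full-measure good set).} Restricted to $L$, the cones $C_v$, $v\in V$, form a complete fan: every $q\in L$ lies in some $C_v$, because a minimum over a polytope is attained at a vertex. The interiors $\operatorname{int}_L(C_v)$ are pairwise disjoint. The complement of their union is contained in the union of the cone boundaries, a finite union of pieces of hyperplanes in $L$, and hence is $\Leb_L$-null; equivalently it lies in $\TieLocus_L$, which is null as noted in Section~\ref{sec:geometry}. Since $P_Lc\ll\Leb_L$, almost surely $q\in\operatorname{int}_L(C_v)$ for exactly one vertex $v$. This gives disjointness of the union up to a null set.

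\textbf{Step 2 (identify the selected output).} Fix a vertex $v$ and suppose $q\in\operatorname{int}_L(C_v)$. Then $\Wstar(q)=\{v\}$. To see this, note that any other vertex $u\in\Wstar(q)$ would force $q\in C_u\cap C_v$. Since $u\ne v$, the set $C_u\cap C_v$ lies in the hyperplane $\{p\in L: p^\top(u-v)=0\}$, which is a proper hyperplane of $L$ because $u-v\in L\setminus\{0\}$. Every point of that hyperplane inside $C_v$ is a boundary point of $C_v$, contradicting $q\in\operatorname{int}_L(C_v)$. Consequently the optimal face at $q$ contains no other vertex, so it is $\{v\}$, and hence $\wstar(c)=v$. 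On this event, $B_\Delta$ holds iff $v\notin\Wstar(q+2\delta)$, which by the definition \eqref{eq:appendix-vertex-normal-cones} is exactly $q+2\delta\notin C_v$. Note that $C_v$ is closed, so landing on a shared boundary of $C_v$ does not count as crossing.

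\textbf{Step 3 (assemble).} Up to a null set, $B_\Delta$ is the disjoint union over $v$ of the events in \eqref{eq:normal-fan-crossing-event}, and the probabilities agree. The main point needing care is Step 2: we must show that interior points of $C_v$ have a singleton optimal face. This is handled by the hyperplane argument above together with the fact that the optimal face is the convex hull of its optimal vertices. The measurability of the events is immediate, since the cones are closed polyhedral sets.
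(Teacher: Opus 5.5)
Your proposal is correct and follows the same route as the paper's proof: almost surely $P_Lc$ lies in the interior of exactly one vertex cone $C_v$, which forces $\Wstar(c)=\{v\}$ and $\wstar(c)=v$, and crossing is then equivalent to $P_Lc+2P_L\Delta\notin C_v$ because $C_v$ is closed. Your Step~2 hyperplane argument, showing that interior points of $C_v$ have the singleton optimal face $\{v\}$, fills in a detail the paper asserts without proof.
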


\begin{proof}
The union of the lower-dimensional faces of the normal fan is $\Leb_L$-null.
Absolute continuity therefore implies that $P_Lc$ belongs almost surely to the
relative interior of exactly one vertex cone, say $C_v$, and hence
$\Wstar(c)=\{v\}$ and $\wstar(c)=v$. By the definition of $C_v$,
\begin{equation}
  \wstar(c)\notin\Wstar(c+2\Delta)
  \quad\Longleftrightarrow\quad
  v\notin\Wstar(c+2\Delta)
  \quad\Longleftrightarrow\quad
  P_Lc+2P_L\Delta\notin C_v.
\end{equation}
Taking the union over the almost-surely unique source cone proves
\eqref{eq:normal-fan-crossing-event}.  The interiors of distinct
full-dimensional-in-$L$ vertex cones are disjoint.
\end{proof}

\begin{remark}[Tie boundaries and oracle independence]
Proposition~\ref{prop:normal-fan-crossing-representation} removes the source
oracle from the event because absolute continuity makes the source optimizer
unique almost surely.  At the shifted cost, the closed cone $C_v$ retains the
set-valued test: if $c+2\Delta$ lands on a shared fan face that still contains
$v$, then the old decision remains optimal and no crossing occurs.  This is
the geometric distinction between optimal-face departure and an arbitrary
change of selected output.
\end{remark}

\begin{proof}[Proof of Theorem~\ref{thm:ac-equivalence}]
Fix $s_0\in\Sset$ and restrict the support function of $\Sset-s_0$ to $L$.
It is finite convex on $L$, so its nondifferentiability locus, namely
$\TieLocus_L$, is $\Leb_L$-null. Let $A_\Delta,B_\Delta$ be as in
\eqref{eq:disagreement-and-crossing-events}. The inclusion
$B_\Delta\subseteq A_\Delta$ is immediate. Outside
$\{P_Lc+2P_L\Delta\in\TieLocus_L\}$, the shifted optimal face is a
singleton; hence $A_\Delta\setminus B_\Delta$ is contained in that null
event. Since a translation of an absolutely continuous law on $L$ is
absolutely continuous,
\eqref{eq:ac-disagreement-crossing-equivalence} follows for the fixed
displacement $\Delta$.
\end{proof}

\begin{proof}[Proof of Theorem~\ref{thm:full-support}]
Fix $\Delta\in L\setminus\{0\}$ and put $q_0=-\Delta$. The directional
width in \eqref{eq:directional-width} is positive because
$\Delta\notin L^\perp$. Define
\begin{equation}
  h(q,w):=(q+2\Delta)^\top w-\zstar(q+2\Delta),\qquad q\in L.
\end{equation}
At $q_0$, $h(q_0,w)$ equals the positive width for every
$w\in\Wstar(q_0)$. If no relative neighborhood with the asserted uniform
positivity existed, there would be $q_n\to q_0$ in $L$ and
$w_n\in\Wstar(q_n)$ with $h(q_n,w_n)\leq0$. Compactness yields a convergent
subsequence. Berge upper hemicontinuity places its limit in $\Wstar(q_0)$,
while continuity of $\zstar$
makes the limiting value of $h$ positive, a contradiction. Hence a nonempty
relatively open $U\subset L$ satisfies
\begin{equation}
  \forall q\in U,\ \forall w\in\Wstar(q):\quad
  (q+2\Delta)^\top w>\zstar(q+2\Delta).
  \label{eq:uniform-open-crossing-neighborhood}
\end{equation}
Every legal selection therefore crosses on $\{P_Lc\in U\}$. Intrinsic full
support implies $\Pr\{P_Lc\in U\}>0$, and the argument applies to every
nonzero $\Delta\in L$.
\end{proof}

\section{Counterexample Details}
\label{app:counterexamples}

\paragraph{Exact compact-support risk for Example~\ref{ex:compact-support}.}
Integrating \eqref{eq:compact-support-loss} gives
\begin{equation}
  \RiskSPOP(\hat c)=
  \begin{cases}
    \frac32-2\hat c, & \hat c\leq\frac12,\\
    2(1-\hat c)^2, & \frac12<\hat c<1,\\
    0, & \hat c\geq1.
  \end{cases}
  \label{eq:c1-full-risk}
\end{equation}
Thus the loss is strictly decreasing before the plateau and vanishes on the
entire ray $[1,\infty)$.  This calculation verifies that the displayed Bayes
set is exact rather than a partial list of additional minimizers.

The same plateau holds for a $C^\infty$ density $p$ that is symmetric about
$3/2$, strictly positive on $(1,2)$, and zero outside $[1,2]$: for
$\hat c\geq1$ the event $\{c>2\hat c\}$ has probability zero, so the
formula in \eqref{eq:compact-support-loss} gives zero loss almost surely;
for $\hat c<1$ the event $\{c>2\hat c\}$ has positive probability because
$p$ is strictly positive on $\bigl(\max\{1,2\hat c\},2\bigr)$, which is
nonempty.  Thus the phenomenon is not an artifact of
the uniform law.

\paragraph{Atomic construction for Example~\ref{ex:atomic-disagreement}.}
In this and the following singular construction, let $\Sset=[-1,1]^2$.
Let $P_c$ put mass $1/2$ at each of $(1,0)$ and $(-1,0)$, so the law is
centrally symmetric about zero.  Define the global Borel measurable selection
\begin{equation}
  \wstar(q_1,q_2)=
  \begin{cases}
    (-\operatorname{sgn}(q_1),-\operatorname{sgn}(q_2)), & q_2\neq0,\\
    (-\operatorname{sgn}(q_1),1), & q_2=0,\ q_1\leq1,\\
    (-1,-1), & q_2=0,\ q_1>1,
  \end{cases}
  \label{eq:atomic-global-oracle}
\end{equation}
where $\operatorname{sgn}(0)=0$ and $\operatorname{sgn}(x)=x/|x|$ for
$x\neq0$.  This is Borel measurable and $\wstar(q)\in\Wstar(q)$ for every
$q$: the first coordinate is the unique minimizer whenever $q_1\neq0$, and
when $q_1=0$ the value $0$ is also optimal; the second coordinate is chosen
from the optimal interval $[-1,1]$ on the tie slice $q_2=0$, and is the unique
minimizer when $q_2\neq0$.

Fix $\Delta=(\delta,0)$ with $0<\delta<1/2$.  At $c=(1,0)$, the shift to
$(1+2\delta,0)$ changes the selected second coordinate from $1$ to $-1$, but
both selections lie in the shifted face $\{-1\}\times[-1,1]$.  At
$c=(-1,0)$, both $c$ and $c+2\Delta$ have negative first coordinate and the
selected output is unchanged.  Hence
\begin{equation}
  \Pr(A_\Delta)=\frac12,
  \qquad \Pr(B_\Delta)=0,
  \qquad \crossgap{\Delta}(c)=0\quad\text{a.s.}
  \label{eq:atomic-complete-separation}
\end{equation}
The disagreement is entirely tangential to a common optimal face.

\paragraph{Singular construction for Example~\ref{ex:cantor-separation}.}
Let $X$ have the standard Cantor distribution. It is atom-free, singular, and
satisfies $X\overset d=1-X$. Define the global Borel measurable selection
\begin{equation}
  \wstar(q_1,q_2)=
  \begin{cases}
    (-\operatorname{sgn}(q_1),-\operatorname{sgn}(q_2)), & q_2\neq0,\\
    (-\operatorname{sgn}(q_1),1), & q_2=0,\ q_1\leq2,\\
    (-1,-1), & q_2=0,\ q_1>2.
  \end{cases}
  \label{eq:cantor-global-oracle}
\end{equation}
The same legality check as in the atomic example applies.  The cost
$c=(1+X,0)$ is centrally symmetric about $(3/2,0)$ and is supported on the
tie slice.  For $\Delta=(1/4,0)$, the output changes exactly on
$\{X>1/2\}$, which has probability $1/2$ by symmetry and atom-freeness.  The
old point $(-1,1)$ remains in the shifted face $\{-1\}\times[-1,1]$
throughout. Hence
\begin{equation}
  \Pr(A_\Delta)=\frac12,
  \qquad \Pr(B_\Delta)=0,
  \qquad \crossgap{\Delta}(c)=0\quad\text{a.s.}
  \label{eq:cantor-complete-separation}
\end{equation}
This closes the gap between atom-freeness and absolute continuity: the former
does not prevent a distribution from living on a Lebesgue-null tie locus.

\paragraph{The SPO+ loss is not uniformly bounded under first moments.}
Even when $\Sset$ is compact, integrability does not give a finite uniform
loss envelope.  Let $\Sset=[-1/2,1/2]$, $\hat c=0$, and $c=M>0$.  Then
\begin{equation}
  \lossSPOP(0,M)=\supportfun(M)-\zstar(M)=M\longrightarrow\infty.
  \label{eq:unbounded-spo-plus-loss}
\end{equation}
Consequently, any finite-sample argument that invokes
$\sup_{\hat c,c}\lossSPOP(\hat c,c)<\infty$ requires an additional bounded
support or tail assumption.  The population identities in this paper instead
use the stated first-moment and report-wise integrability conditions.

\paragraph{No power law from SC alone.}
In the one-dimensional problem $\Sset=[-1/2,1/2]$, take an even density that
is proportional near zero to $\exp(-1/c^2)$. Every nonzero displacement
crosses a positive-mass interval, so SC holds. Nevertheless the excess at a
positive displacement $t$ is bounded by a constant times
$t^2\exp(-1/(4t^2))$, which is smaller than every polynomial in $t$.

\section{Quantitative Proofs and Sharpness}
\label{app:quantitative-proofs}

\begin{proof}[Proof of Lemma~\ref{lem:edge-directions-span}]
If the edge directions did not span $L$, there would be a nonzero
$u\in L$ orthogonal to every edge direction. Along every edge, the linear
function $u^\top v$ is therefore constant. The vertex-edge graph of a polytope
is connected, so $u^\top v$ is constant on all vertices. Every point of the
polytope is a convex combination of vertices, hence $u^\top v$ is constant on
$\Sset$; because $u\in L$, this forces $u\in L\cap L^\perp=\{0\}$, a
contradiction. Since the unit sphere of $L$ is compact and the maximum of
finitely many continuous functions is continuous and strictly positive,
\eqref{eq:direction-covering-constant} follows.
\end{proof}

\begin{proof}[Proof of Proposition~\ref{prop:coercivity-qsc}]
Fix $\Delta$ with $0<\|\Delta\|\leq r$ and write $t=\|\Delta\|$. Let
\begin{equation}
  E_\Delta:=\left\{\crossgap{\Delta}(c)\geq\tfrac C2 t^p\right\}.
  \label{eq:coercivity-event}
\end{equation}
By Lemma~\ref{lem:centered-excess-identity} and
\eqref{eq:universal-gap-envelope},
\begin{align}
  C t^p
  &\leq\E[\crossgap{\Delta}(c)] \\
  &=\E[\crossgap{\Delta}1_{E_\Delta}]
    +\E[\crossgap{\Delta}1_{E_\Delta^c}] \\
  &\leq 2D_{\Sset}t\,\Pr(E_\Delta)+\tfrac C2t^p.
\end{align}
Rearranging gives
$\Pr(E_\Delta)\geq\tfrac{C}{4D_{\Sset}}t^{p-1}$, which is
\eqref{eq:quantitative-strict-crossing} with
$\alpha=p$, $\beta=p-1$, $\gamma=\tfrac C2$, and
$\kappa=\tfrac{C}{4D_{\Sset}}$.
\end{proof}

\begin{proof}[Proof of \eqref{eq:universal-gap-envelope}]
Let $w_0=\wstar(c)$ and $w_1=\wstar(c+2\Delta)$. Optimality of $w_0$ at $c$
gives $c^\top(w_0-w_1)\leq0$. Therefore
\begin{equation}
  \crossgap{\Delta}(c)
  =(c+2\Delta)^\top(w_0-w_1)
  \leq2\|\Delta\|\,\|w_0-w_1\|
  \leq2D_{\Sset}\|\Delta\|.
\end{equation}
\end{proof}

\begin{proof}[Proof of Theorem~\ref{thm:quantitative-crossing}]
Restrict the expectation in \eqref{eq:spo-plus-excess-identity} to the event
in \eqref{eq:quantitative-strict-crossing}. The gap is at least
$\gamma\|\Delta\|^\alpha$ on an event of probability at least
$\kappa\|\Delta\|^\beta$, proving \eqref{eq:qsc-risk-transfer}.
\end{proof}

\begin{proof}[Proof of Theorem~\ref{thm:polyhedral-quadratic-growth}]
Write $\Delta=tu$ with $u\in L$ and $\|u\|=1$. Let $\mathcal E_0$ be the finite family in
Assumption~\ref{assump:polyhedral-boundary-mass}.  By
Lemma~\ref{lem:edge-directions-span},
\begin{equation}
  \tau:=\tau_{\mathcal E_0}
  =\min_{u\in L:\|u\|=1}\max_{e\in\mathcal E_0}|a_e^\top u|>0.
  \label{eq:edge-transversality}
\end{equation}
For the direction $u$, choose an edge in $\mathcal E_0$ and orient it as
$e=[v_i,v_j]$ (relabeling its endpoints if necessary) so that
$a_e^\top u\geq\tau$, and set $\delta:=2t a_e^\top u$.  The patch property
in Assumption~\ref{assump:polyhedral-boundary-mass} is symmetric under this
relabeling.  Let $\rho_{\min}:=\min_{e\in\mathcal E_0}\rho_e$
and choose
\begin{equation}
  0<t\leq r_0:=\frac12\min\left\{\rho_{\min},
  \frac{\rho_{\min}}{\max_{e\in\mathcal E_0}\|a_e\|}\right\}.
  \label{eq:polyhedral-common-radius}
\end{equation}
Then $2t\|u-(a_e^\top u)a_e/\|a_e\|^2\|\leq\rho_e$ and
$0<\delta\leq\rho_e$. The prism
\begin{equation}
  \mathcal{P}_{e,t,u}:=
  \left\{y+\frac{s}{\|a_e\|^2}a_e:
  y\in K_e,\ -\frac{3\delta}{4}\leq s\leq-\frac{\delta}{2}\right\}
  \label{eq:crossing-prism}
\end{equation}
lies in the first vertex cone while its shift by $2tu$ lies in the second.
Indeed, write $u=u_T+(a_e^\top u)a_e/\|a_e\|^2$ with $u_T\in L_e$. The
tangential component $2tu_T$ stays within the neighborhood in
Assumption~\ref{assump:polyhedral-boundary-mass}, whereas the normal coordinate
changes from $s$ to $s+\delta$. On the prism, $s\in[-3\delta/4,-\delta/2]$,
so $s+\delta\in[\delta/4,\delta/2]$. Hence the optimizer is unique on the
prism. Since the optimizer and gap depend only on $P_Lc$ for
$\Delta\in L$, on $\{P_Lc\in\mathcal P_{e,t,u}\}$,
\begin{equation}
  \crossgap{tu}(c)=s+\delta\geq\frac{\delta}{4}
  \geq\frac{t\tau}{2}.
  \label{eq:prism-gap}
\end{equation}
The map
$T_e(y,s):=y+s a_e/\|a_e\|^2$ is an injective product coordinate map on
$K_e\times[-\rho_e,\rho_e]$. Its Jacobian with respect to $\Leb_L$ is
$\|a_e\|^{-1}$. Hence, by Fubini's theorem and the almost-everywhere
density lower bound in Assumption~\ref{assump:polyhedral-boundary-mass}, if
$A_{\min}:=\min_{e\in\mathcal E_0}\mathcal H^{k-1}(K_e)$ and
$a_{\max}:=\max_{e\in\mathcal E_0}\|a_e\|$,
\begin{equation}
  \begin{aligned}
    \Pr\{P_Lc\in\mathcal{P}_{e,t,u}\}
    &=\int_{K_e}\int_{-3\delta/4}^{-\delta/2}
      p\bigl(T_e(y,s)\bigr)\,
      \frac{ds\,d\mathcal H^{k-1}(y)}{\|a_e\|} \\
    &\geq \frac{m\,\mathcal H^{k-1}(K_e)}{\|a_e\|}\,\frac{\delta}{4}
    \geq\frac{mA_{\min}\tau}{2a_{\max}}t.
  \end{aligned}
  \label{eq:prism-mass}
\end{equation}
Multiplying \eqref{eq:prism-gap} and \eqref{eq:prism-mass} in the centered
excess identity proves \eqref{eq:polyhedral-quadratic-growth}, for example
with $C=mA_{\min}\tau^2/(4a_{\max})$ and $r=r_0$.
\end{proof}

\begin{figure*}[t]
  \centering
  \begin{tikzpicture}[x=0.82cm,y=0.82cm,>=Latex,font=\footnotesize]
  \begin{scope}
    \node[font=\bfseries\small,anchor=south west,text=PaperInk] at (-0.2,3.35) {\normalfont\footnotesize (a) a facet strip supplies mass and gap};
    \fill[PaperBlue!6] (0.25,0) rectangle (6.05,2.2);
    \draw[PaperGray,thick] (0.25,1.1) -- (6.05,1.1) node[right,font=\footnotesize] {$F_e$};
    \fill[PaperOrange!24] (1.15,0.42) rectangle (5.15,1.1);
    \draw[PaperOrange,thick,rounded corners=1pt] (1.15,0.42) rectangle (5.15,1.1);
    \node[PaperOrange,align=center,font=\footnotesize] at (3.15,1.60) {crossing strip};
    \draw[PaperOrange,thin] (3.15,1.42) -- (3.15,1.14);
    \draw[PaperOrange,thick] (1.15,0.30) -- (5.15,0.30);
    \draw[PaperOrange,thick] (1.15,0.34) -- (1.15,0.26);
    \draw[PaperOrange,thick] (5.15,0.34) -- (5.15,0.26);
    \node[PaperOrange,below,font=\footnotesize,inner sep=2pt] at (3.15,0.30)
      {width $\asymp\|\Delta\|$};
    \draw[->,PaperOrange,very thick] (5.55,0.38) -- (5.55,1.02)
      node[midway,right,font=\footnotesize] {$\Delta$};
    \node[align=center,font=\footnotesize,text=PaperGray] at (3.15,-0.72)
      {$P_c(\mathrm{strip})\gtrsim\|\Delta\|$\\[2pt]
       $G_\Delta(c)\gtrsim\|\Delta\|$};
  \end{scope}

  \begin{scope}[xshift=8.0cm]
    \node[font=\bfseries\small,anchor=south west,text=PaperInk] at (0.0,3.35) {\normalfont\footnotesize (b) one-dimensional sharp exponents};
    \draw[->] (0,0) -- (4.9,0) node[right,font=\footnotesize] {$\log|\Delta|$};
    \draw[->] (0,0) -- (0,2.25) node[above,font=\footnotesize] {$\log\,[R(\Delta)-R(0)]$};
    \draw[PaperBlue,very thick] (0.55,0.30) -- (4.35,1.55);
    \draw[PaperOrange,very thick,dashed] (0.55,0.18) -- (4.35,2.02);
    \node[PaperBlue,font=\footnotesize,anchor=east] at (4.35,0.50) {$\nu=0$: slope $2$};
    \node[PaperOrange,font=\footnotesize,anchor=east] at (4.28,2.18) {$\nu>0$: slope $2{+}\nu$};
    \node[align=center,font=\footnotesize,text=PaperGray] at (2.45,-0.72)
      {$p(c)\asymp |c|^\nu\ \Rightarrow\ R(\Delta)-R(0)\asymp|\Delta|^{2+\nu}$};
  \end{scope}
\end{tikzpicture}
  \caption{Quantitative crossing is a mass-gap product. A displacement
  sweeps a strip of width proportional to $\|\Delta\|$ across a fan facet.
  Under the stated density-floor and trimmed-interior conditions, the strip has
  probability proportional to $\|\Delta\|$ and its crossing gap is also
  proportional to $\|\Delta\|$, yielding quadratic parameter-excess growth.
  In one dimension a vanishing boundary density changes the exponent to
  $2+\nu$.}
  \label{fig:quantitative-crossing}
\end{figure*}

\begin{corollary}[Local inverse risk and conditional approximate-ERM recovery]
\label{cor:local-parameter-recovery}
Suppose that, for some $C,r,p>0$,
\begin{equation}
  \RiskSPOP(\cbar+\Delta)-\RiskSPOP(\cbar)
  \geq C\|\Delta\|^p
  \quad\text{for every }\ \Delta\in L
  \text{ with }0<\|\Delta\|\leq r.
  \label{eq:generic-local-growth}
\end{equation}
Write
$\overline B_L(\cbar,r):=\{\cbar+\Delta:\Delta\in L,
\ \|\Delta\|\leq r\}$ for the canonical effective slice.  Then every
$\hat c\in\overline B_L(\cbar,r)$ obeys
\begin{equation}
  \|\hat c-\cbar\|
  \leq C^{-1/p}\bigl(\RiskSPOP(\hat c)-\RiskSPOP(\cbar)\bigr)^{1/p}.
  \label{eq:local-inverse-risk}
\end{equation}
Let $\widehat R_n$ be an empirical criterion and let a (possibly
random) $\hat c_n\in\overline B_L(\cbar,r)$ satisfy
\begin{equation}
  \widehat R_n(\hat c_n)\leq\widehat R_n(\cbar)+\eta_n.
  \label{eq:approximate-local-erm}
\end{equation}
On any event on which
\begin{equation}
  \sup_{\theta\in\overline B_L(\cbar,r)}
  |\widehat R_n(\theta)-\RiskSPOP(\theta)|\leq\varepsilon_n,
  \label{eq:local-uniform-deviation}
\end{equation}
one has
\begin{equation}
  \|\hat c_n-\cbar\|
  \leq\left(\frac{2\varepsilon_n+\eta_n}{C}\right)^{1/p}.
  \label{eq:local-erm-parameter-rate}
\end{equation}
This statement is deterministic conditional on the local uniform-deviation
event \eqref{eq:local-uniform-deviation} and on local membership
$\hat c_n\in\overline B_L(\cbar,r)$; it is not a standalone finite-sample
rate theorem.
\end{corollary}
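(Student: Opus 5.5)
The plan is to prove \eqref{eq:local-inverse-risk} first and then derive \eqref{eq:local-erm-parameter-rate} from it by a short chain of inequalities.

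\textbf{Local inverse bound.} Take $\hat c\in\overline B_L(\cbar,r)$ and set $\Delta:=\hat c-\cbar$. By definition of the slice, $\Delta\in L$ and $\|\Delta\|\leq r$.

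If $\Delta=0$, inequality \eqref{eq:local-inverse-risk} is trivial. The right-hand side is $0$, and it is well defined because $\cbar$ is a global minimizer (Proposition~\ref{prop:bayes-minimizer}, whenever that is the setting). More generally, the excess $\RiskSPOP(\hat c)-\RiskSPOP(\cbar)$ is nonnegative on the slice by \eqref{eq:generic-local-growth} itself, so the $1/p$-th root always makes sense.

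If $\Delta\neq0$, hypothesis \eqref{eq:generic-local-growth} gives $C\|\Delta\|^p\leq\RiskSPOP(\hat c)-\RiskSPOP(\cbar)$. Dividing by $C$ and applying the monotone map $x\mapsto x^{1/p}$ on $[0,\infty)$ yields \eqref{eq:local-inverse-risk}.

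\textbf{Approximate-ERM consequence.} Work on the deviation event \eqref{eq:local-uniform-deviation}. Both $\hat c_n$ and $\cbar$ lie in $\overline B_L(\cbar,r)$. The standard three-term decomposition gives
\[
\RiskSPOP(\hat c_n)-\RiskSPOP(\cbar)
\leq\bigl(\RiskSPOP(\hat c_n)-\widehat R_n(\hat c_n)\bigr)
+\bigl(\widehat R_n(\hat c_n)-\widehat R_n(\cbar)\bigr)
+\bigl(\widehat R_n(\cbar)-\RiskSPOP(\cbar)\bigr).
\]
The first and third terms are each at most $\varepsilon_n$ by \eqref{eq:local-uniform-deviation}. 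The middle term is at most $\eta_n$ by \eqref{eq:approximate-local-erm}. The excess is therefore at most $2\varepsilon_n+\eta_n$, and plugging this into \eqref{eq:local-inverse-risk} gives \eqref{eq:local-erm-parameter-rate}.

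\textbf{Main point of care.} There is no real obstacle. The only subtlety is that the argument is deterministic on the stated event and uses local membership of $\hat c_n$ in the slice essentially. The growth bound is only local, so without localization the inverse modulus cannot be applied. One should also note that $\cbar$ itself belongs to the slice, which is needed to bound the third term.
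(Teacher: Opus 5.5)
Your proof is correct and follows the paper's argument. You get the inverse bound by applying the growth hypothesis at $\Delta=\hat c-\cbar$ (the case $\hat c=\cbar$ being trivial) and taking $p$th roots, and you get the ERM bound by using the uniform deviation at both $\hat c_n$ and $\cbar$ to show the excess is at most $2\varepsilon_n+\eta_n$; your observation that nonnegativity of the excess on the slice follows from the growth hypothesis alone, without symmetry, is a point the paper leaves implicit.
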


\begin{proof}[Proof of Corollary~\ref{cor:local-parameter-recovery}]
If $\hat c=\cbar$, \eqref{eq:local-inverse-risk} is immediate. Otherwise,
apply \eqref{eq:generic-local-growth} with $\Delta=\hat c-\cbar$ and take
$p$th roots. On the event \eqref{eq:local-uniform-deviation},
\begin{align}
  \RiskSPOP(\hat c_n)-\RiskSPOP(\cbar)
  &\leq \widehat R_n(\hat c_n)-\widehat R_n(\cbar)+2\varepsilon_n \\
  &\leq \eta_n+2\varepsilon_n.
\end{align}
Apply \eqref{eq:local-inverse-risk} to obtain
\eqref{eq:local-erm-parameter-rate}.
\end{proof}

\begin{theorem}[One-dimensional sharp exponent]
\label{thm:one-dimensional-sharp-rate}
Let $B>0$, let $\Sset=[-B,B]$, let $\cbar=0$, and let $c$ have an even
density $p$ with $\E|c|<\infty$. For $t>0$ sufficiently small,
\begin{equation}
  \RiskSPOP(t)-\RiskSPOP(0)
  =2B\int_0^{2t}(2t-x)p(-x)\,dx.
  \label{eq:one-dimensional-exact-gap}
\end{equation}
If $m x^\nu\leq p(x)\leq Mx^\nu$ for $0<x<\varepsilon$ and $\nu>-1$, then
for $0<t<\varepsilon/2$,
\begin{equation}
  \frac{2Bm(2t)^{\nu+2}}{(\nu+1)(\nu+2)}
  \leq\RiskSPOP(t)-\RiskSPOP(0)
  \leq\frac{2BM(2t)^{\nu+2}}{(\nu+1)(\nu+2)}.
  \label{eq:one-dimensional-sharp-bounds}
\end{equation}
\end{theorem}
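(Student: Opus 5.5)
The plan is to reduce everything to the centered excess-risk identity of Lemma~\ref{lem:centered-excess-identity} and then compute the crossing gap in closed form. Since $p$ is even and $\E|c|<\infty$, the law is centrally symmetric about $\cbar=0$, so Assumption~\ref{assump:centered-symmetric} holds. Lemma~\ref{lem:centered-excess-identity} then gives
\[
\RiskSPOP(t)-\RiskSPOP(0)=\E[\crossgap{t}(c)].
\]
For $\Sset=[-B,B]$ we have $\zstar(q)=-B|q|$ and $\Wstar(q)=\{-B\operatorname{sgn}(q)\}$ whenever $q\neq0$. The only tie point is $q=0$, which has probability zero because $c$ has a density. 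Hence, almost surely, $\wstar(c)=-B\operatorname{sgn}(c)$ regardless of how the selection breaks the tie. This is the one place where the fixed selection could matter, and absolute continuity removes it (compare Theorem~\ref{thm:ac-equivalence}).

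Next I will evaluate $\crossgap{t}(c)=(c+2t)\wstar(c)+B|c+2t|$ for $t>0$ by splitting on the sign of $c$.
\begin{itemize}
\item If $c>0$, then $\wstar(c)=-B$ and $c+2t>0$, so the gap is $-B(c+2t)+B(c+2t)=0$.
\item If $c<-2t$, then $\wstar(c)=B$ and $c+2t<0$, so the gap is $B(c+2t)-B(c+2t)=0$.
\item If $-2t<c<0$, then $\wstar(c)=B$ and $c+2t>0$, so the gap is $2B(c+2t)$.
\end{itemize}
Thus only the crossing strip $(-2t,0)$ contributes, matching panel (b) of Figure~\ref{fig:central-phenomenon}, and
\[
\E[\crossgap{t}(c)]=\int_{-2t}^{0}2B(c+2t)\,p(c)\,dc .
\]
The substitution $x=-c$ turns this into \eqref{eq:one-dimensional-exact-gap}. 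The computation in fact holds for every $t>0$, so "sufficiently small" is only needed for the next step.

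For the bounds, fix $0<t<\varepsilon/2$, so that $2t<\varepsilon$. Evenness gives $p(-x)=p(x)\in[mx^\nu,Mx^\nu]$ on $(0,2t)$. The weight $2t-x$ is nonnegative there, so the integrand can be sandwiched pointwise between $m$ and $M$ times $(2t-x)x^\nu$. The remaining elementary integral, with $a=2t$ and valid because $\nu>-1$, is
\[
\int_0^{a}(a-x)x^\nu\,dx=\frac{a^{\nu+2}}{\nu+1}-\frac{a^{\nu+2}}{\nu+2}=\frac{a^{\nu+2}}{(\nu+1)(\nu+2)} .
\]
Multiplying by $2B$ yields \eqref{eq:one-dimensional-sharp-bounds}.

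There is no serious obstacle. The only point needing care is the null tie at $c=0$ and the boundary points $c=-2t$ and $c=0$ of the strip, all of which are Lebesgue-null. The integrability needed to invoke the identity is supplied by $\E|c|<\infty$.
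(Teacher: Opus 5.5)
Your proof is correct and takes the same route as the paper: the centered excess-risk identity reduces the excess to $\E[\crossgap{t}(c)]$, only the strip $(-2t,0)$ contributes, with gap $2B(c+2t)$, and the substitution $x=-c$ followed by integrating the density sandwich gives both displays. Your two additional remarks are accurate refinements of the paper's terse proof: the tie at $c=0$ is Lebesgue-null, and the exact identity holds for every $t>0$.
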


\begin{proof}
Only $c\in(-2t,0)$ crosses the unique decision boundary. On that interval,
$\wstar(c)=B$, $\wstar(c+2t)=-B$, and
$\crossgap{t}(c)=2B(c+2t)$. Substitute $x=-c$ to obtain
\eqref{eq:one-dimensional-exact-gap}, then integrate the density bounds.
\end{proof}

\section{Contextual Prediction and Further Extensions}
\label{sec:extensions}

For a cost law $P$ governing a random cost $C$, we write
\begin{equation}
  R_{P,\mathrm{SPO+}}(\theta)
  :=\int \lossSPOP(\theta,c)\,P(dc).
  \label{eq:law-indexed-spo-plus-risk}
\end{equation}
This notation agrees with $\RiskSPOP$ in the fixed-law setup and makes the
law used by the asymmetric extensions explicit.

\begin{theorem}[Strict crossing alone does not identify the mean]
\label{thm:symmetry-free-obstruction}
For $\Sset=[-1,1]$ with $\wstar(c)=-\operatorname{sign}(c)$ for $c\neq0$ and
$\wstar(0)=0$, there exists a full-support law on $\R$ with mean zero such
that SC holds but $0\notin\argmin_\theta R_{P,\mathrm{SPO+}}(\theta)$.
\end{theorem}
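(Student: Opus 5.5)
The plan is to write down the SPO+ risk for $\Sset=[-1,1]$ in closed form, choose a skewed mean-zero law with a density, and show two things: the one-sided derivative of the risk at $\theta=0$ is nonzero, while strict crossing still holds because the law has full support. No symmetry is used anywhere, and Theorem~\ref{thm:full-support} does not need it.

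\emph{Step 1 (the loss).} For $\Sset=[-1,1]$ we have $\supportfun(q)=|q|$ and $\zstar(q)=-|q|$. With $\wstar(c)=-\operatorname{sign}(c)$, the loss \eqref{eq:spo-plus-loss} becomes
\[
\lossSPOP(\theta,c)=|c-2\theta|-2\theta\operatorname{sign}(c)+|c|,
\]
which is convex and $2$-Lipschitz in $\theta$ apart from the linear term.

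\emph{Step 2 (the law).} Take $P$ with density $p(c)=\tfrac23 e^{-c}$ for $c>0$ and $p(c)=\tfrac13\cdot\tfrac12 e^{c/2}$ for $c<0$. This is an $\mathrm{Exp}(1)$ piece with weight $2/3$ and a reflected $\mathrm{Exp}$ piece of mean $2$ with weight $1/3$. It has $\E|c|<\infty$ and mean $\tfrac23\cdot1-\tfrac13\cdot2=0$. Its support is $\R$, and $\Pr(c=0)=0$. The key feature is that $\Pr(c>0)=2/3\neq1/3=\Pr(c<0)$.

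\emph{Step 3 (SC).} Since $\supp(P)=\R=L$, SC follows from Theorem~\ref{thm:full-support}, whose proof uses only intrinsic full support. As a direct check, take $\Delta>0$. Every $c\in(-2\Delta,0)$ has $\wstar(c)=1$ while $\Wstar(c+2\Delta)=\{-1\}$, and this interval has positive mass. The case $\Delta<0$ is symmetric, using $c\in(0,-2\Delta)$.

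\emph{Step 4 (the mean is not optimal).} Compute the right derivative of $R_{P,\mathrm{SPO+}}$ at $\theta=0$. The difference quotients of $|c-2\theta|$ are bounded by $2$, so dominated convergence applies. Since $P$ has no atom at $0$, the derivative of $|c-2\theta|$ at $\theta=0$ is $-2\operatorname{sign}(c)$ almost surely. Hence
\[
R'(0)=-2\E[\operatorname{sign}c]-2\E[\operatorname{sign}c]=-4\bigl(\Pr(c>0)-\Pr(c<0)\bigr)=-\tfrac43<0 .
\]
So $R(\theta)<R(0)$ for small $\theta>0$, and therefore $0\notin\argmin_\theta R_{P,\mathrm{SPO+}}(\theta)$.

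\emph{Main obstacle.} The only delicate point is justifying the derivative interchange. It is handled by the Lipschitz bound together with the absence of an atom at $0$, which is exactly why a density was chosen. Conceptually, the argument shows that SC controls the shape of the excess-risk identity, but without symmetry the subgradient $\E[g_c]$ from Proposition~\ref{prop:bayes-minimizer} need not vanish at the mean.
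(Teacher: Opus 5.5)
Your proposal is correct and follows essentially the paper's argument: full support gives SC via Theorem~\ref{thm:full-support}, and the risk derivative at $\theta=0$ is $-4\E[\operatorname{sign}(C)]$, which is nonzero because the mean-zero law puts unequal mass on the two sides of zero. The differences are cosmetic: you use a mean-zero two-piece exponential mixture with $\Pr(c>0)=2/3$ instead of the paper's shifted asymmetric Laplace with $\Pr(C>0)=\tfrac23e^{-1/2}$, and your dominated-convergence justification of the derivative interchange is more careful than the paper's ``differentiable almost surely'' remark.
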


Appendix~\ref{app:extensions} gives the proof of
Theorem~\ref{thm:selection-balanced-identification}, an asymmetric
counterexample for SC alone, and the approximate-symmetry localization result.
Because reflection preserves full support, Theorem~\ref{thm:full-support}
also gives exact quotient-report identification under selection balance and
full support.

For the contextual result, let $(\mathcal X,\mathcal A)$ be standard Borel,
$P_X$ a probability measure, and $K(x,\cdot)$ a Borel kernel for $C\mid X=x$.
Set $\mu(x)=\int cK(x,dc)$ and
$m(x)=\int\|c\|K(x,dc)$.

For the conditional law, define
\begin{equation}
  R_{x,\mathrm{SPO+}}(\theta)
  :=\int \lossSPOP(\theta,c)\,K(x,dc).
  \label{eq:conditional-spo-plus-risk}
\end{equation}

\begin{theorem}[Contextual SPO+ Bayes identification]
\label{thm:contextual-identification}
Assume $m(x)<\infty$ $P_X$-a.s. and $\int m\,dP_X<\infty$. Suppose there is
$E\in\mathcal A$ with $P_X(E)=1$ such that, for every $x\in E$ and every
Borel set $A\subseteq\R^d$ and every $\Delta\in L\setminus\{0\}$,
\begin{align}
 K(x,A)&=K(x,2\mu(x)-A),\nonumber\\
 K\left(x,\{c:\wstar(c)\notin\Wstar(c+2\Delta)\}\right)&>0.
 \label{eq:conditional-strict-crossing}
\end{align}
For any measurable predictor class $\mathcal F$ containing $\mu$ and satisfying
$\E\|f(X)\|<\infty$ for every $f\in\mathcal F$,
$\mathcal R(f)=\E[\lossSPOP(f(X),C)]$ is well-defined and
$\mu$ is its unique minimizer modulo the equivalence
$P_Lf(X)=P_L\mu(X)$ $P_X$-almost surely.
\end{theorem}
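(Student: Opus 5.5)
The plan is to reduce the contextual statement to the fixed-covariate results, applied pointwise in $x$, and then integrate.

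\textbf{Well-definedness.} For measurable $f$ with $\E\|f(X)\|<\infty$, the map $(x,c)\mapsto\lossSPOP(f(x),c)$ is jointly measurable. This uses that $\supportfun$ and $\zstar$ are continuous and that $\wstar$ is a fixed Borel selection. Compactness of $\Sset$ gives $|\lossSPOP(\theta,c)|\le M_0(\|\theta\|+\|c\|)$, with $M_0$ depending only on $\sup_{w\in\Sset}\|w\|$. Hence $\E|\lossSPOP(f(X),C)|\le M_0(\E\|f(X)\|+\int m\,dP_X)<\infty$. By disintegration through the kernel $K$ (Tonelli on the absolute value, then Fubini),
\[
\mathcal R(f)=\int R_{x,\mathrm{SPO+}}(f(x))\,P_X(dx),
\]
with each inner risk finite whenever $m(x)<\infty$. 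Applying this with $f=\mu$, which lies in $\mathcal F$, gives $\mathcal R(\mu)$ finite and also $\E\|\mu(X)\|\le\int m\,dP_X<\infty$.

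\textbf{Pointwise identification.} Fix $x\in E$ with $m(x)<\infty$; this set still has full $P_X$-measure. The first condition in \eqref{eq:conditional-strict-crossing} says $K(x,\cdot)$ is centrally symmetric about $\mu(x)$, so Assumption~\ref{assump:centered-symmetric} holds for the law $K(x,\cdot)$ with $\cbar=\mu(x)$. Lemma~\ref{lem:centered-excess-identity} then gives
\[
R_{x,\mathrm{SPO+}}(\mu(x)+\Delta)-R_{x,\mathrm{SPO+}}(\mu(x))=\int \crossgap{\Delta}(c)\,K(x,dc)\ge0
\]
for every $\Delta\in\R^d$. The second condition is SC for the law $K(x,\cdot)$. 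By Theorem~\ref{thm:identification-sc}, together with the quotient invariance of Proposition~\ref{prop:affine-hull-obstruction}, this pointwise excess is strictly positive whenever $P_L\Delta\neq0$ and zero when $P_L\Delta=0$. Integrating, $\mathcal R(f)-\mathcal R(\mu)=\int D(x)\,P_X(dx)\ge0$, where $D(x):=R_{x,\mathrm{SPO+}}(f(x))-R_{x,\mathrm{SPO+}}(\mu(x))$. So $\mu$ minimizes $\mathcal R$ over $\mathcal F$, and every $f$ with $P_Lf=P_L\mu$ a.s.\ attains the same value.

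\textbf{Uniqueness modulo the equivalence.} Suppose $\mathcal R(f)=\mathcal R(\mu)$. Then the nonnegative integrand $D$ vanishes $P_X$-a.s. By the strict positivity above, $P_L(f(x)-\mu(x))=0$ for a.e.\ $x$, which is exactly $P_Lf(X)=P_L\mu(X)$ a.s. For this to be rigorous, $D$ must be a measurable function of $x$; this follows from joint measurability and Fubini, since $D(x)$ is the conditional expectation of the jointly measurable loss difference.

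\textbf{Main obstacle.} The hard part is the measure-theoretic bookkeeping rather than the geometry. One must check that the fixed-covariate lemma applies at every $x\in E$ simultaneously with the single global selection $\wstar$. One must also verify that $\mu$ is measurable; it is, being a kernel integral of an integrable function. Finally, strict pointwise positivity has to be converted into an a.s.\ statement without any uniformity. No uniformity is needed: a nonnegative measurable function with zero integral vanishes a.s., and strict positivity holds at each good $x$ individually. So the quantifier over $\Delta$ in SC is used only pointwise at the random displacement $\Delta=f(x)-\mu(x)$. That is why SC must be assumed for every $\Delta$ at each $x\in E$, rather than only for almost every $\Delta$.
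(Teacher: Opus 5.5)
Your proposal is correct and takes essentially the same route as the paper. You apply Theorem~\ref{thm:identification-sc} to each conditional law $K(x,\cdot)$ with $x\in E$, integrate the nonnegative pointwise excess via Tonelli/Fubini, and conclude that $P_Lf(X)\neq P_L\mu(X)$ on a set of positive $P_X$-measure would force $\mathcal R(f)>\mathcal R(\mu)$. Your additional bookkeeping (the linear loss envelope for well-definedness, and measurability of the pointwise excess $D$) only fills in details that the paper compresses into its joint-measurability and Tonelli remarks.
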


Appendix~\ref{app:extensions} gives the corresponding conditional quantitative
bound. The completed empirical scope is stated in
Section~\ref{sec:real-experiments} and Appendix~\ref{sec:experiment-protocol}.
Those diagnostics do not turn these population Bayes results into
finite-sample, neural-optimization, or shifted-task regret guarantees.

\section{Proofs for Symmetry Relaxation and Contextual Prediction}
\label{app:extensions}

For approximate symmetry, let $P$ denote the law of $C$ and write
$\mu=\E_P[C]$. Set $P^\circ(A)=P(2\mu-A)$,
$P^{\mathrm{sym}}=(P+P^\circ)/2$, $B_{\Sset}=\sup_{w\in\Sset}\|w\|$, and
$\eta=\sup_{h\ \mathrm{Borel},\,\|h\|_\infty\leq1}
|\int h\,d(P-P^{\mathrm{sym}})|$.
\begin{theorem}[Approximate-symmetry Bayes-set localization]
\label{thm:approximate-symmetry-localization}
If $p>1$ and, for every $\Delta\in L$ with $0<\|\Delta\|\leq r$,
\begin{equation}
 R_{P^{\mathrm{sym}},\mathrm{SPO+}}(\mu+\Delta)-R_{P^{\mathrm{sym}},\mathrm{SPO+}}(\mu)
 \geq C\|\Delta\|^p,
 \label{eq:symmetrized-coercivity}
\end{equation}
and $4B_{\Sset}\eta<Cr^{p-1}$, then the risk has a global minimizer and every
quotient minimizer $[\theta_P]$ satisfies
\begin{equation}
 \|P_L(\theta_P-\mu)\|
 \leq\left(\frac{4B_{\Sset}\eta}{C}\right)^{1/(p-1)}.
 \label{eq:approximate-symmetry-localization}
\end{equation}
This localizes the Bayes set but does not establish exact uniqueness.
\end{theorem}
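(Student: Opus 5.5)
The plan is to treat $P$ as a perturbation of its symmetrization $P^{\mathrm{sym}}$ and to work only with effective displacements $\Delta\in L$. The coset $N$ is handled by the quotient invariance of Proposition~\ref{prop:affine-hull-obstruction}, which holds for every law.

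\textbf{Properties of $P^{\mathrm{sym}}$.} First I record three facts about $P^{\mathrm{sym}}=(P+P^\circ)/2$:
\begin{itemize}
\item it has a finite first moment, since both $P$ and $P^\circ$ do;
\item its mean is $\mu$, since $P^\circ$ has mean $2\mu-\mu=\mu$;
\item it is centrally symmetric about $\mu$, since reflection swaps $P$ and $P^\circ$.
\end{itemize}
Hence Assumption~\ref{assump:centered-symmetric} holds for $P^{\mathrm{sym}}$. Proposition~\ref{prop:bayes-minimizer} then makes $R_{P^{\mathrm{sym}},\mathrm{SPO+}}$ convex with minimizer $\mu$. Convexity of $R_{P,\mathrm{SPO+}}$ follows the same way, from pointwise convexity of the loss and integrability under $\E\|C\|<\infty$.

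\textbf{Uniform increment bound.} The key estimate is a bound on the loss increment,
\[
 \bigl|\lossSPOP(\mu+\Delta,c)-\lossSPOP(\mu,c)\bigr|\leq 4B_{\Sset}\|\Delta\|\qquad\text{for all }c.
\]
Two facts give it. The support function $\supportfun$ is $B_{\Sset}$-Lipschitz, so the $\supportfun$ term changes by at most $2B_{\Sset}\|\Delta\|$. The linear term changes by $2\Delta^\top\wstar(c)$, which is at most $2B_{\Sset}\|\Delta\|$ in absolute value.

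For $\Delta\neq0$, set $h:=(\lossSPOP(\mu+\Delta,\cdot)-\lossSPOP(\mu,\cdot))/(4B_{\Sset}\|\Delta\|)$. This is Borel, because $\wstar$ is measurable and $\supportfun$ is continuous, and it satisfies $\|h\|_\infty\leq1$. The definition of $\eta$ therefore gives
\[
 R_{P,\mathrm{SPO+}}(\mu+\Delta)-R_{P,\mathrm{SPO+}}(\mu)
 \ge R_{P^{\mathrm{sym}},\mathrm{SPO+}}(\mu+\Delta)-R_{P^{\mathrm{sym}},\mathrm{SPO+}}(\mu)-4B_{\Sset}\eta\|\Delta\|.
\]
Combining this with \eqref{eq:symmetrized-coercivity}, for $\Delta\in L$ with $0<\|\Delta\|=\rho\le r$ the excess of $R_P$ is at least $C\rho^p-4B_{\Sset}\eta\rho$. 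This is strictly positive whenever $\rho>\rho_0:=(4B_{\Sset}\eta/C)^{1/(p-1)}$, and here $p>1$ is used. The hypothesis $4B_{\Sset}\eta<Cr^{p-1}$ is exactly the condition $\rho_0<r$, so the interval $(\rho_0,r]$ is nonempty.

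\textbf{Globalizing by convexity.} Fix any $\rho\in(\rho_0,r]$ and a point $\mu+su$ with $u\in L$, $\|u\|=1$, and $s>\rho$. Write $\mu+\rho u$ as a convex combination of $\mu$ and $\mu+su$. Convexity of $R_P$ then forces $R_P(\mu+su)>R_P(\mu)$, so no point outside the $L$-ball of radius $\rho_0$ about $\mu$ can be a minimizer.

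For existence, $R_P$ restricted to $\mu+L$ is finite and convex, hence continuous. It therefore attains its minimum on the compact ball $\overline B_L(\mu,r)$. By the previous paragraph that minimum is global on $\mu+L$, and by quotient invariance it is global on $\R^d$. Every quotient minimizer $[\theta_P]$ thus has $\|P_L(\theta_P-\mu)\|\le\rho_0$, which is \eqref{eq:approximate-symmetry-localization}.

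\textbf{Main obstacle.} The step needing the most care is the uniform increment bound. It must hold pointwise for every $c$, with the constant $4B_{\Sset}$ and with no moment of $c$ entering; this is what lets the sup over bounded Borel $h$ in the definition of $\eta$ absorb the perturbation. I also need to check that $R_P$ is finite everywhere, so that the convexity argument applies without extended-value issues. Finiteness follows from the linear-growth bound on the loss and $\E\|C\|<\infty$.
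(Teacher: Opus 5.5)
Your proposal is correct and takes essentially the same route as the paper. The pointwise $4B_{\Sset}\|\Delta\|$ bound on the loss increment transfers the symmetrized coercivity to $P$ up to an error $4B_{\Sset}\eta\|\Delta\|$, which makes the excess positive beyond $\rho_0=(4B_{\Sset}\eta/C)^{1/(p-1)}<r$; convexity, continuity on the compact $L$-ball, and quotient invariance then give existence and the localization bound.
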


\begin{corollary}[Contextual quantitative identification]
\label{cor:contextual-quantitative-identification}
Under Theorem~\ref{thm:contextual-identification}, suppose that for
$P_X$-a.e. $x$ and every $\Delta\in L$ with $0<\|\Delta\|\leq r$,
\begin{equation}
 R_{x,\mathrm{SPO+}}(\mu(x)+\Delta)-R_{x,\mathrm{SPO+}}(\mu(x))
 \geq C\|\Delta\|^p.
 \label{eq:conditional-quantitative-growth}
\end{equation}
Then every $f\in\mathcal F$ with
$\|P_L(f(X)-\mu(X))\|\leq r$ almost surely obeys
\begin{equation}
 \mathcal R(f)-\mathcal R(\mu)
 \geq C\,\E\|P_L(f(X)-\mu(X))\|^p.
 \label{eq:contextual-quantitative-identification}
\end{equation}
\end{corollary}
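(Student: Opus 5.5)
The corollary follows by applying the fixed-covariate excess identity pointwise in $x$ and then integrating. Fix $f\in\mathcal F$ with $\|P_L(f(X)-\mu(X))\|\leq r$ almost surely. Set $D(x):=P_L(f(x)-\mu(x))\in L$ and $n(x):=(I-P_L)(f(x)-\mu(x))\in L^\perp$. Both are measurable, since $f$ and $\mu$ are measurable; $\mu$ is measurable because the kernel is Borel and the first moment is integrable.

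First, condition on $X$. Because $\E\|f(X)\|<\infty$ and $\int m\,dP_X<\infty$, the loss $\lossSPOP(f(X),C)$ is integrable by the linear envelope used in the proof of Proposition~\ref{prop:bayes-minimizer}. This envelope bounds the loss by a constant times $\|c\|+\|\hat c\|+1$. Disintegration then gives
\[
\mathcal R(f)-\mathcal R(\mu)=\int\bigl[R_{x,\mathrm{SPO+}}(f(x))-R_{x,\mathrm{SPO+}}(\mu(x))\bigr]\,P_X(dx),
\]
with both conditional risks finite for $P_X$-a.e.\ $x$. Here one uses the standard Borel structure and a Fubini argument for the kernel $K$.

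Next, treat each fixed $x$. Quotient invariance (Proposition~\ref{prop:affine-hull-obstruction}) applies pointwise to the loss, so it holds under any law. It gives $R_{x,\mathrm{SPO+}}(f(x))=R_{x,\mathrm{SPO+}}(\mu(x)+D(x))$, because $f(x)=\mu(x)+D(x)+n(x)$ with $n(x)\in L^\perp$. If $D(x)=0$, the integrand vanishes, and this matches $C\|D(x)\|^p=0$. Otherwise $0<\|D(x)\|\leq r$ for a.e.\ such $x$, and hypothesis \eqref{eq:conditional-quantitative-growth} gives an integrand of at least $C\|D(x)\|^p$. Integrating against $P_X$ yields \eqref{eq:contextual-quantitative-identification}.

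The main obstacle is the first step: justifying the disintegration and the integrability of the difference. It requires checking that $x\mapsto R_{x,\mathrm{SPO+}}(f(x))$ is measurable and that the conditional risks are finite off a null set. I would handle this by bounding $|\lossSPOP(\theta,c)|\leq a(\|c\|+\|\theta\|+1)$ with $a$ depending only on $\sup_{w\in\Sset}\|w\|$. Joint measurability of $(x,c)\mapsto\lossSPOP(f(x),c)$ comes from continuity of $\supportfun$ and $\zstar$ and measurability of $\wstar$. Tonelli's theorem applied to the positive and negative parts then gives the integrated identity. The null set on which the growth hypothesis fails is absorbed into the a.e.\ integral.
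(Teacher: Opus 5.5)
Your proposal is correct and follows the paper's own proof. You remove the $L^\perp$ component by pointwise quotient invariance, apply the conditional growth bound at $\Delta=P_L(f(x)-\mu(x))$ off a $P_X$-null set, and integrate via the same kernel/Tonelli disintegration used for Theorem~\ref{thm:contextual-identification}; your separate treatment of the case $P_L(f(x)-\mu(x))=0$ (which the hypothesis $0<\|\Delta\|$ does not cover) and your integrability envelope just fill in details the paper leaves implicit.
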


\begin{proof}[Proof of Theorem~\ref{thm:symmetry-free-obstruction}]
Let $Y$ have asymmetric Laplace density
\[
 f_Y(y)=\frac{ab}{a+b}
 \begin{cases}
 e^{-ay},&y\geq0,\\
 e^{by},&y<0,
 \end{cases}
 \qquad a=1,\quad b=2,
\]
and set $C=Y-\E[Y]=Y-1/2$.  The law has full support and mean zero, so the
geometric full-support argument gives SC.  At $\theta=0$, the SPO+ risk is
differentiable almost surely and
\[
 R'_{P,\mathrm{SPO+}}(0)
 =2\E[\wstar(C)-\wstar(-C)]
 =-4\E[\operatorname{sign}(C)].
\]
But $\Pr(C>0)=\Pr(Y>1/2)=\frac23e^{-1/2}\neq\frac12$, so the derivative is
nonzero.  Hence zero is not a minimizer.
\end{proof}

\begin{proof}[Proof of Theorem~\ref{thm:selection-balanced-identification}]
Set $C^\circ=2\mu-C$. Subtracting \eqref{eq:spo-plus-loss} at $\mu$ from
its value at $\mu+\Delta$, using
$\supportfun(-q)=-\zstar(q)$ and
$(C^\circ)^\top\wstar(C^\circ)=\zstar(C^\circ)$, gives the pointwise identity
\[
 \lossSPOP(\mu+\Delta,C)-\lossSPOP(\mu,C)
 =\crossgap{\Delta}(C^\circ)
 +2\Delta^\top\bigl(\wstar(C)-\wstar(C^\circ)\bigr).
\]
Compactness of $\Sset$ and integrability of $C$ justify taking expectations.
The balance assumption cancels the final term, so
\begin{equation}
 R_{P,\mathrm{SPO+}}(\mu+\Delta)-R_{P,\mathrm{SPO+}}(\mu)
 =\E[\crossgap{\Delta}(C^\circ)].
 \label{eq:selection-balanced-excess-identity}
\end{equation}
The random variable on the right is nonnegative and vanishes exactly when
$\wstar(C^\circ)\in\Wstar(C^\circ+2\Delta)$. Thus the reflected crossing
condition in Theorem~\ref{thm:selection-balanced-identification} makes the
excess strictly positive for every nonzero $\Delta\in L$. Quotient invariance then gives
$\argmin R_{P,\mathrm{SPO+}}=\mu+N$. Conversely, failure of
that condition makes the right-hand side of
\eqref{eq:selection-balanced-excess-identity} zero for a nonzero effective
displacement, so uniqueness in the quotient fails.
\end{proof}

\begin{proof}[Proof of Theorem~\ref{thm:approximate-symmetry-localization}]
For an effective displacement $\Delta\in L$, write
$D_P(\Delta)=R_{P,\mathrm{SPO+}}(\mu+\Delta)-R_{P,\mathrm{SPO+}}(\mu)$.
The report-dependent loss difference is bounded uniformly in $c$ by
$4B_{\Sset}\|\Delta\|$: the support-function difference is at most
$2B_{\Sset}\|\Delta\|$ by Lipschitz continuity, and the linear term
$2\Delta^\top\wstar(c)$ contributes at most another
$2B_{\Sset}\|\Delta\|$. Therefore
\[
 |D_P(\Delta)-D_{P^{\mathrm{sym}}}(\Delta)|
 \leq4B_{\Sset}\eta\|\Delta\|.
\]
Combining this bound with \eqref{eq:symmetrized-coercivity} for $\Delta\in L$
yields
\[
 D_P(\Delta)\geq C\|\Delta\|^p
 -4B_{\Sset}\eta\|\Delta\|
 \quad(0<\|\Delta\|\leq r).
\]
The small-asymmetry condition makes this expression positive on the boundary
of the radius-$r$ ball in $L$. Continuity and convexity give a minimizer in its
interior in the quotient; convexity rules out a lower value outside the ball.
Every report decomposes as $\mu+\Delta+n$ with $\Delta\in L$ and $n\in N$, and
the risk is invariant in $n$. Thus a quotient minimizer has representative
$\theta_P=\mu+\Delta_P$, and the displayed lower bound with
$D_P(\Delta_P)\leq0$ implies \eqref{eq:approximate-symmetry-localization}.
\end{proof}

\begin{proof}[Proof of Theorem~\ref{thm:contextual-identification}]
The loss is Borel measurable in $(x,c,\theta)$ and continuous in $\theta$.
Kernel integration therefore makes
$(x,\theta)\mapsto R_{x,\mathrm{SPO+}}(\theta)=\int\lossSPOP(\theta,c)K(x,dc)$
jointly Borel measurable.  For every $x\in E$, Theorem~\ref{thm:identification-sc} applies to the
conditional law and shows
\[
 R_{x,\mathrm{SPO+}}(\theta)=R_{x,\mathrm{SPO+}}(\mu(x))
 \quad\Longleftrightarrow\quad P_L\theta=P_L\mu(x).
\]
Integrability and Tonelli's theorem give
\[
 \mathcal R(f)-\mathcal R(\mu)
 =\E\bigl[R_{X,\mathrm{SPO+}}(f(X))
 -R_{X,\mathrm{SPO+}}(\mu(X))\bigr]\geq0.
\]
If $P_Lf(X)\neq P_L\mu(X)$ on a set of positive $P_X$ measure, the integrand
is strictly positive there, proving uniqueness in the quotient modulo
$P_X$-almost-sure equality of projected reports.
\end{proof}

\begin{proof}[Proof of Corollary~\ref{cor:contextual-quantitative-identification}]
Apply the assumed conditional lower bound at the effective displacement
$\Delta=P_L(f(X)-\mu(X))\in L$ and use quotient invariance. The radius condition
holds almost surely, so Tonelli's theorem yields the claimed inequality.
\end{proof}

\section{Real-Data Protocol and Evidence Scope}
\label{sec:experiment-protocol}

The falsifiable empirical hypothesis is that, at similar training-task decision
quality, better predictive fidelity may coincide with less degradation under a
pre-specified optimization shift. The theorems do not imply this hypothesis, and
the observed outcomes do not reveal a latent conditional Bayes report.

All three activated applications are complete. Portfolio is evaluated as a
continuous linear program (LP). Recommendation uses a complete common KuaiRec
user-item matrix, so
exact additive top-$k$ regret is available rather than logged-only evaluation.
Energy/Storage is evaluated both as an LP relaxation and as binary scheduling,
with an appended constant feature in every formal
decision-focused model. Appendix~\ref{app:real-experiments} records the full
method-by-shift matrices and ablations.

Appendix~\ref{app:semisynthetic-experiments} reports a separate mechanism run
with a known data-generating process (DGP). It keeps the three application
geometries fixed and generates
costs with a known conditional mean. Its conditional-mean, crossing, and
transfer curves are not additional tests of the original observational
hypothesis; they isolate the assumptions used by the population analysis.

For realized cost $c_i$, predicted report $\hat c_i$, shifted feasible set
$S_s$, optimizer $w_s$, and optimum value $z_s$, we report
\begin{equation}
 \widehat{R}_s=\frac1n\sum_{i=1}^n
 \left[c_i^\top w_s(\hat c_i)-z_s(c_i)\right].
 \label{eq:real-data-regret}
\end{equation}
Stochastic methods use five seeds and seed-level 95\% $t$ intervals. Complete
per-example rows are retained for bootstrap auditing, but examples are never
pooled across seeds as independent replications. Deterministic baselines are
reported as such. Root mean squared error (RMSE) is a predictive proxy, not a
measurement of quotient recovery or conditional identification.

\section{Complete Real-Data Results and Ablations}
\label{app:real-experiments}

This appendix reports every method and pre-specified shift used in the three
real-data studies.  Means are over seeds $0,\ldots,4$.  Error bars in the
figures are seed-level 95\% $t$ intervals.  Ordinary least squares (OLS) and
ridge regression (Ridge), together with the deterministic
linear KuaiRec fits produce identical rows when replayed under the five seed
indices; we mark these by $\dagger$ and do not interpret the repeated rows as
independent fits.  Per-example bootstrap intervals, checkpoints, and the exact
unrounded values remain in the accompanying CSV files.

\paragraph{Method definitions.}
All linear reports have the form $\hat c_B(x)=Bx$; Energy appends a constant
feature, while Ridge uses an unpenalized intercept.  With empirical average
$\widehat\E$, the two SPO+ objectives are
\[
  \widehat J_{\mathrm{SPO+}}(B)=\widehat\E\,\ell_{\mathrm{SPO+}}(Bx,c),
  \qquad
  \widehat J_{\mathrm{hyb}}(B)=\widehat J_{\mathrm{SPO+}}(B)+
  \lambda\widehat\E\|Bx-c\|_2^2.
\]
Here $\lambda=0.1$ in the reported hybrid cells.  Table~\ref{tab:method-definitions}
records the remaining training choices; $T$ counts parameter-update steps,
not epochs. $T$, batch size, and $\eta_0$ are listed as
Portfolio/KuaiRec/Energy, respectively.

\begin{table}[H]
\centering
\caption{Self-contained definitions of the empirical methods. The
perturbed-oracle decision-gradient method (PG) uses
Gaussian perturbations $Z$: $\sigma$ is their report-space smoothing scale and
$\eta$ is the decayed step size $\eta_0/\sqrt{t+1}$ at update step $t$.}
\label{tab:method-definitions}
\scriptsize
\begin{tabular}{p{0.17\textwidth}p{0.48\textwidth}p{0.26\textwidth}}
\hline
Method & Objective and model & Training and randomness \\
\hline
OLS / Ridge & Linear squared-report regression with ridge coefficient
$10^{-3}$; the intercept is unpenalized when used. & Deterministic. \\
Linear SPO+ & $\widehat J_{\mathrm{SPO+}}(B)$, optimized by the SPO+
subgradient through the fixed oracle. & $T=40/100/40$;
batch $=128$/full/$32$; $\eta_0=1/1/.01$. Stochastic except full-batch KuaiRec. \\
SPO+ + report & $\widehat J_{\mathrm{hyb}}(B)$; thus $\lambda$ multiplies the
squared report residual, not decision regret. & Same backbone and budget as
Linear SPO+; $\lambda=.1$. \\
PG weak report & Linear report trained with the perturbed-oracle decision
gradient $\widehat\E[(w^*(Bx+\sigma Z)-w^*(c))/\sigma]x^\top$. ``Weak
report'' means that no squared report-fidelity term is included. & Stochastic:
$n_{\rm pert}=3/16/3$ Gaussian perturbations/example; $\sigma=.01/.01/10$,
$\eta_0=.001/.01/10^{-5}$, and the same $T$; batches $128/512/32$.
Portfolio uses a low-learning-rate setting chosen for numerical stability
after validation screening; KuaiRec uses validation regret as the primary
selection criterion. \\
Neural SPO+ & KuaiRec-only MLP report $x\mapsto\hat c(x)$ trained with the
SPO+ chain-rule subgradient. & Stochastic Adam; hidden widths $(128,64)$,
100 epochs, batch $128$, learning rate $10^{-3}$, weight decay $10^{-4}$. \\
\hline
\end{tabular}
\end{table}

\begin{table}[H]
\centering
\caption{Final PG weak-report configurations. Values are five-seed means $\pm$
sample SD; the reported settings were selected using validation screening
and refit before the reported test evaluation.}
\label{tab:pg-stability-audit}
\scriptsize
\begin{tabular}{llrrr}
\hline
Dataset & PG configuration $(\sigma,\eta_0,n_{\rm pert})$ & Validation RMSE
& Test RMSE & Test shifted regret \\
\hline
Portfolio & $(.01,.001,3)$ & $0.945\pm0.029$ & $0.878\pm0.027$ & $0.01517\pm0.00009$ \\
KuaiRec & $(.01,.01,16)$ & $145.996\pm1.274$ & $150.60\pm1.50$ & $93.23\pm1.11$ \\
\hline
\end{tabular}
\end{table}

\subsection{Portfolio}

For PG, both screening grids fixed $\sigma=.01$, used five seeds, and crossed
$\eta_0\in\{1,.1,.01,.001\}$ with $n_{\rm pert}\in\{3,16\}$.
Portfolio screened 40-step fits on a fixed 512-example validation prefix.
Validation regret was nearly tied across cells, whereas RMSE and parameter
norm increased sharply with $\eta_0$; we chose $(.001,3)$ for numerical
stability, not because it minimized validation regret. The reported Portfolio
checkpoint was then refit for 40 steps and evaluated on the full validation
split and test split. KuaiRec used 100-step fits and selected $(.01,16)$ by
mean validation top-$k$ regret, with RMSE as a secondary criterion.
The original $(1,3)$ PG cell remains in both validation grids.

The chronological split is 70/15/15.  Features contain only lagged returns,
historical rolling volatility/momentum, and calendar variables.  Table
\ref{tab:portfolio-complete} and Figure~\ref{fig:portfolio-performance} show
that the pre-specified cap shift changes regret similarly for all methods although
their report RMSE differs by up to roughly sevenfold.

\begin{table}[H]
\centering
\caption{Complete Portfolio five-seed means $\pm$ sample SD.  $R_{0.10}$ is
training-task regret and $R_{0.12}$ is shifted regret. $\dagger$ denotes a
deterministic fit.}
\label{tab:portfolio-complete}
\scriptsize
\resizebox{\textwidth}{!}{%
\begin{tabular}{lccc}
\hline
Method & Report RMSE & $R_{0.10}$ & $R_{0.12}$ \\
\hline
OLS$^\dagger$ & $0.1324\pm0.0000$ & $0.013924\pm0.000000$ & $0.015078\pm0.000000$ \\
SPO+ & $0.2036\pm0.0146$ & $0.013977\pm0.000050$ & $0.015160\pm0.000058$ \\
SPO+ + report & $0.2114\pm0.0175$ & $0.013955\pm0.000088$ & $0.015120\pm0.000109$ \\
PG weak report & $0.878\pm0.027$ & $0.013972\pm0.000071$ & $0.015170\pm0.000086$ \\
\hline
\end{tabular}}
\end{table}

\begin{figure}[H]
  \centering
  \includegraphics[width=0.72\textwidth]{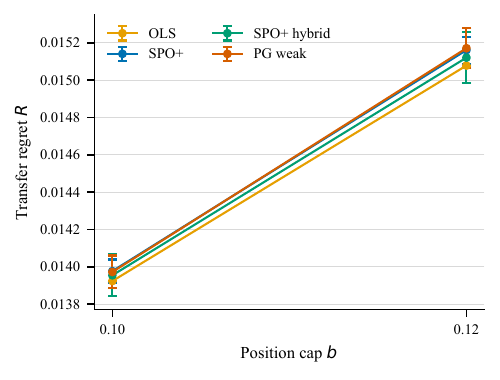}
  \caption{Portfolio regret under the training cap $b=0.10$ and shifted cap
  $b=0.12$.  Error bars are seed-level 95\% intervals; deterministic OLS has no
  seed variation.}
  \label{fig:portfolio-performance}
\end{figure}

The report-penalty ablation uses
$\lambda\in\{0,0.01,0.1,1\}$.  At $b=0.12$, $\lambda=0.01$ gives
$0.015164\pm0.000067$ regret (sample SD) and RMSE
$0.207\pm0.025$.  The $\lambda=1$ endpoint has similar regret
($0.015192\pm0.000082$) but RMSE
$(3.14\pm2.74)\times10^6$ and degenerate crossing slopes.  Thus
$\lambda\in\{0,0.01,0.1\}$ supplies a stable small-grid check. The
$\lambda=1$ result is numerically unstable under the fixed step-size schedule
and does not establish a geometric failure boundary.

\begin{figure}[H]
  \centering
  \includegraphics[width=0.99\textwidth]{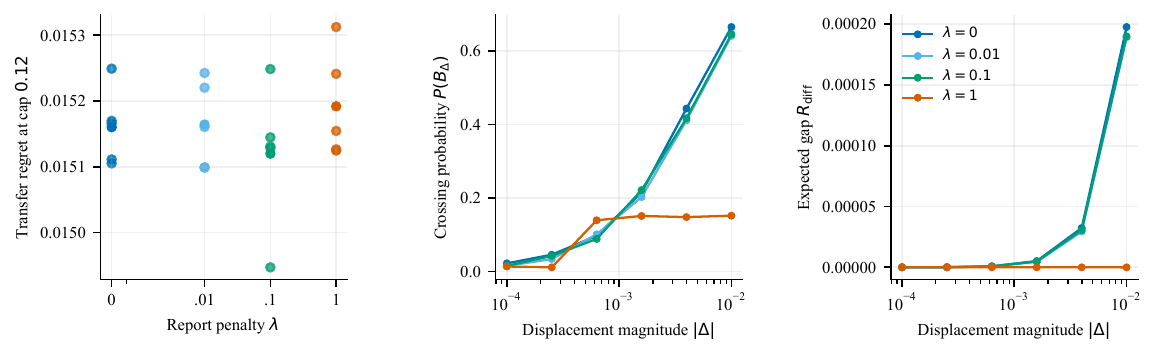}
  \caption{Portfolio report-penalty ablation.  Left: five seed outcomes at the
  shifted cap.  Middle and right: symmetrized crossing probability and
  unconditional expected transition gap versus displacement.  The
  $\lambda=1$ model has exploding RMSE; its crossing slopes are unreliable.}
  \label{fig:portfolio-penalty}
\end{figure}

\begin{figure}[H]
  \centering
  \includegraphics[width=0.96\textwidth]{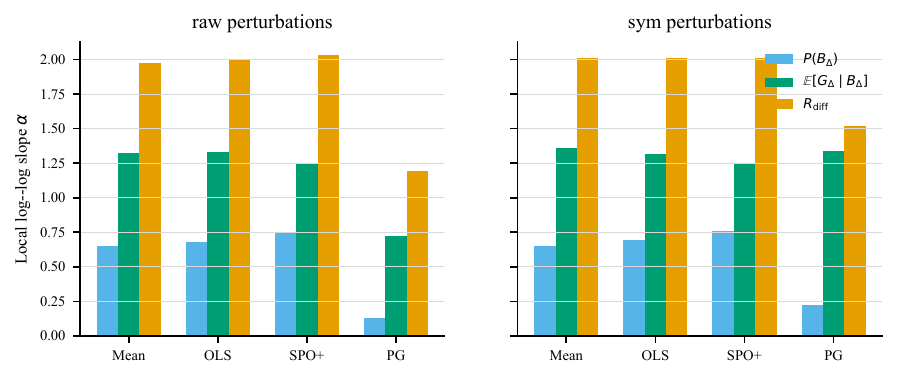}
  \caption{Portfolio local log-log slopes for raw and symmetrized
  perturbations. Mean, OLS, and SPO+ give the decomposition
  $P(B_\Delta)\E[G_\Delta\mid B_\Delta]=R_{\mathrm{diff}}$ with an
  approximately quadratic product. PG is retained as a contrast. These
  finite-sample diagnostics do not verify symmetry or strict crossing of the
  population law.}
  \label{fig:portfolio-real-geometry}
\end{figure}

\subsection{Complete-matrix KuaiRec recommendation}

Users are disjoint across train, validation, and test splits.  Candidates with
any missing feedback in the common matrix are removed rather than imputed.
Costs are therefore observed for every candidate used in exact additive top-$k$
regret.  This is not a logged-bandit or off-policy evaluation (OPE) or
position-counterfactual experiment.

\begin{table}[H]
\centering
\caption{Complete KuaiRec five-seed means $\pm$ sample SD. The MLP report model
is auxiliary; all other rows are pre-specified formal methods. $\dagger$ marks
deterministic linear fits.}
\label{tab:recommendation-complete}
\small
\resizebox{\textwidth}{!}{%
\begin{tabular}{lrrrr}
\hline
Method & Report RMSE & $R_{k=5}$ & $R_{k=10}$ & $R_{k=15}$ \\
\hline
Ridge$^\dagger$ & $58.92\pm0.00$ & $53.96\pm0.00$ & $71.00\pm0.00$ & $82.68\pm0.00$ \\
Linear SPO+$^\dagger$ & $71.73\pm0.00$ & $\mathbf{50.63}\pm0.00$ & $\mathbf{66.25}\pm0.00$ & $76.86\pm0.00$ \\
SPO+ + report$^\dagger$ & $\mathbf{58.04}\pm0.00$ & $50.99\pm0.00$ & $66.71\pm0.00$ & $77.43\pm0.00$ \\
PG weak report & $150.60\pm1.50$ & $57.76\pm0.56$ & $78.77\pm1.01$ & $93.23\pm1.11$ \\
Neural SPO+ (pre-specified) & $100.53\pm2.31$ & $53.16\pm0.20$ & $70.72\pm0.33$ & $83.23\pm0.25$ \\
MLP (MSE, auxiliary) & $58.71\pm0.02$ & $52.03\pm0.14$ & $67.88\pm0.28$ & $78.67\pm0.17$ \\
\hline
\end{tabular}}
\end{table}

\begin{figure}[H]
  \centering
  \includegraphics[width=0.66\textwidth]{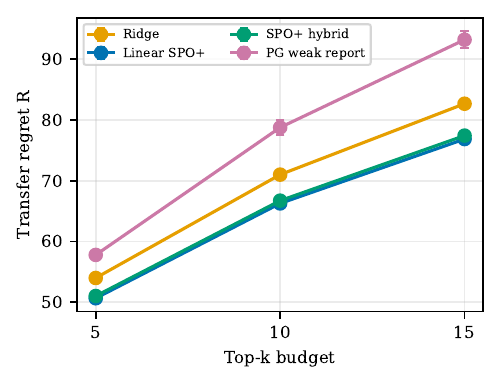}
  \caption{Complete KuaiRec regret across all pre-specified top-$k$ budgets.
  Linear SPO+ has the lowest formal-method mean at each $k$; the hybrid is
  close, and validation-selected PG remains less competitive. Error bars are seed-level 95\%
  intervals.}
  \label{fig:recommendation-budget}
\end{figure}

\begin{figure}[H]
  \centering
  \includegraphics[width=0.58\textwidth]{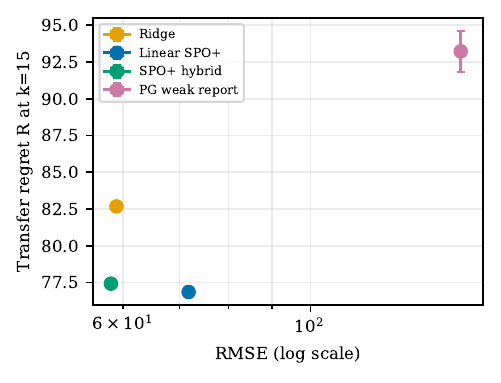}
  \caption{KuaiRec report RMSE versus $k=15$ regret.  Horizontal and vertical
  bars are seed-level 95\% intervals.  Linear SPO+ improves regret relative to
  Ridge, while the hybrid improves linear report RMSE; PG remains less competitive
  after validation-selected stabilization.}
  \label{fig:recommendation-main}
\end{figure}

The Neural SPO+ ablation varies epochs in $\{50,100\}$ and hidden widths in
$\{64,(128,64)\}$.  Within this finite grid, the 50-epoch, width-64 model has
the lowest observed regret, while the 50-epoch, $(128,64)$ model has the
lowest RMSE.  This is an optimization/capacity sensitivity check, not a test
of SC, quotient-report recovery, or neural-model-class consistency; it does
not attribute either outcome to the architecture.

\begin{table}[H]
\centering
\caption{Neural SPO+ capacity/epoch sensitivity check (five-seed means $\pm$
sample SD), not an identification or strict-crossing test.}
\label{tab:recommendation-neural-ablation}
\small
\begin{tabular}{lrrrr}
\hline
Configuration & RMSE & $R_{5}$ & $R_{10}$ & $R_{15}$ \\
\hline
50 epochs, 64 & $73.35\pm0.18$ & $\mathbf{50.67}\pm0.08$ & $\mathbf{65.96}\pm0.17$ & $\mathbf{76.55}\pm0.29$ \\
50 epochs, $128\times64$ & $\mathbf{72.32}\pm0.27$ & $51.07\pm0.04$ & $66.58\pm0.17$ & $77.36\pm0.23$ \\
100 epochs, 64 & $85.71\pm0.73$ & $51.99\pm0.12$ & $68.76\pm0.29$ & $80.36\pm0.25$ \\
100 epochs, $128\times64$ & $100.53\pm2.14$ & $53.16\pm0.20$ & $70.72\pm0.33$ & $83.23\pm0.25$ \\
\hline
\end{tabular}
\end{table}

\begin{figure}[H]
  \centering
  \includegraphics[width=0.99\textwidth]{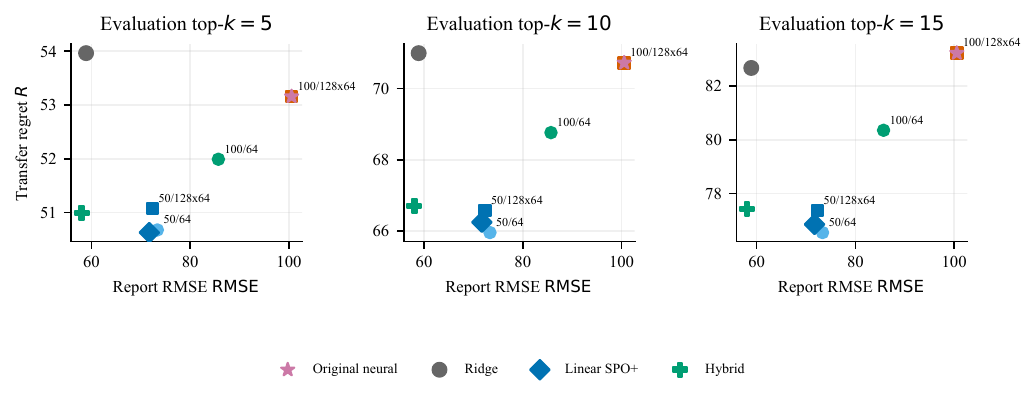}
\caption{KuaiRec RMSE versus regret on the Pareto plane at $k\in\{5,10,15\}$.
Labels give epochs/hidden widths. The originally pre-specified
100/$128\times64$ network is shown separately from the ablation-selected
50-epoch models.  This finite-grid sensitivity check does not measure
quotient-report error or establish strict crossing.}
  \label{fig:recommendation-neural-pareto}
\end{figure}

Figure~\ref{fig:recommendation-geometry} reports exact top-$k$ crossing
diagnostics. Positive crossing and transition gaps occur for Ridge,
Linear SPO+, and the hybrid at larger perturbations; the weak-report PG model
is flat in the tested neighborhood.  These curves diagnose the fitted reports
under the finite candidate matrix and do not identify a population Bayes cost.

\begin{figure}[H]
  \centering
  \includegraphics[width=0.92\textwidth]{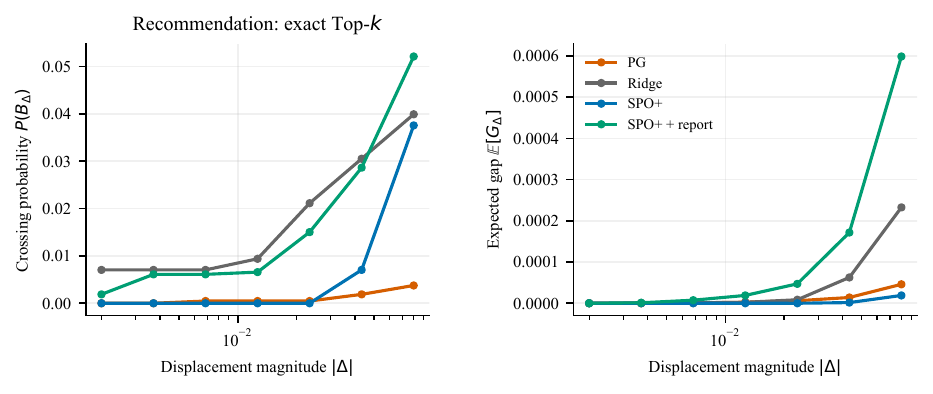}
  \caption{Complete-matrix KuaiRec top-$k$ crossing probability and expected
  transition gap.  The two panels are separated sufficiently for independent
  axis labels and share the displacement scale.}
  \label{fig:recommendation-geometry}
\end{figure}

\subsection{Energy/Storage}

Energy uses complete 48-period days, a chronological 70/15/15 split, and only
past or contemporaneously available features.  All formal decision-focused
fits standardize the decision features and append a constant feature.  The LP
and binary optimizers share the same 48-period scheduling instance.  Table
\ref{tab:energy-complete} shows near-matching mean regrets on these paired
evaluations; this does not establish integrality of the relaxation. In fact,
the LP can return fractional schedules for this instance.

\begin{table}[H]
\centering
\caption{Complete intercept-corrected Energy five-seed means $\pm$ sample SD.
Values are in thousands. Ridge is deterministic ($\dagger$).}
\label{tab:energy-complete}
\scriptsize
\resizebox{\textwidth}{!}{%
\begin{tabular}{lrrrrrrr}
\hline
& & \multicolumn{3}{c}{LP relaxation} & \multicolumn{3}{c}{Binary scheduling} \\
Method & RMSE & $s=0$ & $s=0.5$ & $s=1$ & $s=0$ & $s=0.5$ & $s=1$ \\
\hline
Ridge$^\dagger$ & $31.65\pm0.00$ & $71.10\pm0.00$ & $52.96\pm0.00$ & $51.10\pm0.00$ & $71.11\pm0.00$ & $52.97\pm0.00$ & $51.09\pm0.00$ \\
SPO+ & $111.30\pm7.56$ & $73.14\pm29.12$ & $58.89\pm26.36$ & $\mathbf{48.29}\pm12.23$ & $73.12\pm29.12$ & $58.88\pm26.36$ & $\mathbf{48.29}\pm12.22$ \\
SPO+ + report & $107.02\pm6.23$ & $72.59\pm23.63$ & $56.76\pm22.72$ & $50.16\pm10.97$ & $72.59\pm23.63$ & $56.76\pm22.72$ & $50.16\pm10.97$ \\
PG weak report & $74.67\pm0.00$ & $573.07\pm102.57$ & $700.72\pm151.36$ & $783.80\pm197.00$ & $573.07\pm102.57$ & $700.72\pm151.36$ & $783.79\pm197.00$ \\
\hline
\end{tabular}}
\end{table}

\begin{figure}[H]
  \centering
  \includegraphics[width=0.96\textwidth]{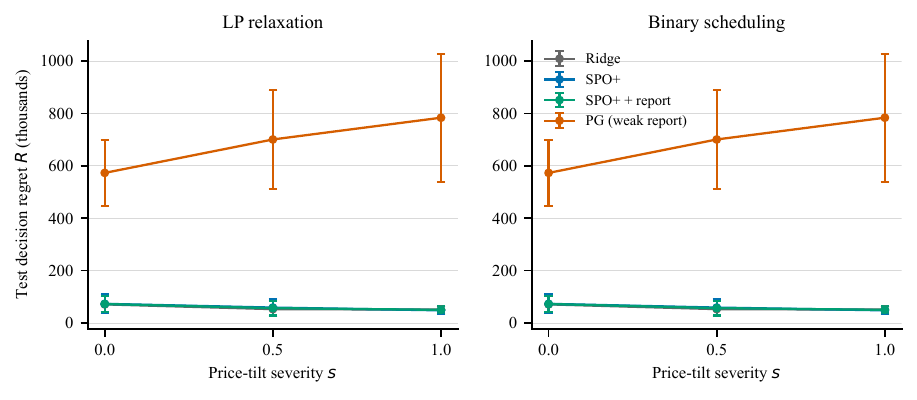}
  \caption{Intercept-corrected Energy regret for the LP relaxation and binary
  scheduler, shown in thousands.  Ridge is deterministic; other bars are
  seed-level 95\% intervals. SPO+ wins only at the largest pre-specified tilt,
  while PG remains unstable and substantially worse.}
  \label{fig:energy-main}
\end{figure}

At $s=1$, the report-penalty grid gives LP regrets $48.29$, $49.73$, $50.16$,
and $120.56$ thousand for $\lambda=0,.01,.1,1$, respectively.  The iteration
grid gives $251.86$, $48.29$, and $42.43$ thousand for $T=10,40,80$.
The best tested PG cell ($\sigma=1$, $\eta=10^{-5}$) still gives $759.87$
thousand. Binary values are nearly identical. SPO+ improves with the tested
training budget; the largest report penalty and tested PG settings are
numerically unstable. The grid does not establish a universally optimal report penalty.

\begin{figure}[H]
  \centering
  \includegraphics[width=0.99\textwidth]{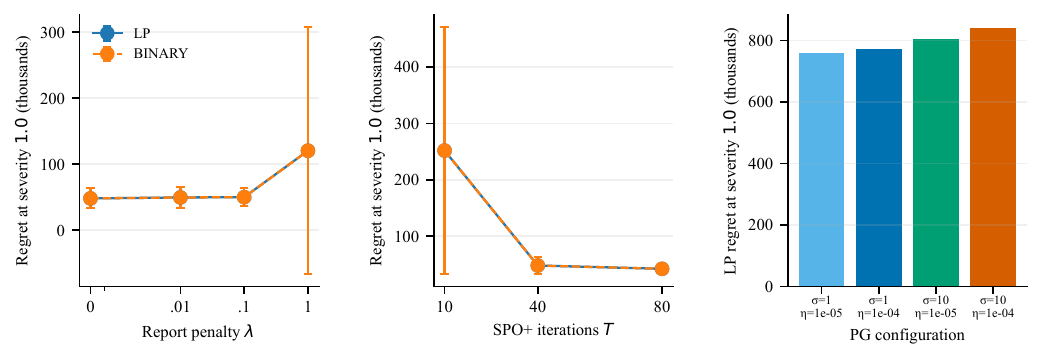}
  \caption{Intercept-corrected Energy ablations at $s=1$, with regret in
  thousands.  Left: report penalty; middle: SPO+ training budget; right: PG
  stability grid.  LP and binary curves overlap almost exactly.}
  \label{fig:energy-ablations}
\end{figure}

The method-averaged LP curves in Figure~\ref{fig:energy-geometry} increase
with relative displacement for all fitted reports.  Individual direction cells
need not be monotone; local slopes vary by method and direction.  This remains
qualitative mechanism evidence, not a claim of the uniform boundary-mass
condition or a universal quadratic rate.

\begin{figure}[H]
  \centering
  \includegraphics[width=0.94\textwidth]{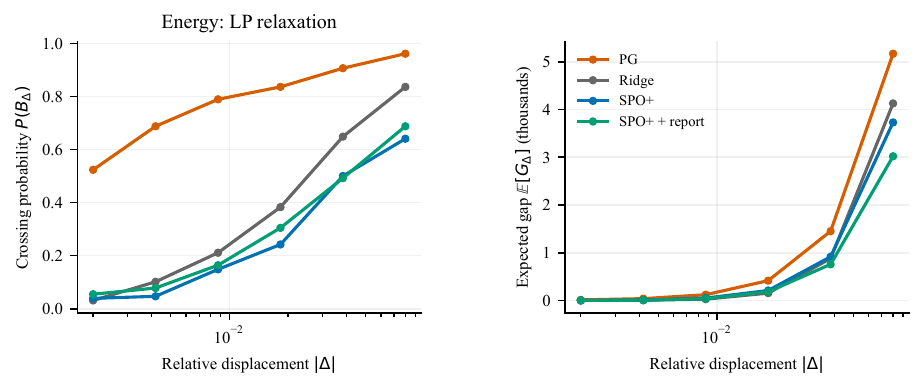}
  \caption{Energy LP crossing probability and expected transition gap (in
  thousands) versus relative displacement.  Curves are finite-sample
  fitted-report diagnostics; no population symmetry or boundary-density
  assumption is inferred.}
  \label{fig:energy-geometry}
\end{figure}

\subsection{Cross-application synopsis}

Figure~\ref{fig:cross-application-overview} normalizes each method's regret by
its own pre-specified baseline within the corresponding application. It summarizes
sensitivity to the pre-specified shifts; it does not make objective scales
comparable, supply a hypothesis test, or turn the three applications into
independent replications of one estimand.

\begin{figure}[H]
  \centering
  \includegraphics[width=0.98\textwidth]{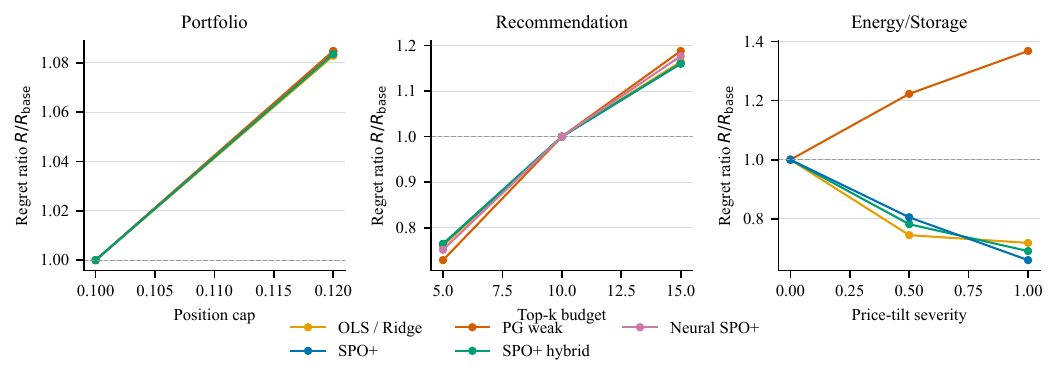}
  \caption{Cross-application shift profiles. Each curve reports regret relative
  to that method's application-specific pre-specified baseline.}
  \label{fig:cross-application-overview}
\end{figure}

\subsection{Artifact provenance}

The supplementary machine-readable materials contain the unrounded summary
tables, per-example records, checkpoint metadata, ablation outputs, and the
cross-application audit inventory. GPU Scheduling is not part of the final
empirical scope because its replay protocol was not suitable for a formal
claim.

\section{Experiments with Known Data-Generating Processes on Real Application Geometries}
\label{app:semisynthetic-experiments}

The observational experiments in Section~\ref{sec:real-experiments} cannot
reveal $\mu(x)=\E[C\mid X=x]$, conditional symmetry, or population strict
crossing. We therefore add a semi-synthetic layer that preserves the feature
rows and optimization geometry of each application while replacing the
realized costs with a registered law whose conditional mean is known:
\[
  C=\mu(X)+\varepsilon.
\]
The experiment is designed to isolate the mechanism in the theorems. It is
not a reanalysis of the original observational outcomes.

\paragraph{Data-generating processes (DGPs) and evaluation.}
For Portfolio, KuaiRec, and Energy/Storage, the adapters reuse the existing
data loaders and application-specific optimizers. The \textsc{sc-on} law uses
a feature-dependent mean and centered diagonal Gaussian noise, with each
coordinate scaled by the corresponding training-cost dispersion and a fixed
multiplier of $3$. The \textsc{sc-off} control uses a deterministic conditional
law at a locally stable reference cost. The conditional-mean diagnostic
repeats outcomes at the same feature value and measures the RMSE of their
sample mean relative to the supplied $\mu(x)$. It therefore measures Monte
Carlo recovery under the known law; it does not fit or identify a regression
function from the original data.

For each application, the crossing diagnostic evaluates the transition gap
$G_\Delta$ on a fixed effective-direction and displacement grid. We report
crossing mass, expected gap, and conditional mean recovery. The transfer
diagnostic evaluates the known mean and controlled perturbations over the
registered shift families: position caps for Portfolio, top-$k$ values for
KuaiRec, and price tilts for Energy/Storage. The shift family is finite by
construction.

The current output is a mechanism and figure run rather than the
preregistered high-budget confirmation. It used $8$ feature rows and $16$
Monte Carlo repetitions for Portfolio, $8$ and $8$ for KuaiRec, and $3$ and
$8$ for Energy/Storage. The mean-recovery grid used $1,4,16,64$ repeated
outcomes. The run used three effective directions and five displacements for
Portfolio and KuaiRec, and two directions and four displacements for
Energy/Storage.

\paragraph{Results.}
The conditional-mean curves decrease with the number of repeated outcomes in
all three applications under \textsc{sc-on}; the deterministic \textsc{sc-off}
control reaches numerical zero. The crossing curves show positive mass under
\textsc{sc-on} on part of the registered grid. Under \textsc{sc-off}, the
smallest displacements remain locally flat in Portfolio and KuaiRec, while
larger displacements leave the local region and produce crossing in some
application geometries. In the current run, positive crossing occurred in
$16/36$, $5/36$, and $10/20$ direction-scale rows for \textsc{sc-on} in
Portfolio, KuaiRec, and Energy/Storage, respectively; the corresponding
\textsc{sc-off} counts were $2/36$, $0/36$, and $5/20$.

These results support a mechanism statement: the same application geometries
can exhibit the predicted distinction between a full-support noisy law and a
locally flat control when the conditional law is known. They do not support a
claim that the original data satisfy strict crossing. The finite grid also
does not establish the all-displacement quantifier in the population
definition.

\begin{figure*}[t]
  \centering
  \includegraphics[width=0.98\textwidth]{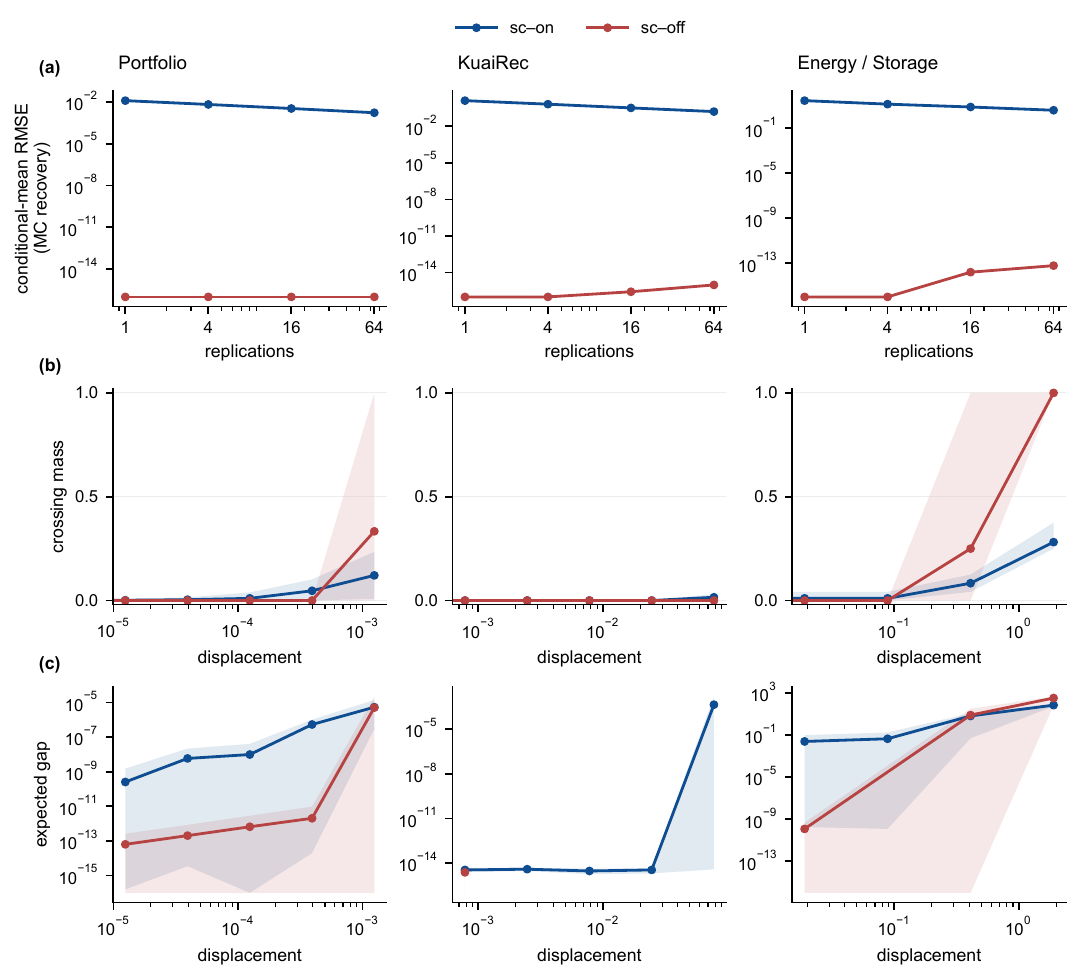}
  \caption{Known-DGP identification and crossing diagnostics on the real
  Portfolio, KuaiRec, and Energy/Storage geometries. The top row reports
  conditional-mean Monte Carlo recovery as the number of repeated outcomes
  increases. The middle row reports crossing mass over the tested effective
  directions and displacements. The bottom row reports expected transition
  gaps, with ribbons spanning the tested directions and signs. The curves are
  finite-grid diagnostics; they do not certify population strict crossing or
  conditional-mean identification from the original observational data.}
  \label{fig:semisynthetic-identification}
\end{figure*}

\begin{figure*}[t]
  \centering
  \includegraphics[width=0.98\textwidth]{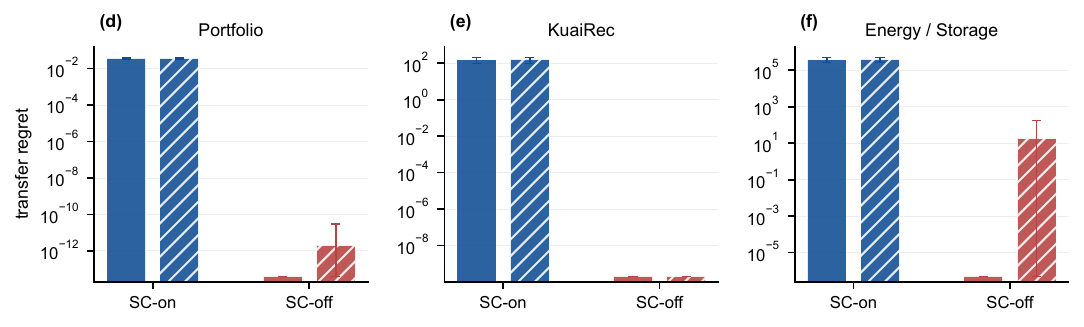}
  \caption{Transfer stress test under the registered finite shift families.
  Solid bars use the known conditional mean and hatched bars use controlled
  report perturbations. Error bars span the registered shifts and controlled
  perturbations within each application and regime. The logarithmic axis keeps
  the small \textsc{sc-off} regrets visible. The plot is finite-family evidence and
  does not establish a universal transfer guarantee.}
  \label{fig:semisynthetic-transfer}
\end{figure*}

The transfer plot should be read together with the finite-shift restriction.
It tests whether controlled report perturbations change regret under the
specified application shifts. It does not quantify transfer over arbitrary
objectives, constraints, feasible sets, or distributions. In particular, a
known conditional mean in the semi-synthetic law removes one source of
uncertainty; it does not supply the assumptions required for a universal
transfer theorem.

\section{Deterministic Illustrations}
\label{sec:illustrations}

Figure~\ref{fig:deterministic-illustrations} visualizes exact one-dimensional
formulae and the analytic probabilities in the two oracle-separation examples;
it uses neither Monte Carlo estimates nor fitted rates. In the compact-support
construction, the plotted risk is $\E[(c-2\hat c)_+]$ for
$c\sim\mathrm{Unif}[1,2]$, hence it is exactly zero for $\hat c\geq1$. For
the boundary-density family on $[-1,1]$ with
$p_\nu(c)=(\nu+1)|c|^\nu/2$, the plotted excess is the closed form in
Appendix~\ref{app:additional-illustrations}. The final panel records that C4
and C5 have positive selected-output disagreement but zero crossing
probability and zero excess gap. The panels display these mechanisms but do not
supply evidence for the theorems.

\begin{figure*}[t]
  \centering
  \includegraphics[width=\textwidth]{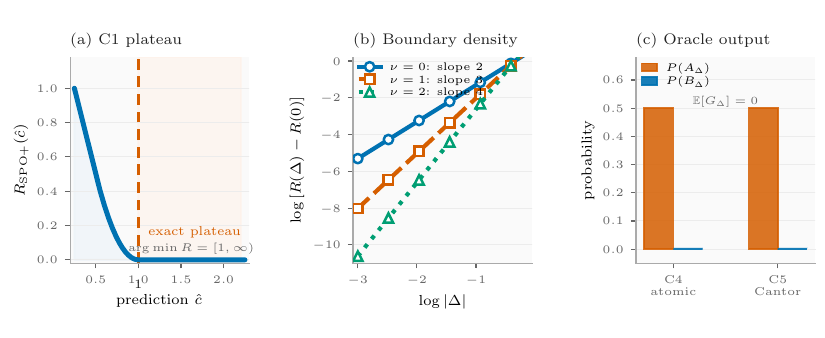}
  \caption{Closed-form illustrations: the C1 plateau, boundary-density slopes
  $2+\nu$, and oracle disagreement without crossing. These are analytic
  constructions, not empirical evidence.}
  \label{fig:deterministic-illustrations}
\end{figure*}

\section{Details for the Deterministic Illustrations}
\label{app:additional-illustrations}

All curves in Figure~\ref{fig:deterministic-illustrations} are evaluated from
closed forms.  The first panel uses the exact risk in
\Eqref{eq:c1-full-risk} for Example~\ref{ex:compact-support}.
For the centered one-dimensional model with $\Sset=[-B,B]$ and
$p_\nu(c)=(\nu+1)|c|^\nu/2$ on $[-1,1]$, direct integration gives, for
$0<|t|<1/2$,
\begin{equation}
  \RiskSPOP(t)-\RiskSPOP(0)
  =\frac{B(2|t|)^{\nu+2}}{\nu+2}.
  \label{eq:one-dimensional-closed-form-plot}
\end{equation}
The third panel plots the exact probabilities in
Examples~\ref{ex:atomic-disagreement} and \ref{ex:cantor-separation}:
$\Pr(A_\Delta)=1/2$ and $\Pr(B_\Delta)=0$. The vector figure renders these
closed forms and contains no random sampling.

\clearpage
\section{Fenchel Transition Extension (Supporting)}
\label{sec:generalization}
\label{app:generalization}

The support-function identity has a precise Fenchel-transition extension, but
this appendix is a supporting extension, not a general decision-focused
surrogate identification theorem.  The normal-fan geometry, decision
consistency, and polyhedral crossing interpretation remain tied to the
SPO+/support-function structure.  In this appendix only, costs and reports lie
in $\R^m$. Let $\Omega$ be proper, lower-semicontinuous, and convex, let
$\Phi:=\Omega^*$ be finite on $\R^m$, and fix a measurable
selection $s(q)\in\partial\Phi(q)$. Define
\begin{equation}
  \mathcal{W}_{\Omega}(c):=\partial\Phi(-c),
  \qquad
  w_{\Omega}(c):=s(-c),
  \label{eq:regularized-decision-map}
\end{equation}
and consider the Fenchel transition loss
\begin{equation}
  \ell_{\Phi}(\theta,c):=
  \Phi(c-2\theta)+2\theta^\top w_{\Omega}(c)+\Phi(-c)+a(c),
  \label{eq:fenchel-transition-loss}
\end{equation}
where $a(c)$ is integrable and does not depend on $\theta$.

Write
\begin{equation}
  R_{\Phi}(\theta):=\E[\ell_{\Phi}(\theta,c)]
  \label{eq:fenchel-transition-risk}
\end{equation}
for the corresponding population risk.

\begin{theorem}[Fenchel transition identity]
\label{thm:fenchel-transition-identity}
Suppose $c\overset d=2\cbar-c$ and, for every $\theta\in\R^m$, each of
$\Phi(c-2\theta)$, $\theta^\top s(-c)$, $\Phi(-c)$, and $a(c)$ is integrable.
Then
\begin{equation}
  R_{\Phi}(\cbar+\Delta)-R_{\Phi}(\cbar)
  =\E\left[D_{\Phi}(-c-2\Delta,-c;s(-c))\right],
  \label{eq:fenchel-transition-identity}
\end{equation}
where $D_{\Phi}(x,y;s(y)):=\Phi(x)-\Phi(y)-s(y)^\top(x-y)$.
\end{theorem}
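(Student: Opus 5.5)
The plan is to mirror the proof of Lemma~\ref{lem:centered-excess-identity}, replacing the support function $\supportfun$ by $\Phi$. Subtracting \eqref{eq:fenchel-transition-loss} at $\cbar$ from its value at $\cbar+\Delta$, the terms $\Phi(-c)$ and $a(c)$ cancel pointwise. This leaves
\[
  \ell_{\Phi}(\cbar+\Delta,c)-\ell_{\Phi}(\cbar,c)
  =\Phi(c-2\cbar-2\Delta)-\Phi(c-2\cbar)+2\Delta^\top s(-c).
\]
By the integrability hypotheses (applied at $\theta=\cbar+\Delta$ and $\theta=\cbar$), each of the three terms is integrable. We may therefore take expectations term by term, with no cancellation of infinities to worry about.

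Next we use symmetry exactly as in the appendix proof. Since $c\overset d=2\cbar-c$, the random vector $c-2\cbar$ has the same law as $-c$. Hence
\[
  \E[\Phi(c-2\cbar-2\Delta)]=\E[\Phi(-c-2\Delta)],
  \qquad
  \E[\Phi(c-2\cbar)]=\E[\Phi(-c)].
\]
This step only changes the law inside each expectation separately, so it needs just the finiteness of each marginal expectation, which the hypotheses supply. The linear term $2\Delta^\top s(-c)$ is left untouched. It depends on the oracle at $c$ itself, and it is precisely this term that becomes the Bregman linear correction.

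Finally we recombine. Setting $y=-c$ and $x=-c-2\Delta$, we have $x-y=-2\Delta$, so $-s(y)^\top(x-y)=2\Delta^\top s(-c)$. The integrand is therefore
\[
  \Phi(-c-2\Delta)-\Phi(-c)+2\Delta^\top s(-c)
  =D_{\Phi}(-c-2\Delta,-c;s(-c)),
\]
which gives \eqref{eq:fenchel-transition-identity}. As a remark, not needed for the identity itself, $s(-c)\in\partial\Phi(-c)$ makes the integrand pointwise nonnegative by the subgradient inequality. For the SPO+ specialization, $\Phi=\supportfun$ recovers $\crossgap{\Delta}$.

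The main obstacle is bookkeeping rather than depth. The symmetry substitution must be justified for each term separately, which is why every term needs to be integrable. One must also avoid moving the $s(-c)$ term through the reflection: reflecting it would produce $s(c-2\cbar)$ and break the Bregman form. I would state the integrability step explicitly before invoking the equality in distribution.
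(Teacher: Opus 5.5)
Your proposal is correct and follows the paper's proof essentially line for line. You subtract the losses so that $\Phi(-c)$ and $a(c)$ cancel, apply $c-2\cbar\overset{d}{=}-c$ to the two $\Phi$ terms while leaving $2\Delta^\top s(-c)$ in place, and recognize the integrand as the selected Bregman divergence; your explicit use of the integrability hypotheses to justify the termwise substitution is bookkeeping the paper leaves implicit.
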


\begin{proof}[Proof of Theorem~\ref{thm:fenchel-transition-identity}]
Subtract \eqref{eq:fenchel-transition-loss} at $\cbar$ from its value at
$\cbar+\Delta$. The offset $a(c)$ cancels and central symmetry gives
\begin{align}
  R_{\Phi}(\cbar+\Delta)-R_{\Phi}(\cbar)
  &=\E\bigl[\Phi(-c-2\Delta)-\Phi(-c)+2\Delta^\top s(-c)\bigr] \\
  &=\E\bigl[D_{\Phi}(-c-2\Delta,-c;s(-c))\bigr].
\end{align}
The last equality is the definition of the selected Bregman divergence.
\end{proof}

Because the selected Bregman divergence vanishes exactly when
$s(y)\in\partial\Phi(x)$, the target is uniquely identified precisely when
this selected divergence is positive with positive probability for every
nonzero displacement.

\begin{remark}[Scope of the extension]
For $\Omega=I_{\Sset}$, $\Phi=\supportfun$, and the support-function
selection must be aligned as $s(q):=\wstar(-q)$. Then
$w_{\Omega}(c)=\wstar(c)$ and
$D_{\Phi}(-c-2\Delta,-c;\wstar(c))=\crossgap{\Delta}(c)$, recovering
Theorem~\ref{thm:identification-sc}. Quadratic potentials instead identify by
pointwise curvature, while entropy identifies costs only modulo additive
constants. No generic SPO task-consistency statement follows from
Theorem~\ref{thm:fenchel-transition-identity}.
\end{remark}

\begin{theorem}[Fenchel identification by strict Bregman crossing]
\label{thm:fenchel-identification-bsc}
Under the assumptions of Theorem~\ref{thm:fenchel-transition-identity},
$\cbar$ is the unique minimizer of $R_{\Phi}$ if and only if, for every
$\Delta\neq0$,
\begin{equation}
  \Pr\{w_{\Omega}(c)\notin\mathcal{W}_{\Omega}(c+2\Delta)\}>0.
  \label{eq:strict-bregman-crossing}
\end{equation}
\end{theorem}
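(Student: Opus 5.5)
The plan mirrors the proof of Theorem~\ref{thm:identification-sc}, with Theorem~\ref{thm:fenchel-transition-identity} playing the role of Lemma~\ref{lem:centered-excess-identity}. There are three ingredients: (i) $\cbar$ is a global minimizer of $R_\Phi$; (ii) the excess risk at $\cbar+\Delta$ is the expectation of a nonnegative random variable; (iii) that variable vanishes exactly on the complement of the event in \eqref{eq:strict-bregman-crossing}.

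For (ii), Theorem~\ref{thm:fenchel-transition-identity} gives the excess as $\E[D_\Phi(-c-2\Delta,-c;s(-c))]$. The subgradient inequality for the finite convex function $\Phi$ at $y=-c$ with $s(y)\in\partial\Phi(y)$ gives $D_\Phi(x,y;s(y))\ge0$ pointwise. Hence $R_\Phi(\cbar+\Delta)\ge R_\Phi(\cbar)$ for every $\Delta$, which already proves (i) and makes a separate analogue of Proposition~\ref{prop:bayes-minimizer} unnecessary.

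For (iii), I use the equality case of the Fenchel--Young/subgradient inequality:
\[
  D_\Phi(x,y;g)=0 \iff g\in\partial\Phi(x), \qquad g\in\partial\Phi(y).
\]
The direction $\Leftarrow$ follows by adding the subgradient inequalities at $x$ and $y$, which gives $\Phi(y)\ge\Phi(x)+g^\top(y-x)$. Combined with the reverse inequality, this yields equality.

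The direction $\Rightarrow$ holds because $\Phi(z)\ge\Phi(y)+g^\top(z-y)=\Phi(x)+g^\top(z-x)$ for all $z$.

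Applying this with $x=-c-2\Delta$, $y=-c$, $g=s(-c)=w_\Omega(c)$, the divergence is positive exactly when $w_\Omega(c)\notin\partial\Phi(-(c+2\Delta))=\mathcal W_\Omega(c+2\Delta)$. A nonnegative integrable variable has positive expectation iff it is positive with positive probability. So the excess at $\cbar+\Delta$ is positive iff \eqref{eq:strict-bregman-crossing} holds for that $\Delta$.

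To conclude: if \eqref{eq:strict-bregman-crossing} holds for all $\Delta\ne0$, every $\theta\ne\cbar$ has strictly larger risk, so $\cbar$ is the unique minimizer. If it fails for some $\Delta\ne0$, then $R_\Phi(\cbar+\Delta)=R_\Phi(\cbar)$, which exhibits a second minimizer.

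Two points need care; neither is deep. The first is measurability of the event in \eqref{eq:strict-bregman-crossing}. I would sidestep it by noting that the event coincides with $\{D_\Phi>0\}$, and $D_\Phi$ is a measurable function of $c$ because $\Phi$ is continuous and $s$ is measurable.

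The second, and the main place to be careful, is the quantifier over $\Delta$. Unlike the SPO+ case there is no quotient by $L^\perp$, so the statement asks for positivity over all $\Delta\ne0$ in $\R^m$. For $\Phi=\supportfun$ with lower-dimensional $\Sset$, the condition simply fails along invisible directions, consistent with Proposition~\ref{prop:affine-hull-obstruction}. The theorem as stated remains true in that case: both sides are false. I would add a remark to this effect rather than alter the proof.
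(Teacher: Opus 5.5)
Your proposal is correct and follows the paper's proof. The paper likewise combines the Fenchel transition identity, the equality case $D_\Phi(x,y;s(y))=0\iff s(y)\in\partial\Phi(x)$, and the strict-positive-expectation argument of Theorem~\ref{thm:identification-sc}. Your additional details (verifying that equality case, which the paper only asserts; deriving global minimality of $\cbar$ directly from $D_\Phi\ge0$; and the measurability and unquotiented-quantifier remarks) are sound refinements rather than a different route.
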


\begin{proof}
For a finite convex $\Phi$ and $s(y)\in\partial\Phi(y)$,
\begin{equation}
  D_{\Phi}(x,y;s(y))=0
  \quad\Longleftrightarrow\quad s(y)\in\partial\Phi(x).
  \label{eq:bregman-equality-subgradient}
\end{equation}
Apply \eqref{eq:bregman-equality-subgradient} to the nonnegative expectation
in \eqref{eq:fenchel-transition-identity}. The same strict-positive-expectation
argument as in Theorem~\ref{thm:identification-sc} proves the result.
\end{proof}

\begin{theorem}[Fenchel absolute-continuity equivalence]
\label{thm:fenchel-ac-equivalence}
Let $\mathcal{T}_{\Phi}:=\{q:\partial\Phi(q)\text{ is non-singleton}\}$.
If $P_c\ll\Leb_m$, then for every fixed $\Delta$,
\begin{equation}
  \Pr\left(
    \{w_{\Omega}(c)\neq w_{\Omega}(c+2\Delta)\}
    \SymDiff
    \{w_{\Omega}(c)\notin\mathcal{W}_{\Omega}(c+2\Delta)\}
  \right)=0.
\end{equation}
\end{theorem}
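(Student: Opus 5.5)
We mirror the proof of Theorem~\ref{thm:ac-equivalence}, with the support function replaced by the finite convex potential $\Phi$. Throughout, $w_{\Omega}(c)=s(-c)$ and $\mathcal W_\Omega(c)=\partial\Phi(-c)$. Write $A:=\{w_{\Omega}(c)\neq w_{\Omega}(c+2\Delta)\}$ and $B:=\{w_{\Omega}(c)\notin\mathcal W_\Omega(c+2\Delta)\}$.

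The first step is the inclusion $B\subseteq A$. Since $s$ is a selection, $w_\Omega(c+2\Delta)=s(-c-2\Delta)\in\partial\Phi(-c-2\Delta)=\mathcal W_\Omega(c+2\Delta)$. So if $w_\Omega(c)=w_\Omega(c+2\Delta)$, then $w_\Omega(c)\in\mathcal W_\Omega(c+2\Delta)$. Taking contrapositives gives $B\subseteq A$, and hence $A\SymDiff B=A\setminus B$.

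The second step controls $A\setminus B$. On this event, $w_\Omega(c)\in\partial\Phi(-c-2\Delta)$ and $w_\Omega(c+2\Delta)\in\partial\Phi(-c-2\Delta)$ are two distinct elements of the same subdifferential. Therefore $\partial\Phi(-c-2\Delta)$ is not a singleton, so $A\setminus B\subseteq\{-c-2\Delta\in\mathcal T_\Phi\}$.

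The third step, and the only real content, is to show that $\mathcal T_\Phi$ is $\Leb_m$-null. Because $\Phi=\Omega^*$ is finite and convex on all of $\R^m$, it is locally Lipschitz. Rademacher's theorem, or more directly the classical result that a finite convex function is differentiable off a Lebesgue-null set (Rockafellar, Thm.~25.5), then makes its nondifferentiability set null. For finite convex $\Phi$, differentiability at $q$ is equivalent to $\partial\Phi(q)$ being a singleton. So $\mathcal T_\Phi$ is exactly the nondifferentiability set and is $\Leb_m$-null. Measurability of this set is also part of that classical statement (it is an $F_\sigma$), so the probability below is well-defined.

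Finally, the map $c\mapsto -c-2\Delta$ is an affine bijection with Jacobian of absolute value $1$. Since $P_c\ll\Leb_m$, the law of $-c-2\Delta$ is also absolutely continuous, so
\[
\Pr\{-c-2\Delta\in\mathcal T_\Phi\}=0.
\]
Hence $\Pr(A\SymDiff B)=0$ for each fixed $\Delta$.

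The only potential obstacle is the null-tie-locus fact. It requires $\Phi$ to be finite everywhere, which the standing hypothesis that $\Phi$ is finite on $\R^m$ supplies. No quotient by $L^\perp$ is needed here, because absolute continuity is imposed in the ambient space $\R^m$.
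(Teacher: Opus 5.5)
Your proof is correct and follows the same route as the paper: the always-true inclusion $B\subseteq A$, the containment of $A\setminus B$ in the event that $\partial\Phi(-c-2\Delta)$ is non-singleton, and the $\Leb_m$-nullity of $\mathcal T_\Phi$ from almost-everywhere differentiability of the finite convex $\Phi$. You also make explicit two points the paper leaves implicit, both correctly: the relevant map is the reflection-translation $c\mapsto -c-2\Delta$, which preserves absolute continuity, and $\mathcal T_\Phi$ is Borel.
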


\begin{proof}
A finite convex function is differentiable Lebesgue-almost everywhere, so
$\Leb_m(\mathcal{T}_{\Phi})=0$. Outside the translated tie locus, the new
subdifferential is a singleton and output disagreement forces crossing. The
reverse event inclusion is always true.
\end{proof}

\section{Mechanism Experiments}
\label{app:mechanism-experiments}

The four mechanism experiments reproduce the predicted geometry on synthetic
laws with closed-form or numerically exact objectives. Their curves are
deterministic evaluations of these constructions.

\paragraph{Sharp exponent and absence of polynomial modulus (Exps.~1 and 2).}
For $\Sset=[-1,1]$, $\cbar=0$, and
$p_\nu(c)=(\nu+1)|c|^\nu/2$ on $[-1,1]$,
Theorem~\ref{thm:one-dimensional-sharp-rate} gives the exact closed form
$\RiskSPOP(t)-\RiskSPOP(0)=B(2|t|)^{\nu+2}/(\nu+2)$.
Figure~\ref{fig:mech-A}(a) plots this exact curve for
$\nu\in\{0,0.5,1,2\}$; the log-log slopes are exactly $2+\nu$.  For the
$C^\infty$ flat density $p(c)\propto e^{-1/|c|}$, strict crossing holds for
every $t>0$, yet the excess risk falls below $t^2,t^3,t^4$; the local
log-log slope grows without bound as $t\downarrow 0$
(Figure~\ref{fig:mech-A}(b)).  SC alone carries no polynomial modulus.

\paragraph{Oracle disagreement versus optimal-face crossing (Exp.~3).}
Figure~\ref{fig:mech-B} shows the atomic tie example
(Example~\ref{ex:atomic-disagreement}) and the Cantor example
(Example~\ref{ex:cantor-separation}) under increasing noise
$c_\varepsilon=c+\varepsilon Z$.  At $\varepsilon=0$,
$\Pr(A_\Delta)=1/2$ while $\Pr(B_\Delta)=0$ and $\E[G_\Delta]=0$: the
selected oracle moves while the optimal face is never departed.  For
$\varepsilon>0$ the law is absolutely continuous and
$\Pr(A_\Delta)=\Pr(B_\Delta)$
(Theorem~\ref{thm:ac-equivalence}); both grow with $\varepsilon$.  The
gap-based events are insensitive to the tolerance
$\mathrm{tol}\in\{0,10^{-8},10^{-6}\}$.

\paragraph{Normal-fan mass yields quadratic growth (Exp.~4).}
For the box $\Sset=[-1,1]^2$ with $\cbar=(2,2)$ and $\mathrm{Unif}(0,4)^2$
costs, the mechanism
$P(B_{tu})\asymp t$, $\E[G_{tu}\mid B_{tu}]\asymp t$,
$\RiskSPOP(\cbar+tu)-\RiskSPOP(\cbar)\asymp t^2$ holds uniformly over the
tested directions (Figure~\ref{fig:mech-C}(a), slopes $1/1/2$); inward
directions have no crossing and flat risk.  Removing the boundary patch near
one facet (symmetrically about $\cbar$) creates an exact dead zone
$P(B_{tu})=0$ and flat risk for $t$ below the gap, with growth resuming
later, while the intact direction is unaffected
(Figure~\ref{fig:mech-C}(b)).  Lowering the density floor keeps the slopes
and shrinks the constants.  The same mechanism holds for the non-symmetric
triangle $\mathrm{conv}\{(0,0),(2,0),(0,1)\}$
(Figure~\ref{fig:mech-C}(c)), computed by vertex enumeration without
solver-output comparisons.

\paragraph{Controlled neural quotient-report diagnostic (Exp.~5).}
This experiment uses the box $\Sset=[-1,1]^2$ and a known conditional mean
$\mu(x)=2+0.25x$ for $x\sim\mathrm{Unif}([-1,1]^2)$. Conditional costs are
$C=\mu(X)+\varepsilon$, where the coordinates of $\varepsilon$ are independent
uniform draws on $[-a,a]$. We use $a=3$ (SC-on), $a=2.25$ (low-crossing), and
$a=0.25$ (SC-off). Each law remains conditionally centrally symmetric. In the
SC-off design, $C$ lies in the positive orthant, so the positive displacement
$(0.5,0)$ is a flat direction of the selected box decision rule. For this same
displacement, held-out crossing masses are $0.152$, $0.056$, and $0$,
respectively. These finite-sample diagnostics illustrate the designed regimes;
they do not estimate a real-data population assumption.

We train Linear SPO+, MLP-MSE, MLP-SPO+, and MLP-SPO+ with a squared-report
term ($\lambda=0.1$) on five independent seeds. Table~\ref{tab:controlled-neural}
reports held-out Monte Carlo surrogate excess relative to the known $\mu$,
decision regret, quotient-report error $\E\|P_L(\hat c(X)-\mu(X))\|_2$, and
report variance across fitted seed predictions. Excess, regret, and Q-error
are five-seed means with sample SD. Small signed excess estimates
with magnitude below $5\times10^{-9}$ are displayed as zero at table precision.
In SC-off, all four methods obtain zero observed decision regret while their
quotient errors and report variances differ. This is controlled evidence about
equality geometry, not a finite-sample neural-convergence result.

\begin{table*}[t]
\centering
\caption{Controlled neural identification experiment. ``Excess'', ``Regret'',
and ``Q-error'' are five-seed means $\pm$ sample SD; ``Excess'' is held-out
SPO+ excess relative to the known conditional mean and ``Q-error'' is
quotient-report error. ``Var'' is mean coordinatewise fitted report variance
across seeds. The SC-off rows have equal observed decision regret but different
fitted reports.}
\label{tab:controlled-neural}
\small
\begin{tabular}{llrrrr}
\hline
Regime & Method & Excess & Regret & Q-error & Var \\
\hline
SC-on & Linear SPO+ & $0.0146\pm0.0064$ & $0.3326\pm0.0000$ & $0.1217\pm0.0330$ & $0.0095$ \\
      & MLP-MSE & $0.0315\pm0.0159$ & $0.3326\pm0.0000$ & $0.1863\pm0.0437$ & $0.0261$ \\
      & MLP-SPO+ & $0.0400\pm0.0132$ & $0.3326\pm0.0000$ & $0.2046\pm0.0422$ & $0.0290$ \\
      & MLP-SPO+ + MSE & $0.0388\pm0.0119$ & $0.3326\pm0.0000$ & $0.2039\pm0.0446$ & $0.0295$ \\
Low mass & Linear SPO+ & $0.0040\pm0.0018$ & $0.0360\pm0.0000$ & $0.0671\pm0.0194$ & $0.0026$ \\
        & MLP-MSE & $0.0205\pm0.0051$ & $0.0360\pm0.0000$ & $0.1481\pm0.0262$ & $0.0163$ \\
        & MLP-SPO+ & $0.0111\pm0.0035$ & $0.0360\pm0.0000$ & $0.1164\pm0.0323$ & $0.0080$ \\
        & MLP-SPO+ + MSE & $0.0111\pm0.0034$ & $0.0360\pm0.0000$ & $0.1135\pm0.0249$ & $0.0067$ \\
SC-off & Linear SPO+ & $0.0000\pm0.0000$ & $0.0000\pm0.0000$ & $1.2192\pm0.0027$ & $0.0001$ \\
       & MLP-MSE & $0.0000\pm0.0000$ & $0.0000\pm0.0000$ & $0.0305\pm0.0118$ & $0.0007$ \\
       & MLP-SPO+ & $0.0000\pm0.0000$ & $0.0000\pm0.0000$ & $0.3384\pm0.1645$ & $0.0601$ \\
       & MLP-SPO+ + MSE & $0.0000\pm0.0000$ & $0.0000\pm0.0000$ & $0.0742\pm0.0271$ & $0.0019$ \\
\hline
\end{tabular}
\end{table*}

\begin{table}[t]
\centering
\caption{Fitted local log-log slopes (theory in parentheses).
\textsc{Box} and \textsc{Tri} are the polytopes of Exp.~4; slopes are
averaged over the crossing directions; the flat-density row reports a
finite-grid slope from Exp.~2 and its asymptotic behavior.}
\label{tab:mechanism-slopes}
\small
\begin{tabular}{lccc}
\hline
Case & $P(B_{tu})$ & $\E[G_{tu}\mid B_{tu}]$ & Rdiff \\
\hline
Box $\Sset=[-1,1]^2$ (5 directions) & $1.00\ (1)$ & $1.00\ (1)$ & $2.01\ (2)$ \\
\textsc{Tri} (4 directions) & $0.98$ to $1.01\ (1)$ & $1.00\ (1)$ & $1.93$ to $2.07\ (2)$ \\
Lower density floor & $0.99\ (1)$ & $0.99\ (1)$ & $2.00\ (2)$ \\
Flat density $e^{-1/|c|}$ (Exp.~2) & \multicolumn{3}{c}{finite-grid slope $175.30$ at $t\approx0.002$; diverges as $t\downarrow0$} \\
\hline
\end{tabular}
\end{table}

\begin{figure*}[t]
  \centering
  \includegraphics[width=0.98\textwidth]{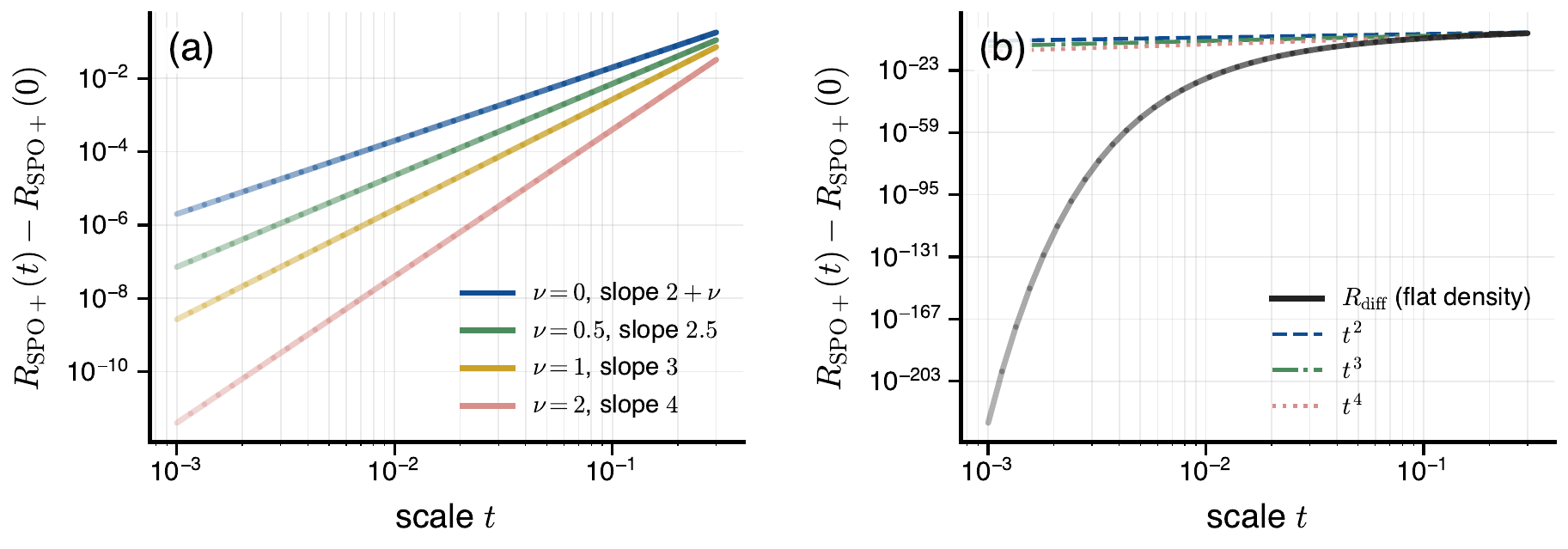}
  \caption{(a) Exact excess risk of the one-dimensional model for
  $\nu\in\{0,0.5,1,2\}$ (log-log; slopes $2+\nu$).  (b) A flat
  $C^\infty$ density satisfies strict crossing but yields no polynomial
  modulus: the excess risk falls below $t^2,t^3,t^4$.}
  \label{fig:mech-A}
\end{figure*}

\begin{figure*}[t]
  \centering
  \includegraphics[width=0.98\textwidth]{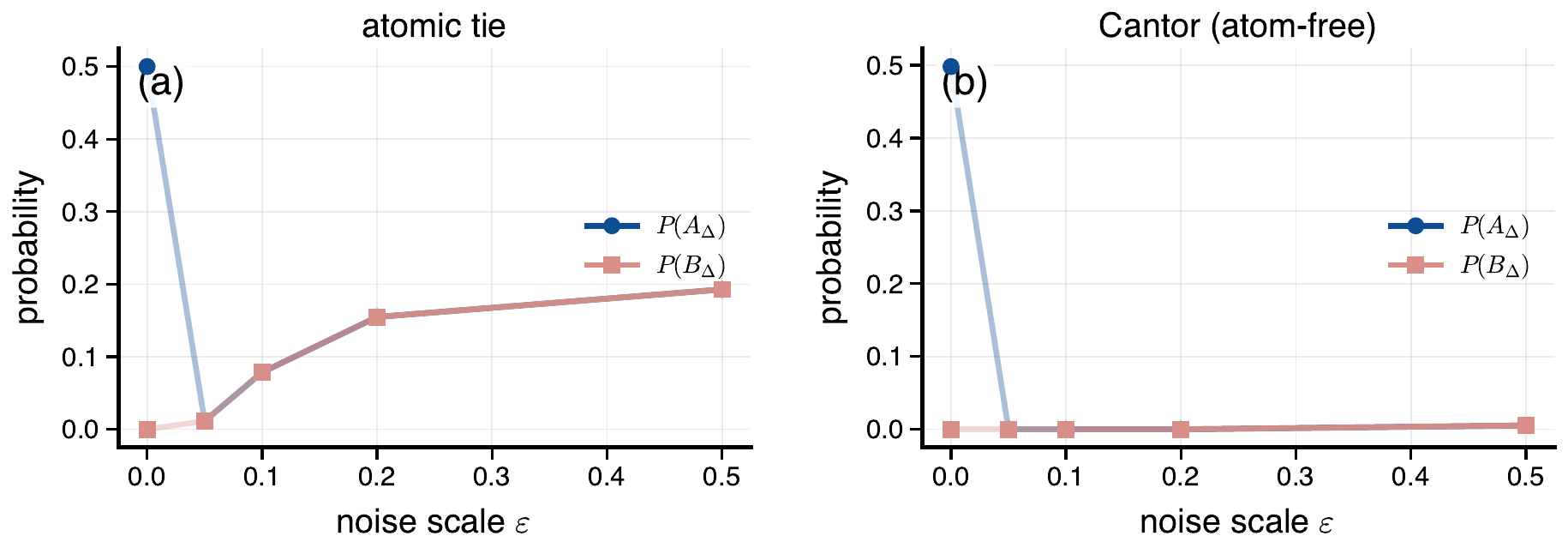}
  \caption{Selected-oracle disagreement $\Pr(A_\Delta)$ versus
  optimal-face departure $\Pr(B_\Delta)$ for the atomic and Cantor examples
  under noise $c_\varepsilon=c+\varepsilon Z$.  At $\varepsilon=0$,
  $\Pr(A_\Delta)=1/2$ and $\Pr(B_\Delta)=0$; for $\varepsilon>0$ they
  coincide (Theorem~\ref{thm:ac-equivalence}).}
  \label{fig:mech-B}
\end{figure*}

\begin{figure*}[t]
  \centering
  \includegraphics[width=0.99\textwidth]{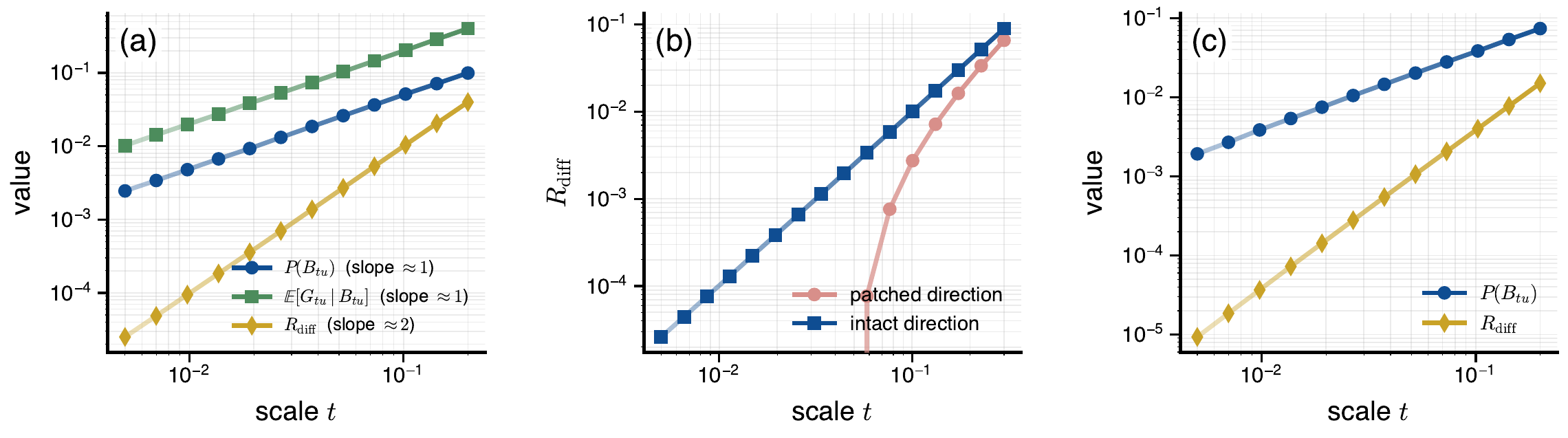}
  \caption{(a) Crossing mass, conditional gap, and excess risk versus scale
  for the box polytope (slopes $1/1/2$).  (b) Removing the boundary patch
  near one facet kills growth below the gap (dead zone), while the intact
  direction is unchanged.  (c) The same mechanism on a non-symmetric
  triangle.}
  \label{fig:mech-C}
\end{figure*}

\end{document}